\documentclass{article}

\usepackage{microtype}
\usepackage{graphicx}
\usepackage{subcaption}
\usepackage{soul}
\usepackage{xcolor}
\usepackage{booktabs} 
\usepackage{framed}
\usepackage{dsfont}
\usepackage{calc}
\usepackage{keytheorems}
\usepackage{etoc}
\etocdepthtag.toc{main}

\usepackage{hyperref}

\makeatletter
\newcommand{\doublewidetilde}[1]{{%
  \mathpalette\double@widetilde{#1}%
}}
\newcommand{\double@widetilde}[2]{%
  \sbox\z@{$\m@th#1\widetilde{#2}$}%
  \ht\z@=.9\ht\z@
  \widetilde{\box\z@}%
}
\makeatother

\usepackage[accepted]{icml2026}
\usepackage{amsmath}
\usepackage{amssymb}
\usepackage{mathtools}
\usepackage{amsthm}
\usepackage{bbm}

\DeclareMathOperator*{\argmin}{arg\,min}
\DeclareMathOperator*{\argmax}{arg\,max}
\DeclareMathOperator*{\E}{\mathbb{E}}  

\usepackage[capitalize,noabbrev]{cleveref}

\theoremstyle{plain}
\newtheorem{theorem}{Theorem}[section]

\newtheorem{lemma}[theorem]{Lemma}
\newtheorem{corollary}[theorem]{Corollary}
\theoremstyle{definition}
\newtheorem{definition}[theorem]{Definition}
\newtheorem{assumption}[theorem]{Assumption}
\theoremstyle{remark}
\newtheorem{remark}[theorem]{Remark}

\usepackage{xspace}
\makeatletter
\DeclareRobustCommand{\onedot}{\futurelet\@let@token\@onedot}
\def\@onedot{\ifx\@let@token.\else.\null\fi\xspace}
\makeatother

\usepackage{multirow}
\usepackage[table]{xcolor}
\usepackage{tikz} 
\usepackage{enumitem}

\makeatletter
\newwrite\apx@out

\newcommand{\apxcontentsopen}{%
  \immediate\openout\apx@out=\jobname.apxc
}
\newcommand{\apxcontentsclose}{%
  \immediate\closeout\apx@out
}
\AtEndDocument{\apxcontentsclose}

\newcommand{\printappendixcontents}{%
  \section*{Appendix Outline}%
  \InputIfFileExists{\jobname.apxc}{}{%
    \noindent (No appendix entries yet — compile twice.)\par
  }%
}

\newcommand{\appsection}[1]{%
  \section{#1}%
  \begingroup
    \edef\apx@label{apx:\thesection}%
    \label{\apx@label}%
    \protected@write\apx@out{}{%
      \string ~\thesection:\space
      \unexpanded{#1}%
      \string\dotfill\space
      \string\pageref{\apx@label}\string\par
    }%
  \endgroup
}
\makeatother

\usepackage[textsize=tiny]{todonotes}

\icmltitlerunning{On the Difficulty of Learning a Meta-network for Training Data Selection}

\begin{document}

\twocolumn[
  \icmltitle{On the Difficulty of Learning a Meta-network for Training Data Selection}



  \icmlsetsymbol{equal}{*}

  \begin{icmlauthorlist}
    \icmlauthor{Zilin Du}{ntu}
    \icmlauthor{Junqi Zhao}{ntu}
    \icmlauthor{Boyang Albert Li}{ntu}
  \end{icmlauthorlist}

  \icmlaffiliation{ntu}{College of Computing and Data Science, Nanyang Technological University, Singapore}

  \icmlcorrespondingauthor{Boyang Albert Li}{boyang.li@ntu.edu.sg}

  \icmlkeywords{Machine Learning, ICML}

  \vskip 0.3in
]



\printAffiliationsAndNotice{}  

\begin{abstract}
Synthetic data are increasingly used to train neural networks, yet distributional mismatch with real data limits their effectiveness when used indiscriminately. A common strategy is to learn data weights via bi-level optimization, which we refer to as Meta-learning for Training-data Selection (MTS). Interestingly, in practice, MTS often performs below expectation. We identify two obstacles in properly training MTS: a poor gradient signal-to-noise ratio (GSNR), which causes optimization difficulties, and lack of informative features that correlates with data quality. 
We present a mathematical analysis of MTS, which reveals the dynamics of normalized data weights and the relation between disparate data quality and poor GSNR. The analysis suggests a a simple yet effective solution: increasing the batch size. Further, we propose a set of informative features that capture the positions of training data in their distributions and training dynamics. Experiments across four benchmarks show consistent improvements, achieving average gains of 5.49\% over training without selection and 2.89\% over the strongest baseline.\footnote{Our code is available at \url{https://github.com/ZILIN003/MTS}.}
\end{abstract}

\section{Introduction}

It is often desirable to selectively train on a subset of available data, especially when the training and test data are not identically distributed \cite{Jaewoo-Lee-2024,kangget,liu2024tsds, xie2023data}. A prominent use case scenario these days is training with synthetic data \cite{fan2024scaling, gala2024leverage, li2025enhancing}, which has become increasingly practical as generative models improve. Nevertheless, synthetic data often exhibit distributional shifts or artifacts that distinguish them from real data \cite{hataya2023will, he2022synthetic, liu2025does}. 

A popular approach for training data selection is to assign soft weights $w_i$ to each training sample $(x_i, y_i)$ \cite{ren2018learning, shu2019mwn} in order to select effective training data or align the training and validation/test distributions. We define $w_i := s_{\phi}(x_i)$, where $s_{\phi}$ is a data selection network parameterized by $\phi$. The main network $f_{\theta}(x)$ is parameterized by $\theta$. With training data points $(x_i,y_i)$, validation data $(x^\text{v}_i, y^\text{v}_i)$, and per-instance loss $\ell(\cdot)$,
the mini-batch training loss and validation loss are defined as 
\begin{align}
\mathcal{L}_{\text{train}}(\theta, \phi) &:= \Big(\sum_{i=1}^N w_i\Big)^{-1} \sum_{i=1}^N w_i \ell(\theta, x_i, y_i), \\
\mathcal{L}_{\text{val}}(\theta) &:= N^{-1} \sum_{i=1}^N  \ell(\theta, x^\text{v}_i, y^\text{v}_i),
\end{align}
where $N$ denotes the batch size. 
To optimize $\phi$, prior works adopt a one-step approximation of the complex bi-level optimization. At iteration $t$:
\begin{equation}
    \phi_{t+1} = \phi_t - \eta_\phi  \nabla_{\phi_t} \mathcal{L}_{\text{val}}(\theta_t - \eta_\theta \nabla_{\theta_t} \mathcal{L}_{\text{train}}(\theta_t, \phi_t))
\end{equation}
and $\eta_\theta$ and $\eta_\phi$ are the learning rates. We refer to this framework as Meta-learning for Training-data Selection (MTS).

Despite its intuitive appeal, MTS often performs under expectation. In Table~\ref{tab:tab_ablation}, we show that naive training of MTS with raw image input to $s_{\phi}(\cdot)$ and small batches fails to outperform training without data selection. What causes performance issues of MTS? 

We identify two obstacles in MTS. First, the gradient signal-to-noise ratio (GSNR) of the selection network is exceedingly low (Figure \ref{fig:SRN_Batch}). Though moderate levels of variance in the stochastic gradients can serve as implicit regularization \cite{smith2020generalization, mignacco2022effective,zhou2020towards}, excessive variance may overwhelm the gradient signal and harm optimization and generalization \cite{pmlr-v119-sankararaman20a,liu2020understanding}. 

We present a comprehensive analysis of the GSNR issue, which shows that the MTS training tends to suppress most data weights $w_i$, which in turn leads to high sensitivity of the gradient update to $\phi$ to the highest data weight. This directly results in low GSNR. The analysis also presents a simple remedy: increasing the batch size.

The second obstacle is that input features to the selection network do not correlate well with data effectiveness in training. Even if the network can be trained to convergence due to a good GSNR, it may still lack the information to distinguish between high-quality and low-quality data. 
We propose a set of informative features, representing the positions of training data in the training and validation distributions and their interaction with the training process. These distributional and dynamic features may be difficult to learn from the raw input $x_i$ alone, as they may involve relations among multiple data points and the training process. 

Good GSNR and high-quality features are both necessary for effective training of the selection network and complementary to each other. Intuitively, high-quality features ensure good solutions exist in the hypothesis space, or the set of all functions representable by the selection network; decent GSNR allows us to find such good solutions quickly. Empirically on the task of synthetic data selection, we demonstrate substantial enhancement in MTS performance when the two solutions are combined, but weak or no improvements when they are employed in isolation. On 4 different datasets and 15 different settings, our method attains strong performance, surpassing the baseline without data selection by 5.49\% and the strongest baseline by 2.89\%.

\begin{figure}[t]
    \centering
    \includegraphics[width=1.0\linewidth]{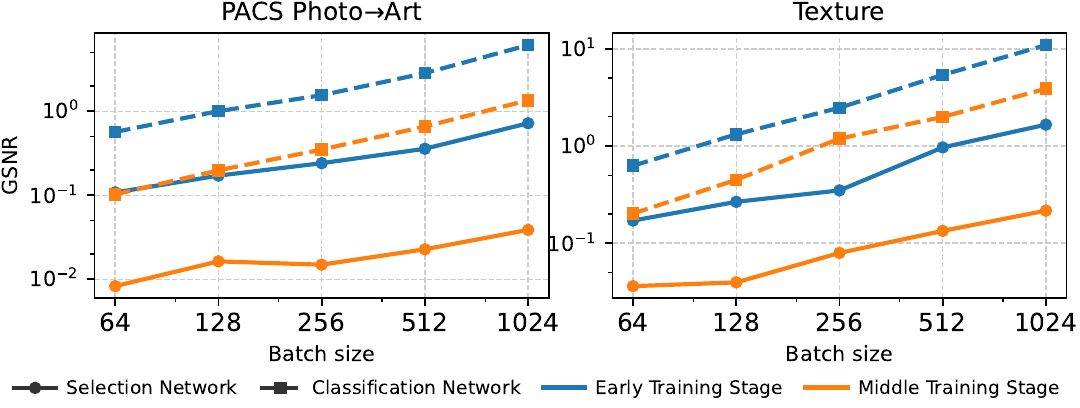}
    \caption{The gradient signal-to-noise ratio (GSNR) of the selection network is lower than that of the ResNet classification network by roughly one order of magnitude. However, increasing the batch size can help. Early and middle training stages correspond to 20\% and 60\% of the total training steps. Results are computed as averages over 100 batches. }
    \label{fig:SRN_Batch}
\end{figure}


Our contributions are summarized as follows:
\begin{itemize}
\item We provide a mathematical analysis revealing a fundamental obstacle that hinder effective training of MTS: a low gradient signal-to-noise ratio (GSNR) in the selection network. The theoretical analysis  identifies the cause of low GSNR and a simple and effective remedy: increasing the batch size.
\item We design informative features that correlate with the effectiveness of training data and allow the data selection network to function as intended.
\end{itemize}

\section{Related Work}

\subsection{Synthetic Images for Image Classification}

Recent advances in diffusion models have made synthetic images a practical resource for training deep models \cite{feng2024instagen, xie2024mosaicfusion, du2023training}, and this capability has been widely leveraged in image classification \citep{bulent2022fake, jiang2025synbalance, singh2024synthetic, tian2024learning}. However, synthetic images often exhibit distributional shifts and artifacts relative to real data, which can degrade generalization if used indiscriminately \citep{hataya2023will, he2022synthetic}. To mitigate these issues, prior work has largely focused on generation-side improvements, including fine-tuning generative models toward the target distribution \citep{kim2024datadream, azizi2023synthetic, wang2024enhance}, conditioning on real images \citep{islam2024diffusemix, liu2025does, trabucco2023effective}, and prompt engineering strategies \citep{yuan2024not, shipard2023diversity, dunlap2023diversify, zhao2024ltgc}. 

\subsection{Weighing Training Data}

The notion of assigning  distinct weights to data points in the training loss finds applications in diverse problems such as data pruning and domain adaptation. Data pruning aims to find a subset of training data that are as effective as the whole training set \cite{toneva2019forgetting, paul2021el2n, pleiss2020identifying,feldman2020turning, xia2022moderate, zheng2023coverage}. This is equivalent to binary 0/1 weights. In domain adaptation, the data weights may help align the training data distribution with the validation distribution \cite{wang2024cliploss, sugiyama2007direct, xie2023data, bickel2009discriminative, choi2021featurized}, often with continuous weights, rather than 0/1 weights. A closely related line of work is the selection of synthetic training images, which often employs CLIP similarity \citep{dunlap2023diversify, bansal2023, he2022synthetic, zhao2024ltgc, li2024semantic, wang2024enhance} or auxiliary classifier confidence \cite{bansal2023, lampis2023bridging}. Despite their success, the weights are largely assigned from heuristic rules rather than learned from data. The current paper focuses on the selection of  synthetic training data, but the technique may potentially be applied to other problem settings. In the next subsection, we review techniques that learn data weights using gradient-based optimization.

\subsection{Gradient-based Hyperparameter Optimization} \label{sec:review_GHO}
Gradient-based hyperparameter optimization \cite{franceschi2018bilevel,kwon2023firstorder,dong2025efficient,bao2021stability} enables automatic tuning of diverse hyperparameters, such as learning rates \cite{maclaurin2015gradient}, dropout rates \cite{lorraine2020optimizing}, architectures \cite{liu2018darts,zhang2021idarts}, and loss functions \cite{kim2025stochastic}. Meta-learning for Training-data Selection (MTS) is the learning of data weights under this framework.  \citet{ren2018learning,wang2022meta} employ a separate hyperparameter for each data point, but estimating a hyperparameter from a single training data point may cause high variance and limit knowledge sharing. To alleviate this issue, \citet{fan2023doge} learn one weight hyperparameter for each data domain, so that knowledge can be shared across data in the same domain. \citet{shu2019mwn}, on the other hand, propose to calculate the data weights using a lightweight neural network, which is a generalizable function learned across all training data. However, they only employ a single feature, the training loss, which does not capture rich patterns in data distribution and generalization behavior.  

In this work, we provide a theoretical analysis that applies to both the individual data weight setting \cite{ren2018learning,wang2022meta} and the data-weight network setting. Further, we introduce a rich input feature representation for the data-weight network to better characterize the distributional information of training data and their potential for generalization.

\subsection{Gradient Signal-to-Noise Ratio (GSNR)}

Earlier work has shown that stochastic gradient noise can act as an implicit regularizer, improving generalization \cite{smith2020generalization, mignacco2022effective} and helping optimization escape sharp minima \cite{zhou2020towards}. Beyond analyses of gradient noise, some works have investigated the gradient signal-to-noise ratio (GSNR), though this line of research remains relatively underexplored. Theoretically, \citet{liu2020understanding} show that a large GSNR is beneficial for the generalization of deep networks, while \citet{rainforth2018tighter} analyze how GSNR varies for the encoder and decoder of variational autoencoders as the number of importance samples increases. Empirically, GSNR has been used as a heuristic for domain generalization \cite{Michalkiewicz_2023_ICCV}, knowledge distillation \cite{huang2025deepkd}, and neural architecture search \cite{Sun_2023_ICCV}, among others. In this work, we adopt GSNR as a key analytical tool for understanding MTS.


\section{Difficulty in Training Selection Network}

In this section, we formalize the MTS problem and provide a theoretical analysis of the cause of the low signal-to-noise ratio in the selection network gradient.

\subsection{Notations and Problem Definition}

We first introduce the notations. We have a training set
$\mathcal{D}_{\text{train}} = \{(x_i, y_i)\}_{i=1}^{|\mathcal{D}_{\text{train}}|}$, where each input $x_i$ is associated with a class label $y_i$, and a small validation set $\mathcal{D}_\text{val}= \{(x^{\text{v}}_j, y^{\text{v}}_j)\}_{j=1}^{|\mathcal{D}_{\text{val}}|}$.
We train a classifier network $f_{\theta}(\cdot)$, parameterized by $\theta$, and a data selection network $s_{\phi}(\cdot)$, parameterized by $\phi$. The data weight for training sample $i$ is defined as $w_i := s_{\phi}(x_i) \in (0,\infty)$. Let $\ell(x_i, y_i, \theta)$ denote the per-sample training loss, such as cross-entropy, and define its gradient as $g_{i}(\theta) := \partial \ell(x_i, y_i, \theta) / \partial \theta$. Similarly, for validation samples, we define $\gamma_{i}(\theta) := \partial \ell(x^{\text{v}}_i, y^{\text{v}}_i, \theta) / \partial \theta$. 

We denote the overall training and validation losses by  $\mathcal{L}_{\text{train}}$ and $\mathcal{L}_{\text{val}}$, respectively. The gradient of the selection network output is given by  $h_i := \partial s_{\phi}(x_i) / \partial \phi$. Finally, we define the normalization scalar $S := \sum_{i=1}^{N} w_i$ and the normalized weight vector $p := [p_1, \ldots, p_N]$ where $p_i = w_i/S$.

With data weights $w_i$, the mini-batch training loss is defined as 
\begin{equation}
\label{eq:train-gradient}
     \mathcal{L}_{\text{train}}(\theta, \phi) = \frac{1}{S} \sum_{i=1}^{N} w_i \, \ell(x_i, y_i, \theta).
\end{equation}
Normalization by $S$ ensures that the effective weight vector $p = [w_i/S]_{i=1}^N$ sits on the $(N-1)$-dimensional probability simplex. 

The mini-batch validation loss does not use the data weights:
\begin{equation}
\label{eq:val-loss}
     \mathcal{L}_{\text{val}}(\theta) = \frac{1}{N} \sum_{i=1}^{N}  \ell(x^{\text{v}}_i, y^{\text{v}}_i, \theta).
\end{equation}

\subsection{Optimizing the Data Weights}

The central issue is to learn the data selection parameters $\phi$ from data. To this end, we adopt the following nested optimization \cite{shu2019mwn, ren2018learning}: 
\begin{equation}\label{eq:bi-level-objective}
\begin{aligned}
    \phi^{*}
    &= \argmin_{\phi}\; \mathcal{L}_{\text{val}}(\theta^{*}) \\
\text{s.t.} \,\, \theta^{*}
    &= \argmin_{\theta}\; \mathcal{L}_{\text{train}}(\theta, \phi)
\end{aligned}
\end{equation}
Note that $\theta^*$ does not depend on $\phi$ directly. Rather, its dependence on $\phi$ is through the optimization of $\mathcal{L}_{\text{train}}$ as $\phi$ determines the data weights. We seek $\phi^*$ that produces a $\theta^*$ that generalizes the best to $\mathcal{L}_{\text{val}}$. 

Directly optimizing Eq. \ref{eq:bi-level-objective} is intractable as each gradient update of $\phi$ would require solving an expensive optimization problem for $\theta$. Following common practice, we adopt an approximate hypergradient for $\phi$ with a one-step look-ahead update of $\theta$. At iteration $t$, we perform the following updates:
\begin{align}\label{eq:1-step-gradient-phi}
    \phi_{t+1} & = \phi_t - \eta_\phi\frac{d}{d\phi_t} \mathcal{L}_{\text{val}}(\theta_t - \eta_\theta \frac{\partial}{\partial\theta_t}\mathcal{L}_{\text{train}}(\theta_t, \phi_t)), \\
    \label{eq:1-step-gradient-theta}
    \theta_{t+1} & = \theta_t - \eta_\theta \frac{\partial}{\partial\theta_t}\mathcal{L}_{\text{train}}(\theta_t, \phi_{t+1}),
\end{align}
where $\eta_\theta$ and $\eta_\phi$ are the respective learning rates. Though it is not necessary to perform equal number of updates to $\theta$ and $\phi$, we empirically find equal number of updates to work well. 

\begin{lemma}[The hypergradient of $\phi$]
\label{lemma-phi-gradient}
The update to $\phi$ in \eqref{eq:1-step-gradient-phi} is computed as 
\begin{equation} \label{eq:phi-update-lemma3.1}
\phi_{t+1} = \phi_{t} - \eta_\theta \eta_\phi \Delta\phi_t ,
\end{equation}
where the hypergradient update $\Delta\phi_t:= \frac{1}{\eta_\theta} \frac{d}{d\phi_t} \mathcal{L}_{\text{val}}(\theta_t^\prime)$ is (Appendix \ref{app:hypergrad-phi})
\begin{small}
\begin{equation}
\begin{aligned}
\label{eq:final-phi-update}
\Delta\phi_t = & -\frac{1}{S}
\Bigg(\frac{1}{N}\sum_{i=1}^N \gamma_i (\theta_t')\Bigg)^{\!\top} 
\sum_{i=1}^N g_i(\theta_t)
\Bigg(
h_i(\phi_t)^{\top} \\
& - \frac{w_i}{S}\sum_{j=1}^N h_j(\phi_t)^{\!\top}
\Bigg),
\end{aligned}
\end{equation}
\end{small}
where $\theta_t^\prime$ is $\theta$  after the one-step update:
\begin{equation} \label{eq:theta_t_prime}
\theta_t^\prime = \theta_t -  \frac{\eta_\theta}{S} \sum_{i=1}^N w_i g_i(\theta_t),
\end{equation}
and $\eta_\theta$ and $\eta_\phi$ are learning rates.
\end{lemma}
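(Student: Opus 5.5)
The proof is a direct chain-rule computation. The key modeling choice is to treat $\theta_t$ as independent of $\phi_t$. $\phi_t$ enters $\theta_t'$ only through the weights $w_i = s_{\phi_t}(x_i)$ and the normalizer $S=\sum_j w_j$. I will write $\theta_t' = \theta_t - \eta_\theta\sum_i p_i g_i(\theta_t)$ with $p_i = w_i/S$; this is exactly \eqref{eq:theta_t_prime}. Then I apply the chain rule
\begin{equation*}
\frac{d}{d\phi_t}\mathcal{L}_{\text{val}}(\theta_t') = \Big(\frac{\partial \mathcal{L}_{\text{val}}}{\partial \theta}\Big|_{\theta_t'}\Big)^{\!\top} \frac{\partial \theta_t'}{\partial \phi_t}.
\end{equation*}
By definition \eqref{eq:val-loss}, the first factor is $\frac{1}{N}\sum_i \gamma_i(\theta_t')$. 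So the whole task reduces to computing the Jacobian $\partial\theta_t'/\partial\phi_t$.

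For that Jacobian, $g_i(\theta_t)$ does not depend on $\phi_t$ at the point of differentiation, so
\begin{equation*}
\frac{\partial \theta_t'}{\partial \phi_t} = -\eta_\theta \sum_i g_i(\theta_t)\,\frac{\partial p_i}{\partial \phi_t}.
\end{equation*}
The one step needing care is differentiating the normalized weight $p_i = w_i/S$ by the quotient rule. Using $\partial w_i/\partial\phi = h_i^\top$ (a row vector) and $\partial S/\partial\phi = \sum_j h_j^\top$, I get
\begin{equation*}
\frac{\partial p_i}{\partial\phi_t} = \frac{1}{S}h_i^\top - \frac{w_i}{S^2}\sum_j h_j^\top = \frac{1}{S}\Big(h_i^\top - \frac{w_i}{S}\sum_j h_j^\top\Big).
\end{equation*}
Substituting this into the Jacobian and pulling out the common factor $1/S$ gives
\begin{equation*}
\frac{d}{d\phi_t}\mathcal{L}_{\text{val}}(\theta_t') = -\frac{\eta_\theta}{S}\Big(\frac1N\sum_i\gamma_i(\theta_t')\Big)^{\!\top}\sum_i g_i(\theta_t)\Big(h_i^\top - \frac{w_i}{S}\sum_j h_j^\top\Big).
\end{equation*}
Dividing by $\eta_\theta$ yields $\Delta\phi_t$ exactly as in \eqref{eq:final-phi-update}. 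Finally, multiplying by $\eta_\theta\eta_\phi$ recovers the update \eqref{eq:phi-update-lemma3.1} from \eqref{eq:1-step-gradient-phi}.

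I expect no real obstacle; the main risk is bookkeeping. First, the dimensions must be kept consistent: the inner product $\gamma^\top g_i$ is a scalar, and multiplying it by the row vector $h_i^\top$ gives a row-vector gradient in $\phi$. Second, the $S$-normalization must not be dropped. Differentiating through $S$ produces the centering term $-\tfrac{w_i}{S}\sum_j h_j^\top$, and it is easy to omit it.
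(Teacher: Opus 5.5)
Your proposal is correct and follows essentially the same chain-rule computation as the paper. The paper differentiates the weighted training gradient $\frac{1}{S}\sum_i w_i g_i(\theta_t)$ with respect to $\phi_t$ as a whole, while you apply the quotient rule to each $p_i = w_i/S$; both give the same centering term $-\frac{w_i}{S}\sum_j h_j^{\top}$ and the same $\Delta\phi_t$.
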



\subsection{Dynamics of the Data Weights}
\label{sec:dw_dynamics}
We first analyze dynamics of $p_i = w_i/S$ in the simplified setting where each $w_i$ is a separate hyperparameter, rather than the output of a network. This is basically the setting of \citet{ren2018learning,wang2022meta}. In the complete setting, $w_i$ is the output of network $s_{\phi}(\cdot)$ and updated through parameters $\phi$. However, the update directions on $w_i$ with and without $\phi$ are similar. 

The hypergradient of $\mathcal{L}_{\text{val}}$ against $w_i$ is (Appendix \ref{sec:hypergradient_w_i}), 
\begin{equation}
\frac{d\mathcal{L}_{\text{val}}}{dw_i}
=-\eta_\theta 
\frac{1}{S} \bar\gamma(\theta_t^\prime)^\top \left[ g_i(\theta_t)
-\sum_k p_k g_k(\theta_t)
\right],
\end{equation}
where $\bar\gamma(\theta_t^\prime) = \sum_i^N \gamma_i(\theta_t^\prime)$.
For simplicity, we drop the dependence on $\theta_t$ and define scalars 
\begin{equation}
a_i := \bar\gamma^\top g_i, \;\; \bar a := \sum_k p_k a_k.
\end{equation}
We can consider $a_i$ as a measure of the alignment between the validation gradient $\gamma$, estimated over the entire validation batch, and the gradient of the $i$-th training data point $g_i$. In comparison, $\bar a$ measures the overall alignment weight-averaged over the training batch. 
Treating $w_i$ as separate hyperparameters, the gradient descent update to $w_i$ is
\begin{equation}
w_i \leftarrow w_i - \eta_w \frac{d\mathcal{L}_{\text{val}}}{dw_i}
= w_i + \frac{\eta_\theta \eta_w}{S}\,\big(a_i - \bar a\big),
\label{eq:dLval_dwi}
\end{equation}
where $\eta_w > 0$ is the learning rate. Therefore, $w_i$ will increase if $a_i > \bar a$ and decrease if $a_i < \bar a$. 
In other words, examples whose gradient $g_i$ is more aligned with the validation gradient $\bar \gamma$ than the current weight-averaged alignment
$\bar a$ are up-weighted, while examples whose alignment is weaker than
$\bar a$ are down-weighted. Overall, this steers the training gradient toward the validation gradient direction.

However, this also introduces a competitive, winner-takes-most dynamic: as some samples gain weight, they raise $\bar a$, making it increasingly difficult for other samples to remain competitive. This may cause a few $w_i$ to dominate $S$ and push other $w_i$ to zero, resulting in a spiky distribution $p$.

\begin{figure*}[t]
    \centering
    
    \begin{minipage}{0.494\linewidth}
        \centering
        \includegraphics[width=\linewidth]{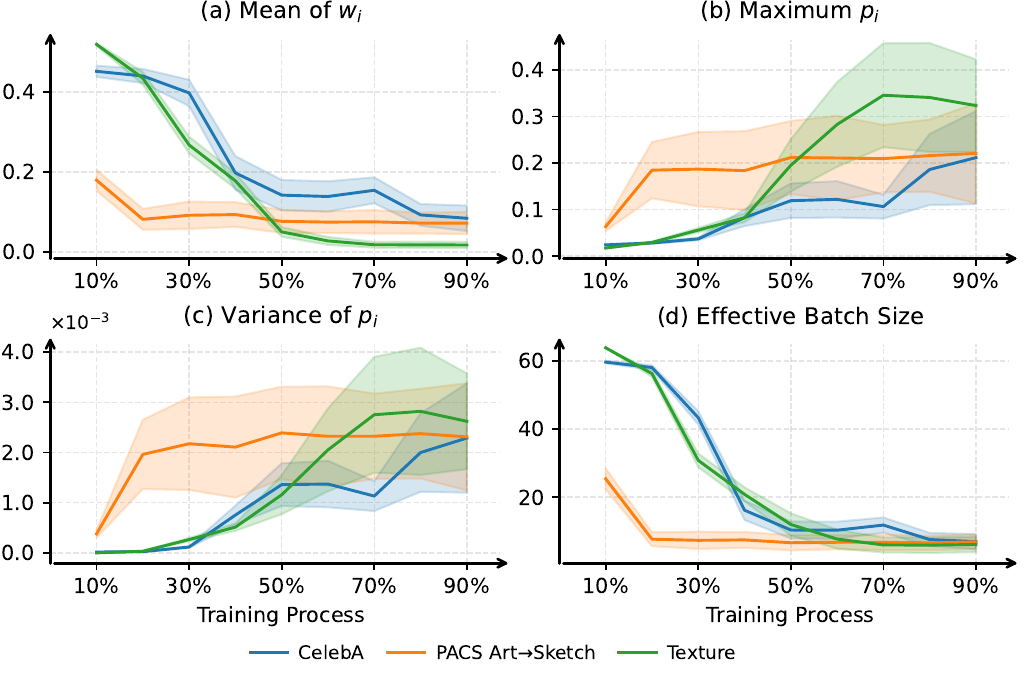}
        {\scriptsize \textbf{(i) Batch size $N$=64}}
    \end{minipage}
    \hfill
    \begin{minipage}{0.494\linewidth}
        \centering
        \includegraphics[width=\linewidth]{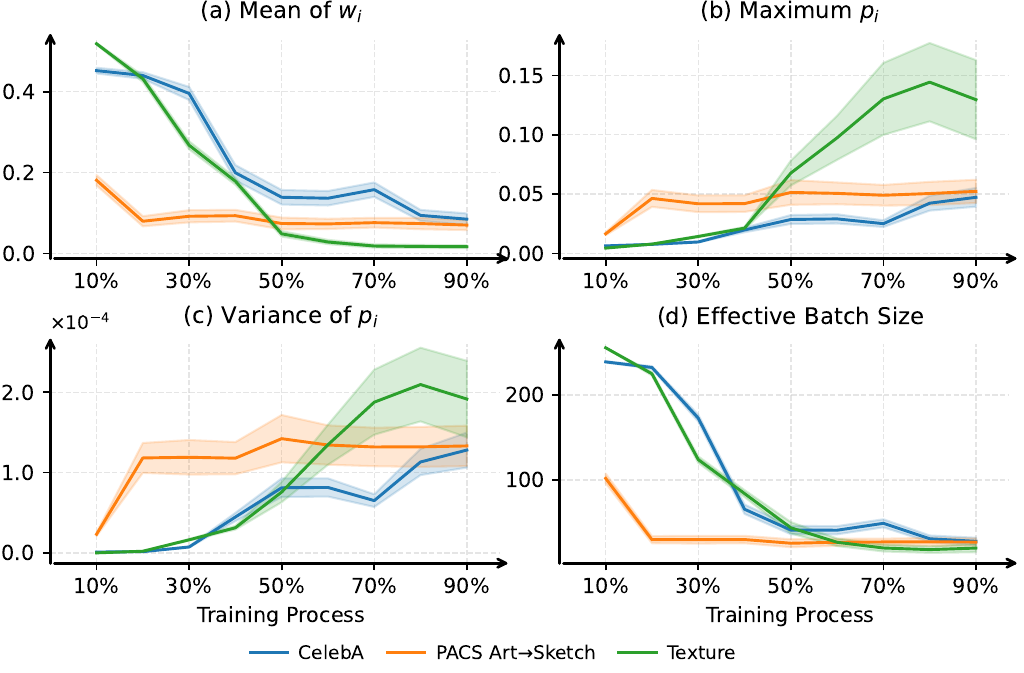}
        {\scriptsize \textbf{(ii) Batch size $N$=256}}
    \end{minipage}

    \caption{Training dynamics across two batch sizes: (a) the mean \emph{unnormalized} weight $\hat{\mathbb{E}}[w_i]=S/N$, which decreases over time,
        (b) the maximum \emph{normalized} weight $\max_i p_i$, which increases but to a lower ceiling for $N=256$
        (c) the variance $\mathrm{Var}(p_i)$, which increases to a lower ceiling for $N=256$, and 
        (d) the effective batch size $B_{\mathrm{eff}} = (\sum_i p_i^2)^{-1}$, which is larger for $N=256$. 
        Colors indicate datasets. Shaded regions show mean~$\pm$~std across independent batches. Additional results for $N=512$ and $N=1024$ can be found in Figure \ref{fig:weight_trends_bs_2} of Appendix~\ref{app:batch_size_dynamics}.
    }
    \label{fig:weight_trends_bs}
\end{figure*}



We formalize the above intuition by analyzing the behavior of $w_i$ using a ``damped'' ODE,
\begin{align}\label{eq:ode-w-complex}
  \dot{w}_i(t) &
  := \frac{1}{S(t)}\Bigl(a_i - \bar{a}(t)\Bigr),\\
\frac{dw_i}{dt}(t) &
  :=
  \begin{cases}
    \dot w_i(t),
    & \text{if } w_i(t) > \delta \text{ or }  \dot w_i(t) > 0,\\[6pt]
    0, & \text{if } w_i(t) = \delta \text{ and }  \dot w_i(t) < 0.
  \end{cases} \label{eq:damping-ode}
\end{align}
The damping of (\ref{eq:damping-ode}) is necessary to prevent negative $w_i$. By modifying the raw gradient $\dot{w}_i(t)$, we ensure that $w_i(t)$ never goes below a floor $\delta \ge 0$ but can leave the floor if $\dot w_i(t) > 0$. $\delta$ is a hyperparameter that can be controlled by the activation function of $s_{\phi}(\cdot)$.
Assuming that $(a_i)$ stay constant, and there is a \emph{dominant index} $K$ so that $a_K > a_i, \, \forall i \neq K$, we can show $p_K$ converges to 1, with a formal proof in Appendix \ref{app:w-dynamics-proof}. 
\begin{theorem}[Convergence of $p$ to a one-hot vector]
\label{theorem:p-converge-to-spiky}
\begin{equation}
\lim_{t \rightarrow \infty} p_K = 1, \, \text{and } \forall \, i \neq K, \lim_{t \rightarrow \infty} p_i = 0.
\end{equation}
If $\delta=0$, $p_K$ reaches $1$ in finite time. If $\delta>0$, the time for $p_K$ to reach $1-\epsilon$ scales with $O(\epsilon^{-3})$.
\end{theorem}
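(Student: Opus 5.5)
The plan is to control the weighted mean alignment $\bar a(t)$ with a Lyapunov-type argument, then analyse the late phase, where every $i\neq K$ sits at the floor, explicitly.

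\textbf{Step 1: $w_K$ never decreases and $S$ stays bounded below.} Since $\bar a=\sum_k p_k a_k$ is a convex combination, $a_K-\bar a=\sum_{i\neq K}p_i(a_K-a_i)\ge 0$, with equality only when $p_K=1$. Hence $\dot w_K\ge 0$, the floor is never active for $K$, and $w_K(t)\ge w_K(0)>0$. This gives $S(t)\ge w_K(0)$, so all rates $|a_i-\bar a|/S$ are uniformly bounded and the damped ODE has well-defined (Filippov-type) solutions for all $t\ge 0$.

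\textbf{Step 2: $\bar a$ is nondecreasing.} Let $A(t)$ be the set of unclamped indices. For $i\in A$,
\[
\dot p_i=\frac{\dot w_i-p_i\dot S}{S},\qquad \dot S=\sum_{j\in A}\frac{a_j-\bar a}{S}.
\]
Differentiating $\bar a=\sum_i a_i p_i$ and using $\sum_i p_i(a_i-\bar a)=0$ should give
\[
\frac{d\bar a}{dt}=\frac{1}{S^2}\sum_{i\in A}(a_i-\bar a)^2+R,
\]
where $R$ collects the clamped indices. Each clamped index has $a_i<\bar a$, so removing its negative contribution to $\dot S$ should make $R\ge 0$. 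This sign check is where care is needed.

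\textbf{Step 3: $\bar a\to a_K$.} Since $\bar a\le a_K$ and is nondecreasing, it has a limit $\bar a_\infty$. Suppose $\bar a_\infty<a_K$. Then $\dot w_K\ge (a_K-\bar a_\infty)/S$. Bounded rates make $S$ grow at most linearly, so $w_K\ge c\log t\to\infty$.

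The competitors cannot keep pace. For $i\neq K$ both unclamped, $\frac{d}{dt}(w_K-w_i)=(a_K-a_i)/S>0$. Indices with $a_i<\bar a_\infty$ eventually move down to the floor. Combined with the variance term in Step 2, this should force $\bar a\to a_K$: the increment $\int S^{-2}\sum(a_i-\bar a)^2\,dt$ must be finite, which I would turn into a contradiction using $S\lesssim t$.

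Once $\bar a>\max_{i\neq K}a_i$, every $i\neq K$ has $\dot w_i<0$ and reaches $\delta$ in finite time, since $S$ is bounded on compact time intervals. Then $p_i=\delta/S\to 0$ as long as $S\to\infty$, and $S\to\infty$ because $w_K$ keeps increasing while $1-p_K>0$.

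\textbf{Step 4: rates.} If $\delta=0$, every $i\neq K$ reaches $0$ in finite time by Step 3, after which $p_K=1$ exactly.

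If $\delta>0$, after a finite transient we have $S=w_K+(N-1)\delta$ and
\[
a_K-\bar a=\frac{\delta}{S}\sum_{i\neq K}(a_K-a_i)=:\frac{c\delta}{S}.
\]
Hence $\dot w_K=c\delta/S^2$, that is $\dot S=c\delta/S^2$, so $S^3\sim 3c\delta t$. Therefore
\[
1-p_K=\frac{(N-1)\delta}{S}\asymp t^{-1/3},
\]
and reaching $1-\epsilon$ takes time $O(\epsilon^{-3})$.

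I expect the main obstacle to be Step 3. The sign bookkeeping in Step 2 when indices enter or leave the floor needs care. The harder part is excluding a stall where $\bar a$ stays below $a_K$ while $S$ grows. If the monotonicity route is messy, I would fall back on a direct comparison on the pairwise differences $w_K-w_i$.
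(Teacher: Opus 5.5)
Your architecture is the paper's: $w_K$ nondecreasing, $\bar a$ nondecreasing, $\bar a\to a_K$, finite-time arrival of every $i\neq K$ at the floor, then the explicit late-phase ODE $\dot w_K=C_0/(w_K+(N-1)\delta)^2$ giving $1-p_K\asymp t^{-1/3}$. Steps 1, 2 and 4 are fine. In Step 2 there is no sign check to do: clamped indices have zero actual derivative, so $\frac{d\bar a}{dt}=\frac1S\sum_i(a_i-\bar a)\frac{dw_i}{dt}=\frac1{S^2}\sum_{i\in A}(a_i-\bar a)^2$ exactly, i.e.\ $R\equiv 0$. The genuine gap is in Step 3, at the stall you flagged. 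Your contradiction needs $\int_0^\infty S^{-2}\,dt=\infty$: if $\bar a_\infty<a_K$, then $K$ is always active and $\bar a_\infty-\bar a(0)\ge(a_K-\bar a_\infty)^2\int_0^\infty S^{-2}\,dt$. But the only growth bound you have is $S\lesssim t$, obtained by bounding the rates via $S\ge w_K(0)$. That gives merely $\int_0^\infty S^{-2}\,dt\gtrsim\int_0^\infty(1+t)^{-2}\,dt<\infty$, so nothing is contradicted.

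The missing observation is that replacing $1/S$ by a constant throws away exactly the factor you need. Since $\frac{dS}{dt}=\frac1S\sum_{i\in A}(a_i-\bar a)$,
\[
\frac{d(S^2)}{dt}=2\sum_{i\in A(t)}\bigl(a_i-\bar a(t)\bigr)\le 2N\bigl(a_K-\min_i a_i\bigr),
\]
so $S(t)\le\sqrt{S(0)^2+Ct}$ and $\int_0^\infty S^{-2}\,dt=\infty$. With this your argument closes in one line. The paper uses this same bound but phrases the step via $V:=a_K-\bar a$. Since $K\in A(t)$ whenever $V>0$, $\dot V\le -V^2/S^2$, i.e.\ $\frac{d}{dt}(1/V)\ge S^{-2}$, which gives $V\to 0$ with a rate. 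Then $p_K\to1$ follows from $V=\sum_{j\neq K}p_j(a_K-a_j)\ge\Delta_{\min}(1-p_K)$, where $\Delta_{\min}:=\min_{j\neq K}(a_K-a_j)$. Your pairwise fallback could also be completed using only $I(t):=\int_0^t S^{-1}\,ds\to\infty$, which $S\lesssim t$ does give. Under the stall hypothesis, each $i$ with $a_i>\bar a_\infty$ is always active with $w_i=(a_i-\bar a_\infty)I+o(I)$, and all other $w_i$ are $o(I)$. Hence $\bar a-\bar a_\infty$ tends to $\sum_{a_i>\bar a_\infty}(a_i-\bar a_\infty)^2/\sum_{a_i>\bar a_\infty}(a_i-\bar a_\infty)>0$, a contradiction; but the proposal does not carry this out. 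There are two smaller slips. First, finite-time arrival at the floor follows from $\int^\infty S^{-1}\,dt=\infty$ once $\bar a-a_i$ is bounded below, not from $S$ being bounded on compact intervals. Second, ``$w_K$ keeps increasing'' does not give $S\to\infty$. You do not need it: $p_K\to1$ follows from $\bar a\to a_K$ as above, and for $\delta>0$ your Step 4 integration yields $S\to\infty$ anyway.
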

The theorem suggests that, given sufficient training, the network will degenerate to a state where one data point dominates and every other data point contributes very little, if at all, to the training loss. The assumption that $(a_i)$ are constant may seem an oversimplification. However, empirical training dynamics (Figures \ref{fig:weight_trends_bs} and \ref{fig:weight_trends_bs_2}) are qualitatively consistent with Theorem \ref{theorem:p-converge-to-spiky}, as discussed below. 

In reality, $(p_i)$ are not so extreme for several reasons. 
First, the $a_i$ values are not constant. As the main network $f_{\theta}(\cdot)$ begins to fit the $i$-th data point, the gradient norm $\|g_i\|_2$ will decrease, causing $a_i$ to decrease and preventing $p_i$ from reaching 1. Second, we usually do not train $s_{\phi}(\cdot)$ to convergence before switching back to training the main network, which may cause $a_i$ to change. Finally and importantly, when we use a parameterized network $s_{\phi}(\cdot)$ to predict $w_i$, the network outputs are usually more smooth than a one-hot vector \cite{MaChao2021:smoothness}. As a result, $p_K$ usually does not reach 1. 

Nevertheless, empirical training dynamics (Figures \ref{fig:weight_trends_bs} and \ref{fig:weight_trends_bs_2}) are qualitatively consistent with Theorem \ref{theorem:p-converge-to-spiky}. First, even though the distribution of $p$ does not become one-hot, it does become spiky. In Figure \ref{fig:weight_trends_bs}, one component out of 64 or 256 captures $1/20$ to $1/4$ of the total probability mass. Second, the empirical mean of $w_i$ decreases during training. This is also consistent with the theory, given a relatively smooth $s_{\phi}(\cdot)$. That is, if we decrease $w_i$ for most $i$, we end up pushing all $w_i$ closer to zero. Third, again as the theory predicts, the sample variance of $p_i$,  calculated as $\hat{\sigma}_p^2 := \frac{1}{N} \sum_{i=1}^N (p_i - 1/N)^2$, increases. Simultaneously, the effective batch size (EBS), calculated as $\left(\sum_i p_i^2\right)^{-1}$ decreases. This is expected, since  
\begin{equation}
    \label{eq:EBS}
    \mathrm{EBS} := \Big(\sum_i p_i^2\Big)^{-1} = \Big(\hat{\sigma}_p^2 + \frac{1}{N}\Big)^{-1}.
\end{equation}
Overall, Theorem \ref{theorem:p-converge-to-spiky} provides important insights into the root cause of the experimental observations. Notably, one may naturally expect that lowering EBS would increase the variance of $\Delta\phi$. We analyze that in the next section.


\subsection{What Degrades and Repairs GSNR} \label{sec:degrade_gsnr}
Theorem \ref{theorem:p-converge-to-spiky} treats $w_i$ as independent hyperparameters. In this section, we study the complete setting where $(w_i)$ are output of the selection network $s_{\phi}(\cdot)$ and focus on the selection network gradient $\Delta \phi$ of \eqref{eq:final-phi-update}. Our main argument is that a spiky alignment between $\bar \gamma$ and $g_i$, as well as a small $S$, collectively lead to the degradation of the GSNR for the selection network. However, increasing batch size $N$ can alleviate this issue. 

The GSNR can be seen as a measurement of the quality of gradient
\begin{equation}
  \mathrm{GSNR} :=  \frac{\bigl\|\,\mathbb{E}\bigl[\Delta\phi\bigr]\,\bigr\|_2^2}{\mathbb{E}\bigl[\bigl\|\Delta\phi - \mathbb{E}\bigl[\Delta\phi\bigr]\bigr\|_2^2\bigr]}
  \;=\;
  \frac{\bigl\|\,\mathbb{E}\bigl[\Delta\phi\bigr]\,\bigr\|_2^2}{\mathrm{Var}\bigl(\Delta\phi\bigr)},
  \label{eq:snr_def}
\end{equation}
where $\mathrm{Var}(\cdot)$ denotes the total variance, computed as the trace of the covariance matrix. Intuitively, the GSNR measures how closely each stochastic gradient update tracks the direct path toward the local minimum. Thus, an extremely low GSNR can cause convergence difficulties. Further, \citet{liu2020understanding} show that high GSNR leads to improved generalization. 


We capture the spikiness of $p$ with Assumption \ref{ass:spiky-a}. With abuse of notation, we treat $g_i$, $w_i$ and $h_i$ as random variables, so that $a_i = \bar\gamma^\top g_i$ is also a random variable. We impose the following condition on the expectation of $(a_i)$. When the expectations are spiky, it is likely for $p_K$ to grow and GSNR to degrade. 
\begin{assumption}[The expectation of vector $a$ is spiky] There is a unique dominant index $K$ such that $\mu_K^{(a)} > \mu_i^{(a)}$ for all $i \neq K$, where
\(
\E[a_i | w_i, h_i] = \mu_i^{(a)}. 
\)
\label{ass:spiky-a} 
\end{assumption}

\begin{theorem}
\label{thm:gradient-chi-dwk-lower-bound}
With Assumption \ref{ass:spiky-a} and some regularity conditions, $w_K$ has large influence on $\Delta\phi$ with large probability. More specifically, 
defining the average gap energy $G_{\text{gap}}:= \sqrt{\frac{1}{N} \sum_{i \neq K} (\mu_K - \mu_i)^2}$,
\begin{equation}
P\bigg(
  \bigg\|\frac{\partial \Delta\phi}{\partial w_K}\bigg\|_2
  \ge
  \frac{\sqrt{\alpha} G_{\text{gap}}}{S^2}
\bigg)
\ge
\frac{({\tau_h} - \alpha)^2 G_{\text{gap}}^4}
     {K_1 N^9 G_{\text{gap}}^4 + K_2 N^8},
\end{equation}
where $\alpha$ is an arbitrary value in $(0, \tau_h)$; $\tau_h$, $K_1$, and $K_2$ are positive constants that do not depend on $N$ or $G_{\text{gap}}$. 
\end{theorem}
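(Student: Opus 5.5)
We would start from Lemma~\ref{lemma-phi-gradient} and rewrite $\Delta\phi$ in terms of the scalars $a_i=\bar\gamma^\top g_i$ (absorbing the $1/N$ of the validation average into the constants). Transposing \eqref{eq:final-phi-update} gives
\begin{equation*}
\Delta\phi = -\frac{1}{S}\sum_{i=1}^N a_i\Big(h_i - \frac{w_i}{S}\sum_{j} h_j\Big)
= -\frac{1}{S}\Big(\sum_i a_i h_i - \frac{1}{S}\Big(\sum_i w_i a_i\Big)\Big(\sum_j h_j\Big)\Big).
\end{equation*}
We then differentiate with respect to $w_K$, holding $a_i,h_i$ fixed and using $\partial S/\partial w_K=1$. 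The terms coming from $\partial(1/S)$ combine, and we expect a clean form such as
\begin{equation*}
\frac{\partial \Delta\phi}{\partial w_K}= \frac{1}{S^2}\Big(\sum_i (a_i - a_K)\, h_i \;+\; \text{(terms proportional to } \textstyle\sum_i a_i h_i - \bar a \sum_j h_j)\Big),
\end{equation*}
with $\bar a=\sum_k p_k a_k$. This is the structural step: the $1/S^2$ prefactor and the gap variables $a_K-a_i$ appear, which explains both ingredients of the bound. We will organize the vector as $\frac{1}{S^2}V$, where $V=\sum_{i} c_i h_i$ is a linear combination with coefficients $c_i$ linear in the $a$'s.

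We then lower bound $\|V\|_2$ in probability by a second-moment (Paley--Zygmund) argument applied to $Z:=\|V\|_2^2$. For the mean, we condition on $(w_i,h_i)$ and use Assumption~\ref{ass:spiky-a}: $\E[c_i\mid w,h]$ is governed by $\mu_K-\mu_i$. As the regularity condition we assume the $h_i$ satisfy a nondegeneracy bound (the Gram matrix of $\{h_i\}$ has smallest eigenvalue at least $\tau_h$ on average), which yields $\E[Z]\ge \tau_h\sum_{i\ne K}(\mu_K-\mu_i)^2/N \cdot(\text{normalization}) = \tau_h G_{\text{gap}}^2$. For the upper side, we assume bounded fourth moments of $a_i$ and bounded $\|h_i\|$. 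Expanding $Z^2$ as a sum over quadruples of indices, together with the $\sum_j h_j$ factors inside $c_i$, produces polynomial powers of $N$. This gives $\E[Z^2]\le K_1N^9G_{\text{gap}}^4+K_2N^8$, where the $G_{\text{gap}}^4$ part comes from the means and the $N^8$ part from the centered fluctuations.

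Paley--Zygmund in the form $P(Z\ge \alpha G_{\text{gap}}^2)\ge (\E Z-\alpha G_{\text{gap}}^2)^2/\E[Z^2]\ge(\tau_h-\alpha)^2G_{\text{gap}}^4/\E[Z^2]$ then gives exactly the stated bound after taking square roots, since $\|V\|\ge\sqrt{\alpha}G_{\text{gap}}$. The main obstacle is the mean lower bound $\E[Z]\ge\tau_h G_{\text{gap}}^2$. The cross terms $h_i^\top h_j$ and the $\bar a\sum_j h_j$ correction could cancel the gap contribution, so the regularity assumption has to be tuned, for instance near-orthogonality or an isotropy condition $\E[h_i^\top h_j]=0$ for $i\neq j$ with $\E\|h_i\|^2\ge\tau_h$. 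Choosing this condition so that it is natural and also kills the cross terms is where we expect most of the work to go. The fourth-moment counting is tedious but routine.
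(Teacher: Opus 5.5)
Your overall route is the paper's: write $\partial\Delta\phi/\partial w_K=S^{-2}V$ with $V=\sum_i(2\bar a-a_K-a_i)h_i$ up to an overall sign (your guessed form is correct, with coefficient $-2$ on the correction term). Then condition on $(w,h)$, lower-bound $\E Z$, upper-bound $\E Z^2$, and apply Paley--Zygmund. The genuine gap is at the step you flag yourself, and your proposed remedy targets the wrong source of cancellation.

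Write $d_i:=\mu_K-\mu_i\ge 0$, so $d_K=0$. Then $\E[V\mid w,h]=\sum_i\bar c_i h_i$ up to sign, with $\bar c_i=d_i-2p^\top d$: the terms proportional to $\sum_j h_j$ add one common scalar to every coefficient. This cancellation lives in the coefficient vector $\bar c$, not in the cross terms $h_i^\top h_j$, so no orthogonality or isotropy hypothesis on the $h_i$ can remove it. Even for exactly orthonormal $h_i$ you only obtain $\|\bar c\|_2^2$. With all gaps equal to $d$ and $p_K=1/2$ one has $\bar c=-d\,e_K$, so $\|\bar c\|_2^2=d^2$, a factor $N-1$ below $\sum_{i\ne K}d_i^2$. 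The missing ingredient is a deterministic inequality, uniform over the simplex:
\[
\bigl\|d-2(p^\top d)\mathbf{1}\bigr\|_2^2\;\ge\;\min_{s\in\mathbb{R}}\|d-s\mathbf{1}\|_2^2\;=\;\|d\|_2^2-\tfrac{1}{N}(\mathbf{1}^\top d)^2\;\ge\;\tfrac{1}{N}\|d\|_2^2\;=\;G_{\text{gap}}^2 .
\]
The last step uses $d_K=0$ and Cauchy--Schwarz over the other $N-1$ indices. This is the paper's Lemma~\ref{lemma:app-frp-min}. It is exactly what your ``(normalization)'' hides, and it explains why $G_{\text{gap}}$ carries the factor $1/N$.

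Your nondegeneracy condition on the $h_i$ also has to change. A lower bound $\tau_h$ on the smallest eigenvalue of the $N\times N$ Gram matrix of $h_1,\dots,h_N\in\mathbb{R}^{d_h}$ cannot hold uniformly in $N$, since the matrix is singular once $N>d_h$. The alternative $\E[h_i^\top h_j]=0$ for i.i.d.\ $h_i,h_j$ forces $\E h_i=0$, an unnecessary restriction. What works, and what the paper assumes, is conditional independence of the $h_i$ given $w$ together with $\mathrm{Cov}(h_i\mid w)\succeq\Sigma_0$ and $\tau_h:=\mathrm{tr}\,\Sigma_0>0$. Since $\bar c$ depends only on $w$, this gives $\E\bigl[\|\sum_i\bar c_i h_i\|_2^2\mid w\bigr]\ge\sum_i\bar c_i^2\,\mathrm{tr}\,\mathrm{Cov}(h_i\mid w)\ge\tau_h\|\bar c\|_2^2$. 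The squared conditional mean is simply discarded, so no centering of the $h_i$ is needed. Combined with the inequality above, this yields $\E Z\ge\tau_h G_{\text{gap}}^2$.

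For $\E Z^2$ your counting sketch is workable, with two adjustments. First, to make the bound depend on the $\mu_i$ only through $G_{\text{gap}}$, use the shift invariance of $V$ in the $a_i$. The paper writes $V=\sum_i a_i v_i$ with $\sum_i v_i=0$, which lets it replace $\mu_i$ by $\mu_i-\bar\mu$ (where $\bar\mu=\frac1N\sum_j\mu_j$), whose squares sum to at most $NG_{\text{gap}}^2$. Second, state the fourth-moment assumption for $\varepsilon_i=a_i-\mu_i$ conditionally on $(w,h)$, not for $a_i$ itself.
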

This theorem demonstrates that the sensitivity of the gradient $\Delta\phi$ to $w_K$ increases linearly with $G_{\text{gap}}$ and $1/S^2$. Further, the probability of this event approaches 1 as $G_{\text{gap}}$ approaches $\infty$. However, increasing the batch size $N$ will decrease the probability of the event and the lower bound through $S$, since $\E[S] = N \E[w_i]$. It is worth noting that increasing $N$ does not increase $G_\text{gap}$, which is the average over the training batch. 

One natural thought is that, since $\Delta\phi$ is highly sensitive to $w_K$, randomness in $w_K$ may cause high variance in $\Delta\phi$ and low GSNR. To show this, we need to relate any finite change in $w_K$ to the change in $\Delta\phi$. Such a change is created by resampling $(x_K, y_K)$ while keeping all other data points unchanged. This yields a new tuple $(\widetilde{g}_K, \widetilde{w}_K, \widetilde{h}_K)$ and a new gradient of interest $\doublewidetilde{\Delta\phi}$. With positive probability, the resampling induces a small change to $w_K$, by a bounded second-order derivative and a Taylor-expansion argument, we can show that the ratio  
\begin{equation}
L_K
:=
\frac{\|\doublewidetilde{\Delta\phi} - \Delta\phi\|_2}{|\widetilde{w}_K - w_K |},
\end{equation}
on the event \(\widetilde{w}_K \neq w_K\), is bounded from below, as a direct corollary to Theorem \ref{thm:gradient-chi-dwk-lower-bound}. 

\begin{corollary} [Local Lipschitz lower bound under resampling, informal]\label{thm:local-lipschitz-resampling}
 There exists a constant $K_3>0$ depending only on the
distribution of $(g_i,w_i,h_i)$ and the validation gradient $\bar\gamma$,
but not on $N$ or $G_{\text{gap}}$,
\begin{equation}\label{eq:lipschitz-lower-bound-main}
P\bigg(
  L_K \;\ge\; K_3 \,\frac{\sqrt{\alpha}\,G_{\text{gap}}}{S^2}
\bigg)
>0, 
\end{equation}
where $\alpha$ is the same constant in Theorem \ref{thm:gradient-chi-dwk-lower-bound}.
\end{corollary}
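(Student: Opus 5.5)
The plan is to derive Corollary \ref{thm:local-lipschitz-resampling} from Theorem \ref{thm:gradient-chi-dwk-lower-bound} with a first-order Taylor expansion of $\Delta\phi$ in the $K$-th data point. Resampling $(x_K,y_K)$ changes the whole tuple $(g_K,w_K,h_K)$, not only $w_K$. So I will view $\Delta\phi$ in \eqref{eq:final-phi-update} as a smooth function $F(g_K,w_K,h_K)$, with the other $N-1$ points and $\bar\gamma$ held fixed. Let $z_K=(g_K,w_K,h_K)$ and $\widetilde z_K=(\widetilde g_K,\widetilde w_K,\widetilde h_K)$. The mean value theorem with a second-order remainder gives
\begin{equation*}
\doublewidetilde{\Delta\phi}-\Delta\phi=\frac{\partial \Delta\phi}{\partial w_K}(\widetilde w_K-w_K)+\frac{\partial \Delta\phi}{\partial g_K}(\widetilde g_K-g_K)+\frac{\partial \Delta\phi}{\partial h_K}(\widetilde h_K-h_K)+R,
\end{equation*}
where $\|R\|_2\le \tfrac12 M\|\widetilde z_K-z_K\|_2^2$. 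The constant $M$ comes from the assumed bounded second derivatives; these are part of the "regularity conditions".

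The first task is to control the cross terms. For this I restrict to resamplings in which the change in $w_K$ dominates, in the sense that $\|\widetilde g_K-g_K\|+\|\widetilde h_K-h_K\|\le c\,|\widetilde w_K-w_K|$. I also restrict to $|\widetilde w_K-w_K|\le r$ for a small radius $r$. Concretely, I assume $z_K$ is a smooth function of the input $x_K$ (for example through $s_\phi$ and the network), and that the input distribution has positive density near $x_K$. Then the set of such resamplings has positive probability. I read this as the intended meaning of "with positive probability, the resampling induces a small change to $w_K$".

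The cross-term derivatives are bounded by constants $C_g,C_h$, which again come from the regularity assumptions. On this event, dividing by $|\widetilde w_K-w_K|$ gives
\begin{equation*}
L_K\ \ge\ \Big\|\frac{\partial\Delta\phi}{\partial w_K}\Big\|_2-c(C_g+C_h)-\tfrac12 M(1+c)^2 r .
\end{equation*}

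Next I intersect with the event of Theorem \ref{thm:gradient-chi-dwk-lower-bound}, namely $\|\partial\Delta\phi/\partial w_K\|_2\ge \sqrt{\alpha}G_{\text{gap}}/S^2$, which has strictly positive probability. I then choose $c$ and $r$ small enough, relative to this derivative lower bound, that the two error terms are at most $(1-K_3)\sqrt{\alpha}G_{\text{gap}}/S^2$ for a fixed $K_3\in(0,1)$, say $K_3=1/2$. This yields $L_K\ge K_3\sqrt\alpha G_{\text{gap}}/S^2$. The resampling is independent of the other batch points given the original sample, and each conditional event has positive probability. Integrating gives a positive joint probability, which proves \eqref{eq:lipschitz-lower-bound-main}.

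I expect the main obstacle to be keeping $K_3$ free of $N$ and $G_{\text{gap}}$ while the error budget scales like $G_{\text{gap}}/S^2$. If I choose $c$ and $r$ in proportion to $\sqrt\alpha G_{\text{gap}}/S^2$, then $K_3$ stays a pure constant. The cost is that the probability of the restricted event depends on $N$ and $G_{\text{gap}}$, which is acceptable because the claim is only that the probability is positive. A second difficulty is whether $c$ can be made small at all, since $w_K$ and $g_K,h_K$ are coupled through $x_K$. To handle this, I would pick a direction in input space along which $\nabla_x w_K$ has a nonzero component. Moving along that direction by a small step, the step must be small relative to the change in $w_K$; I would make this a nondegeneracy assumption, stated among the regularity conditions. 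If it cannot be ensured, the fallback is to absorb the $g_K$ and $h_K$ terms into $K_3$ using bounded ratios of derivatives. That would give a weaker constant, but one that still does not depend on $N$ or $G_{\text{gap}}$.
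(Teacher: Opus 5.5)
Your skeleton is right: a Taylor bound around the original configuration, followed by a positive-probability set of resamplings. But the step that carries the weight is not established. You need resamplings with $\|\widetilde g_K-g_K\|_2+\|\widetilde h_K-h_K\|_2\le c\,|\widetilde w_K-w_K|$ and $0<|\widetilde w_K-w_K|\le r$ to occur with positive probability, for the $c$ your argument requires. That $c$ is forced by the configuration through $c\,(C_g+C_h)<(1-K_3)\sqrt{\alpha}\,G_{\text{gap}}/S^2$. Up to the overall sign, the cross derivatives are $\partial\Delta\phi/\partial h_K=S^{-1}(a_K-\bar a)I$ and $\partial\Delta\phi/\partial g_K=S^{-1}(h_K-p_Kh_{\text{sum}})\bar\gamma^\top$, so nothing keeps $c$ away from zero.

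A smooth map $x_K\mapsto(g_K,w_K,h_K)$ with a positive input density does not supply such resamplings. For $\widetilde x_K$ near $x_K$, the increment of the tuple is, to first order, the Jacobian applied to $\widetilde x_K-x_K$. So the ratio $(\|\Delta g\|_2+\|\Delta h\|_2)/|\Delta w|$ stays near $(\|D_xg\,u\|_2+\|D_xh\,u\|_2)/|\nabla_xw\cdot u|$ for the direction $u$. Its infimum over $u$ is a fixed, Jacobian-determined number. That number is typically positive, because an input direction that moves $w_K=s_\phi(x_K)$ also moves the last-layer block of $h_K=\partial s_\phi(x_K)/\partial\phi$. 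Hence, for $c$ below it and small $r$, your event has probability zero among resamplings near $x_K$, which is all your density argument uses. Choosing $u$ with $\nabla_xw_K\cdot u\neq0$ does not change this.

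The fallback also fails. The cross terms enter as a subtraction, $\|\partial\Delta\phi/\partial w_K\|_2-c\,(C_g+C_h)$, and as vectors they can cancel $\frac{\partial\Delta\phi}{\partial w_K}\,\delta w$ outright. They therefore cannot be absorbed into a multiplicative $K_3$. Even when the difference is positive, the resulting constant involves $S$, $p_K$, $h_{\text{sum}}$ and $V_K$, so there is no reason for it to be free of $N$ and $G_{\text{gap}}$.

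The missing ingredient is the paper's non-degenerate support assumption, imposed on the law of the tuple $(g_i,w_i,h_i)$ itself rather than on $x_i$. It requires a density that is positive on an open set $D\subset\mathbb{R}^{d_g+1+d_h}$, with $P(E_{\mathrm{grad}}\cap D^N)>0$. This is exactly what makes ``$w_K$ moves by a definite amount while $(g_K,h_K)$ stay almost fixed'' a positive-probability event. Granting it, your cone argument works, since the cone contains an open ball around $(g_K,w_K+r/2,h_K)$, which lies in $D$ for small $r$. Because you bound the vector remainder in norm rather than passing to the maximal coordinate $j^\star$, you even get $K_3=1/2$ instead of the paper's $3/(8\sqrt{d_h})$.

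The paper's route is leaner. It fixes one configuration $Z^\star\in E_{\mathrm{grad}}\cap D^N$ and Taylor-expands in $w_K$ alone with $(g_K,h_K)$ frozen. This gives a deterministic pair with a fixed $\delta w_\star\neq0$ and ratio at least $\tfrac34|\psi'(w_K^\star)|$. It then uses continuity of the normalized finite-difference ratio in all of $(Z,\widetilde g_K,\widetilde w_K,\widetilde h_K)$ at that point, where the denominator is bounded away from zero. Perturbations of $g_K$ and $h_K$ are absorbed by continuity, so no cross-derivative constants or cone condition are needed, and positivity comes from the density on $D^N\times D$.

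Note also that your $M$, $C_g$, $C_h$ cannot be global constants, since the $w_K$-derivatives carry negative powers of $S$ up to $S^{-4}$ and blow up as $S\to0$. Local, configuration-dependent bounds suffice, because only positivity of the probability is claimed.
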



The main result follows from the lower bound on $L_K$. In plain words, at any snapshot of $\phi$ and $\theta$, due to the variance induced by random sampling of the data point at the dominant index, $(x_K, y_K)$, the GSNR becomes low. We establish an upper bound on GSNR that involves $N$. 
\begin{theorem}[GSNR upper bound, informal]
\label{thm:SNR-s} 
\begin{equation}
    \mathrm{GSNR}
    \;\le\; C_{\mathrm{SNR}}
    \, \frac{ N^4\,\E[S^{-2}]}{ G^2_{\text{gap}}\,
    \E\big[ S^{-4} \mathds{1}_{E_{\text{Lip}}} \big]}
    ,
\end{equation}
where constant $C_{\mathrm{SNR}} >0 $ is independent of $N$ and $G^2_{\text{gap}}$. $\mathds{1}_{E_{\text{Lip}}}$ equals $1$ if the lower bound on $L_K$ from Corollary \ref{thm:local-lipschitz-resampling} holds and 0 otherwise.
\end{theorem}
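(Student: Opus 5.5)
We bound the numerator and the denominator of \eqref{eq:snr_def} separately. The numerator should be bounded from above using only the structure of \eqref{eq:final-phi-update}. The denominator should be bounded from below by the variance injected by resampling the dominant index $K$, using Corollary~\ref{thm:local-lipschitz-resampling}.

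\textbf{Step 1: numerator.} Write $\Delta\phi = -S^{-1}\sum_i (a_i/N)\,(h_i - p_i\sum_j h_j)$, absorbing the $1/N$ factor of the validation average into $a_i$. Assume as regularity conditions that $\|\bar\gamma\|$, $\|g_i\|$ and $\|h_i\|$ are bounded by constants independent of $N$. Then each summand is $O(1)$, and the term $p_i\sum_j h_j$ contributes at most $\sum_i p_i \cdot N = N$. This gives $\|\Delta\phi\|_2 \le C N / S$ up to the $N$-normalization. By Jensen, $\|\E[\Delta\phi]\|_2^2 \le \E\|\Delta\phi\|_2^2 \le C^2 N^{2}\,\E[S^{-2}]$. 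We will track the powers of $N$ so that the final ratio carries $N^4$. The remaining $N^2$ is absorbed from the $S^{-4}$ versus $S^{-2}$ normalization in Step 3, since $S\sim N\E[w]$. If the bookkeeping gives a different power, we will simply carry whatever power appears, because the statement is informal in $C_{\mathrm{SNR}}$.

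\textbf{Step 2: denominator via resampling.} Let $\widetilde{Z}_K=(\widetilde g_K,\widetilde w_K,\widetilde h_K)$ be an independent copy of $Z_K$, and let $\doublewidetilde{\Delta\phi}$ be the resampled gradient. We use the Efron--Stein / conditional variance decomposition
\[
\mathrm{Var}(\Delta\phi)\;\ge\;\E\big[\mathrm{Var}(\Delta\phi\mid Z_{-K})\big]\;=\;\tfrac12\,\E\big\|\doublewidetilde{\Delta\phi}-\Delta\phi\big\|_2^2 .
\]
On the event $E_{\mathrm{Lip}}$, Corollary~\ref{thm:local-lipschitz-resampling} gives $\|\doublewidetilde{\Delta\phi}-\Delta\phi\|_2 \ge K_3\sqrt{\alpha}\,G_{\text{gap}}\,S^{-2}\,|\widetilde w_K-w_K|$. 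Squaring this and restricting the expectation to $E_{\mathrm{Lip}}$ yields
\[
\mathrm{Var}(\Delta\phi)\ge \tfrac12 K_3^2\alpha\,G_{\text{gap}}^2\,\E\big[S^{-4}|\widetilde w_K-w_K|^2\mathds 1_{E_{\mathrm{Lip}}}\big].
\]
As a further regularity condition, we assume that on $E_{\mathrm{Lip}}$ the resampled change $|\widetilde w_K-w_K|$ is bounded below by a constant $c_w>0$. This is plausible because the corollary's event is defined with $\widetilde w_K\neq w_K$, and we can intersect with $\{|\widetilde w_K-w_K|\ge c_w\}$, which has positive probability. Under that assumption the factor pulls out of the expectation, leaving $\E[S^{-4}\mathds 1_{E_{\mathrm{Lip}}}]$.

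\textbf{Step 3: combine.} Dividing the Step 1 bound by the Step 2 bound gives the claimed form, with $C_{\mathrm{SNR}}=2C^2/(K_3^2\alpha c_w^2)$ times the $N$-normalization constants. This constant is independent of $N$ and $G_{\text{gap}}$, since $K_3$, $\alpha$, $c_w$ and $C$ are.

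\textbf{Main obstacle.} The delicate step is Step 2. First, $S$ itself depends on $w_K$, so $S$ and $\widetilde S$ differ; we must show $S^{-4}$ and $\widetilde S^{-4}$ are comparable, with ratio bounded by a constant. This holds because one weight changes $S$ by at most $\sup w$ while $S\ge N\delta$. Second, the Lipschitz lower bound must hold jointly with a nondegenerate $|\widetilde w_K-w_K|$. We would try to handle both by defining $E_{\mathrm{Lip}}$ to already include these conditions, at the cost of a constant.
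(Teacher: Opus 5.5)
Your skeleton matches the paper's. You bound $\|\E[\Delta\phi]\|_2^2\le\E\|\Delta\phi\|_2^2$ via bounded norms and Jensen. You lower-bound $\mathrm{Var}(\Delta\phi)\ge\E[\mathrm{Var}(\Delta\phi\mid\mathcal{G})]=\tfrac12\E\|\doublewidetilde{\Delta\phi}-\Delta\phi\|_2^2$ by resampling the dominant index and invoking Corollary~\ref{thm:local-lipschitz-resampling}.

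In the numerator you do better than the paper. Using $\sum_i p_i=1$ you get $\|\Delta\phi\|_2\le CN/S$, hence $C^2N^2\E[S^{-2}]$. The paper instead bounds each $\|h_i-p_ih_{\text{sum}}\|_2$ separately by $(N+1)H_{\max}$ and ends with $N^4$. Since $N^2\le N^4$, your bound already gives the stated numerator. Drop the remark that the ``remaining $N^2$ is absorbed from the $S^{-4}$ versus $S^{-2}$ normalization.'' The moments of $S$ appear explicitly in the bound and cannot absorb powers of $N$. Likewise, a larger power of $N$ could not be hidden in $C_{\mathrm{SNR}}$, which must be $N$-free.

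The real difference, and the weak point, is how you remove $|\widetilde w_K-w_K|^2$. The paper conditions on $\mathcal{G}$, pulls $S^{-4}\mathds{1}_{E_{\text{Lip}}}$ outside the conditional expectation, and uses $\E[(w_K-\widetilde w_K)^2\mid\mathcal{G}]=2\mathrm{Var}(w_K\mid\mathcal{G})=2\sigma_w^2$ (Assumption~\ref{ass:rem-var}). That is why $C_{\mathrm{SNR}}$ depends only on the one-sample quantity $\sigma_w^2$ and is manifestly independent of $N$. That factorization is itself loose, since $S$ and $E_{\text{Lip}}$ both depend on $w_K$ and $\widetilde w_K$. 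Your explicit floor $|\widetilde w_K-w_K|\ge c_w$ is the more transparent choice, but it does not prove the statement as written, for two reasons:
\begin{itemize}
\item You obtain the bound with $\mathds{1}_{E_{\text{Lip}}\cap\{|\widetilde w_K-w_K|\ge c_w\}}$. Its $S^{-4}$-weighted expectation is at most $\E[S^{-4}\mathds{1}_{E_{\text{Lip}}}]$, so your conclusion is weaker than the stated one.
\item Your claim that $c_w$ is independent of $N$ is unsupported. The corollary only produces a positive-probability event around an $N$-dependent configuration $Z^\star$, with $|\delta w|$ confined to a radius $r_2$ set by that configuration. On the full event $\{L_K\ge K_3\sqrt{\alpha}G_{\text{gap}}/S^2\}$, $|\delta w|$ can be arbitrarily small. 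Uniformity of $c_w$ in $N$ is therefore an extra assumption, not a consequence.
\end{itemize}
If you want the stated form, follow the $\sigma_w^2$ route and make the missing condition explicit. For example, assume that $S^{-4}\mathds{1}_{E_{\text{Lip}}}$ and $(w_K-\widetilde w_K)^2$ are nonnegatively correlated given $\mathcal{G}$.

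Finally, your ``main obstacle'' of comparing $S^{-4}$ with $\widetilde S^{-4}$ does not arise, because the corollary's bound involves only the original $S$. Your justification via $S\ge N\delta$ would also need a positive weight floor, which is not assumed for the sigmoid selection network.
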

The theorem indicates that a small $S$ and a large $G_{\text{gap}}$ degrade GSNR. A small $S$ is the implication of Theorem \ref{theorem:p-converge-to-spiky} and their practical existence is confirmed by Figure \ref{fig:weight_trends_bs}.

The theory also hints at an obvious solution: increasing the batch size $N$. Doing so has the effect of increasing $N^4$ and increasing $S$, since $\E[S] = N \E[w_i]$. This solution aligns well with empirical results of Figures \ref{fig:SRN_Batch},  \ref{fig:weight_trends_bs}, and \ref{fig:weight_trends_bs_2}. In particular, according to Theorem \ref{theorem:p-converge-to-spiky}, spiky $a_i$ should cause degeneration in data selection and a large $\max_i p_i$. However, Figures \ref{fig:weight_trends_bs} and \ref{fig:weight_trends_bs_2} show that this phenomenon is alleviated by a large batch size.


\section{Input Features for the Selection Network}\label{sec:features}
An effective GSNR ensures that we can learn the selection network. However, the network can only be effective at selecting data if it has good features that contain information about data quality. To this end, we devise a rich feature vector that encodes a wide range of signals related to generalization, including local, global, optimization dynamics, and class-indicator features. We assume the availability of a small validation set, whose distribution is similar to the true test, during training. Due to space constraints, we discuss the features briefly here and leave details to Appendix~\ref{app:input_feature_details}.



\emph{Local features} characterize the local geometry around each input $x_i$ through its cosine similarities to $K$ nearest neighbors (KNN) in the training and validation sets. Intuitively, if $x_i$ is very similar to many training data points, it may be redundant. Conversely, if it is far away from most training or validation instances, it may be an outlier that does not contribute to generalization. 

Formally, let $\mathcal{N}_K(x_i; \mathcal{A})$ denote the $K$ nearest neighbors of $x_i$ in the set $\mathcal{A}$.  We use the features $\left\{\cos(h_i, h_j)\mid j \in \mathcal{N}_K(x_i; \mathcal{A})\right\}$, where  $\mathcal{A}$ can be the current training batch, the validation batch, the training set, and the validation set. We use online encodings $h^{\mathrm{cls}}$ from the current classifier $f_\theta(\cdot)$ for batch-level KNN, and offline encodings $h^{\mathrm{gen}}$ from an image generation model for dataset-level KNN. Lastly, to identify potential semantic inconsistencies, we compute the label agreement~\cite{zhang2024lemon} among the $K$ nearest neighbors in $\mathcal{D}_{\mathrm{train}}$ using $h^{\mathrm{gen}}$, defined as $\frac{1}{K} \sum_{j \in \mathcal{N}_K(x_i; \mathcal{D}_{\mathrm{train}})} \mathbbm{1}[y_j = y_i]$, where $\mathbbm{1}[\cdot]$ is the indicator function.

\emph{Global features} are computed as the Euclidean distance and cosine similarity between the encoding of $x_i$ and its class centers in the training and the validation sets, respectively. These features reveal the typicality of $x_i$ within the class $y_i$. Intuitively, data points at the fringe of the source distribution, or that are prototypical in the target distribution, can be important for transfer learning.

Formally, let $\mathcal{A}_c$ denote the subset of samples with class $c$ in set $\mathcal{A}$. Using offline encodings $h^{\mathrm{gen}}$ from an image generation model, we compute the class centroid as
$\mu_{\mathcal{A}_c} = \frac{1}{|\mathcal{A}_c|}\sum_{x_j\in\mathcal{A}_c} h_j^{\mathrm{gen}}$.
For each sample $x_i$ with label $y_i$, we use its $\ell_2$ distance and cosine similarity to the corresponding class centroid,
$\left\{\|h_i^{\mathrm{gen}}-\mu_{\mathcal{A}^{y_i}}\|_2,\ \cos(h_i^{\mathrm{gen}},\mu_{\mathcal{A}^{y_i}})\right\}$,
as global features, where $\mathcal{A}$ can be either the training set or the validation set.


\emph{Optimization dynamics features} characterize how the data point interacts with the optimization process. We train a separate neural classifier without data selection to extract the following features: (i) the number of forgetting events \cite{toneva2019forgetting} to track longitudinal training stability, and (ii) the gradient norm and error vector norm \cite{paul2021el2n} to measure sample difficulty. 

\emph{Class-indicator features} provide explicit categorical context. We include learnable embeddings for the class label and for the data source (real vs.\ synthetic), enabling the selection network to distinguish class-specific and generator-related patterns.

\section{Experiments}

\begin{table*}[t]
\centering
\small
\caption{Main results. Hard selection discards samples before training, while soft selection reweights samples during training.}
\label{tab:tab_main_result}
\begin{tabular}{c|c|c|c|cccc|c}
\toprule
\multirow{2}*{\textbf{Method}} & \multirow{2}{*}{\textbf{WaterBirds}} & \multirow{2}{*}{\textbf{CelebA}} & \multirow{2}{*}{\textbf{Texture}}
& \multicolumn{4}{c|}{\textbf{PACS}}
&  \multirow{2}{*}{\textbf{Average}}  \\
&  &  & & Art & Cartoon & Photo & Sketch & \\
\midrule
No Selection & 70.13 & 78.52 & 24.44 & 82.76 & 85.23 & 60.08 & 76.62 & 68.26 \\
\midrule
\rowcolor{gray!20}
\multicolumn{9}{c}{\textit{Hard Selection Methods}} \\ 
Forgetting \cite{toneva2019forgetting} & 70.07 & 78.45 & 25.03 & 81.42 & 83.63 & 56.63 & 72.99 & 66.89 \textsubscript{\textcolor{red!90!black}{-1.37}} \\
GraNd \cite{paul2021el2n} & 67.82 & 81.01 & 24.06 & 81.79 & 84.93 & 59.52 & 73.90 & 67.58 \textsubscript{\textcolor{red!90!black}{-0.68}} \\
CLIP Similarity \cite{li2024semantic} & 69.96 & 78.00 & 25.17 & 83.00 & 84.67 & 58.87 & \underline{78.28} & 68.28 \textsubscript{\textcolor{green!90!black}{+0.02}} \\
Aux-Clf \cite{dunlap2023diversify} & 67.01 & 77.37 & 21.81 & 77.59 & 82.74 & 46.97 & 66.57 & 62.87 \textsubscript{\textcolor{red!90!black}{-5.39}} \\
Aux-Clf (Val) \cite{dunlap2023diversify} & 78.07 & 84.30 & 27.47 & 82.13 & 85.54 & 61.58 & 76.96 & \underline{70.86 \textsubscript{\textcolor{green!90!black}{+2.60}}} \\
NormSim \cite{wang2024cliploss} & 70.84 & 78.63 & 24.76 & 82.83 & 84.13 & 60.24 & 74.99 & 68.06 \textsubscript{\textcolor{red!90!black}{-0.20}} \\
\midrule
\rowcolor{gray!20}
\multicolumn{9}{c}{\textit{Soft Selection Methods}} \\
Forgetting \cite{toneva2019forgetting} & 69.88 & 78.63 & 24.27 & 81.68 & 85.39 & 58.23 & 73.72 & 67.40 \textsubscript{\textcolor{red!90!black}{-0.86}} \\
GraNd \cite{paul2021el2n} & 69.44 & 80.24 & 23.51 & 81.89 & 85.34 & 59.10 & 74.57 & 67.73 \textsubscript{\textcolor{red!90!black}{-0.53}} \\
CLIP Similarity \cite{li2024semantic} & 70.19 & 78.42 & 24.37 & 82.37 & 85.24 & 59.95 & 76.12 & 68.09 \textsubscript{\textcolor{red!90!black}{-0.17}} \\
Aux-Clf \cite{dunlap2023diversify} & 64.04 & 76.98 & 21.60 & 80.16 & 84.74 & 54.28 & 70.6 & 64.63 \textsubscript{\textcolor{red!90!black}{-3.63}} \\
Aux-Clf (Val) \cite{dunlap2023diversify} & \underline{78.91} & \underline{84.47} & 26.67 & 82.56 & \underline{85.88} & 63.20 & 69.13 & 70.12 \textsubscript{\textcolor{green!90!black}{+1.86}} \\ 
NormSim \cite{wang2024cliploss} & 70.22 & 79.05 & 24.48 & 82.36 & 85.05 & 61.09 & 67.66 & 67.13 \textsubscript{\textcolor{red!90!black}{-1.13}} \\
Importance Weight \cite{choi2021featurized} & 70.13 & 75.97 & 25.73 & 80.86 & 84.70 & 60.64 & 73.44 & 67.35 \textsubscript{\textcolor{red!90!black}{-0.91}} \\
MetaWeightNet \cite{shu2019mwn} & 72.24 & 76.04 & \underline{27.81} & \textbf{83.56} & 85.41 & \underline{64.16} & 72.68 & 68.84 \textsubscript{\textcolor{green!90!black}{+0.62}} \\
\textbf{Ours} & \textbf{81.81} & \textbf{85.01} & \textbf{29.38} & \underline{83.07} & \textbf{86.79} & \textbf{71.16} & \textbf{79.06} & \textbf{73.75 \textsubscript{\textcolor{green!90!black}{+5.49}}} \\
\bottomrule
\end{tabular}
\end{table*}

\textbf{Datasets.} We evaluate on four benchmarks: Waterbirds \cite{sagawa2020distributionally}, CelebA \cite{yuan2024not}, Texture \cite{geirhos2019imagenet}, and PACS \cite{Li2017dg}. PACS contains 4 domains: Art Painting, Cartoon, Photo, and Sketch. We perform pairwise transfer learning with a total of 12 pairs of domains. For each benchmark, the training split differs from the validation and test splits, which share the same target distribution. Following \citet{yuan2024not} and \citet{dunlap2023diversify}, we augment the training data with roughly equal number of synthetic images. Details are in Appendix \ref{app:dataset}.

\textbf{Baselines.} We compare against a diverse set of data selection methods spanning four categories: (i) training-dynamics heuristics, including \emph{Forgetting Events} \citep{toneva2019forgetting} and \emph{GraNd} \citep{paul2021el2n}. (ii) synthetic image selection heuristics, including \emph{CLIP Similarity} \citep{zhao2024ltgc, li2024semantic}, \emph{Auxiliary Classifier} (Aux-Clf) \citep{lampis2023bridging}, and a stronger variant of Aux-Clf, \textit{Aux-Clf (Val)}, which we trained on the validation set to provide a more reliable target-distribution alignment signal, (iii) domain adaptation methods, including \emph{NormSim} \citep{wang2024cliploss} and \emph{Importance Weighting} \citep{choi2021featurized}, (iv) a meta-learning method,  \emph{MetaWeightNet} \citep{shu2019mwn}. For every method except \emph{Importance Weighting} and \emph{MetaWeightNet}, we create a soft selection version, which uses the data score as weight, and a hard selection version, which thresholds the score into 0/1 weights.  
Details of these baselines are in Appendix \ref{app:baseline}.

\textbf{Implementation.} The classification network is ResNet-50, which is initialized with ImageNet-pretrained weights and optimized by SGD with momentum. The selection network is a three-layer MLP with hidden dimension 100 and sigmoid activation function, optimized with AdamW. We use a batch size of $N=1024$, and set $K=3$ for $K$ nearest neighbors in input feature computation. Data generation and classifier training follow the original settings in \citet{dunlap2023diversify, yuan2024not}. Further details are shown in Appendix~\ref{app:experiment_details}.

\begin{table*}[t]
\centering
\small
\caption{Ablation results showing the impact of batch size and input feature design on selection performance.}
\label{tab:tab_ablation}
\begin{tabular}{l|c|c|c|cccc|c}
\toprule
\multirow{2}{*}{\textbf{Method}} & \multirow{2}{*}{\textbf{WaterBirds}} & \multirow{2}{*}{\textbf{CelebA}} & \multirow{2}{*}{\textbf{Texture}} & \multicolumn{4}{c|}{\textbf{PACS}}
&  \multirow{2}{*}{\textbf{Average}}  \\
&  &  & & Art & Cartoon & Photo & Sketch & \\
\midrule
\textcolor{gray!70}{No Selection} &
\textcolor{gray!70}{70.13} &
\textcolor{gray!70}{78.52} &
\textcolor{gray!70}{24.44} &
\textcolor{gray!70}{82.76} &
\textcolor{gray!70}{85.23} &
\textcolor{gray!70}{60.08} &
\textcolor{gray!70}{76.62} &
\textcolor{gray!70}{68.26}\textsubscript{\textcolor{white}{+0.00}} \\
Ours ($N=1024$) & \underline{81.81} & \textbf{85.01} & \underline{29.38} & \textbf{83.07} & \textbf{86.79} & \textbf{71.16} & \textbf{79.06} & \textbf{73.75\textsubscript{\textcolor{white}{+0.00}}} \\
\midrule
\multicolumn{9}{c}{\textit{Decreasing Batch Size $N$ to ...}} \\
$N=256$ & 
81.64 & \underline{84.68} & 27.81 & \underline{82.86} & \textbf{86.79} & 67.32 & 77.66 & \underline{72.68 \textsubscript{\textcolor{red!90!black}{-1.07}}} \\
$N=64$ & \textbf{81.82} & 80.27 & 28.33 & 82.27 & 86.02 & 64.40 & 73.62 & 70.96 \textsubscript{\textcolor{red!90!black}{-2.79}} \\
\midrule
\multicolumn{9}{c}{\textit{Changing Input Features to ...}} \\
Raw Image & 78.78 & 73.63 & 24.31 & 80.53 & 85.30 & 60.44 & 74.38 & 68.20 \textsubscript{\textcolor{red!90!black}{-5.55}} \\
ResNet Features & 80.60 & 83.39 & \textbf{29.72} & 82.23 & \underline{86.19} & \underline{67.82} & 72.47 & 71.77 \textsubscript{\textcolor{red!90!black}{-1.95}} \\
Training Loss & 72.63 & 78.28 & 28.96 & 82.44 & 85.09 & 63.45 & \underline{78.10} & 69.85 \textsubscript{\textcolor{red!90!black}{-3.90}} \\
\bottomrule
\end{tabular}
\end{table*}

\subsection{Main Results}

Table~\ref{tab:tab_main_result} reports the performance of all methods across four benchmarks. We summarize the key observations below.

\textbf{Our method achieves consistently large gains.}
Our method attains the highest average accuracy of 73.75\%, improving upon No Selection by 5.49\% and the strongest baseline, hard selection with Aux-Clf (Val), by 2.89\%. 

\textbf{MetaWeightNet performs inconsistently.} MetaWeightNet is the most similar to our work, but it differs by (1) using a batch size of 64, (2) using online loss as the only input feature, and (3) does not normalize the output weights. It achieves good performance on Texture ($+3.37\%$) and PACS-Photo ($+4.08\%$) but not other domains. Overall, it only achieves a modest $0.62\%$ average improvement over No Selection. These results corroborates our argument that simplistic application of MTS often underperforms.

\textbf{Training dynamics methods and CLIP similarity do not improve performance.} Training-dynamics heuristics, including Forgetting Events and GraNd, underperform the No Selection baseline, with average accuracy drops of $0.53\%$-$1.37\%$. Similarly, CLIP Similarity yields only marginal changes between from $-0.17\%$ to $+0.02\%$ relative to No Selection.

\textbf{Auxiliary classifier–based selection exhibits sharply different behaviors depending on its training domain.} When trained on training data, Aux-Clf results in substantial performance drops of $3.63\%$-$5.39\%$. However, Aux-Clf (Val), trained on validation data, achieves improvements of up to $2.60\%$ under Hard Selection and is the best baseline. These results suggest that 
Aux-Clf is highly dependent on the domain it is trained on and may amplifying domain disparity. 


\textbf{Domain adaptation methods exhibit dataset-specific behavior.} For instance, \emph{NormSim} improves performance on photorealistic datasets such as WaterBirds and CelebA, but degrades accuracy by up to $8.96\%$ on PACS-Sketch. This discrepancy stems from its reliance on pretrained CLIP embeddings \citep{radford2021learning}, which are well-suited to natural images but less reliable for visually atypical domains such as sketches. Similarly, \emph{Importance Weight} yields uneven results, with a mean performance drop of $0.91\%$. This instability aligns with known challenges in density-ratio estimation, where limited overlap between source and target distributions can lead to high-variance estimates and unreliable reweighting.

\subsection{Ablation Experiments}

We conduct ablation experiments to analyze the impact of batch size and input feature design. For input features, we consider three alternatives: 
(1) \textit{Raw Image}, where the selection network operates directly on pixel values; 
(2) \textit{ResNet Features}, which use the online classifier’s penultimate-layer embeddings; and 
(3) \textit{Training Loss}, which relies solely on the scalar per-sample loss. We draw the following observations. The results are shown in Table~\ref{tab:tab_ablation}.

\textbf{Selection performance strongly depends on batch size.} Compared to our default setting ($N=1024$), reducing the batch size to 256 leads to an average performance degradation of $1.07\%$, which further increases to $2.79\%$ when $N=64$. This degradation is particularly pronounced on PACS (Photo and Sketch) and CelebA, where accuracy drops by up to $6.76\%$. The strong empirical correlation between performance and batch size supports our theoretical analysis, demonstrating that larger batch sizes are crucial for effective MTS.

\textbf{The proposed feature set outperforms ResNet features and the training loss feature.} Both ResNet Features and Training Loss lag behind the proposed feature design by 1.95\%-3.90\%. These results demonstrate the necessity to utilize informative features that correlate well with data quality. As our features include distributional information involving the relations between data points, they may not be easily inferred from the feature of a single data point. 


\textbf{Data selection works poorly on raw inputs.} The raw image features result in the largest performance degradation, with an average drop of 5.55\%. This is partially attributed to the shallow architecture of the selection network, which could not automatically learn high-quality representations from images directly. However, increasing the parameter count of the selection network would result in substantially increased computational overhead, since computing $\Delta\phi$ requires two computing and storing two gradient vectors of the same size as $\phi$. As a result, scaling to a much larger selection network can be difficult \cite{liu2018darts}.

Overall, these ablations demonstrate that both large batch sizes and informative feature representations are critical for stable and effective selection-network training.

\subsection{Few-shot Validation Data}

We evaluate our method in a few-shot validation-set setting, where each class in the validation set contains only 5 samples. As shown in Table~\ref{tab:few_shot}, the average performance drops by 3.92\% compared with the full-shot validation setting, highlighting the importance of validation data quantity for MTS. Nevertheless, even under this highly constrained setting, our method still outperforms No Selection by 3.72\% and the strongest baseline, Aux-Clf (Val), by 2.90\% on average.

\begin{table}[t]
\centering
\caption{Performance under the few-shot validation setting. In the 5-shot setting, each class in the validation set contains only 5 samples. Full-shot results are shown in italics for reference.}
\label{tab:few_shot}
\resizebox{\columnwidth}{!}{
\begin{tabular}{lccccc}
\toprule
Setting & Method & Waterbirds & CelebA & Texture & Avg. \\
\midrule
 & \textcolor{gray!70}{No Selection} & \textcolor{gray!70}{70.31} & \textcolor{gray!70}{78.52} & \textcolor{gray!70}{24.44} & \textcolor{gray!70}{57.76} \\
5-shot & MW-Net & 67.55 & 78.59 & \underline{24.89} & 57.01 \\
5-shot & Aux-Clf (Val) & \underline{72.78} & \underline{78.77} & 24.20 & \underline{58.58} \\
5-shot & Ours & \textbf{77.51} & \textbf{81.53} & \textbf{25.41} & \textbf{61.48} \\
\midrule
\textit{Full-shot} & \textit{Ours} & \textit{81.81} & \textit{85.01} & \textit{29.38} & \textit{65.40} \\
\bottomrule
\end{tabular}
}
\end{table}

\subsection{Generalization across Architectures}

To examine whether our method generalizes beyond a specific backbone architecture, we conduct experiments using ViT-Base-Patch32 pretrained on ImageNet-21K \footnote{\url{https://huggingface.co/google/vit-base-patch32-224-in21k}} as the visual backbone. As shown in Table~\ref{tab:vit_base_patch32}, our method achieves the best average performance among all methods. In particular, it improves over No Selection and MW-Net by 5.30\% and 5.76\% on average, respectively. These results suggest that our method is architecture-agnostic.

\begin{table}[t]
\centering
\caption{Results with a ViT-Base-Patch32 backbone pretrained on ImageNet-21K.}
\label{tab:vit_base_patch32}
\small
\begin{tabular}{lcccc}
\toprule
Method & Waterbirds & CelebA & Texture & Avg. \\
\midrule
No Selection & 76.93 & 82.79 & 39.17 & 66.30 \\
MW-Net       & 76.72 & 80.90 & 39.90 & 65.84 \\
Aux-Clf (Val)   & \underline{79.22} & \textbf{86.36} & \underline{44.34} & \underline{69.97} \\
Ours         & \textbf{82.19} & \underline{85.21} & \textbf{47.40} & \textbf{71.60} \\
\bottomrule
\end{tabular}
\end{table}

\subsection{Accuracy-memory Trade-off}
One limitation of a large batch is the the high memory footprint incurred. In Figure~\ref{fig:computation_tradeoff}, we visualize the computational trade-off between accuracy and memory usage, showing the average accuracy values from Tables \ref{tab:tab_main_result} and \ref{tab:tab_ablation}.  The memory usage is shown as the area of the circles.
As the batch size $N$ increases, classification accuracy improves at the cost of higher memory usage. At a moderate batch size of $N=256$, which fits into a single NVIDIA RTX A6000 GPU, our method improves over MW-Net by 3.84\%. 

We note there are computational techniques that increase batch sizes without increasing memory usage. For example, gradient accumulation\footnote{\url{https://huggingface.co/docs/transformers/grad_accumulation}} takes the average gradients from several backward passes for each parameter update. Other techniques such as mixed-precision training and gradient checkpointing may also reduce memory usage. We leave the exploration of those techniques as future work. 

\begin{figure}[t]
    \centering
    \includegraphics[width=0.9\columnwidth]{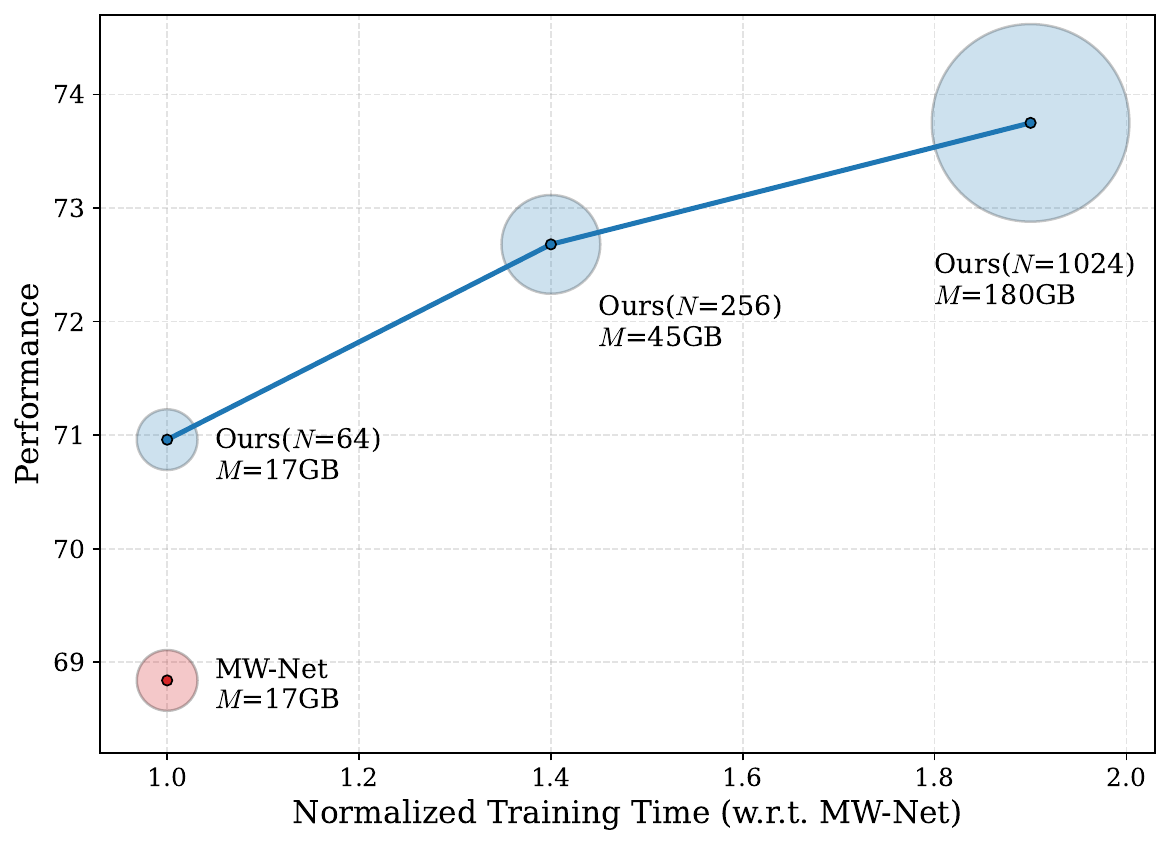}
    \caption{Accuracy-cost trade-off across different batch sizes $N$. The memory usage is visualized as the area of the circles. The accuracy values reported are the average accuracy from \ref{tab:tab_main_result} and \ref{tab:tab_ablation}.}
    \label{fig:computation_tradeoff}
\end{figure}

\section{Conclusion}
In this work, we analyze Meta-learning Training-data Selection (MTS) and identify two key obstacles that limit its practical effectiveness, including a low gradient signal-to-noise ratio (GSNR) and lack of informative features correlated to data quality. Theoretical analysis reveals the deep cause of the low GSNR and reveals that increasing the batch size provides a simple and effective remedy. In addition, we propose a set of informative distributional and training-dynamics features that correlate well with data usefulness. By jointly addressing these issues, our approach significantly improves the performance of MTS on synthetic data selection, achieving consistent gains across four datasets and fifteen settings, and outperforming both training without data selection and strong heuristic baselines.

\section*{Impact Statement}
This paper presents work whose goal is to advance the field of machine learning. There are many potential societal consequences of our work, none of which we feel must be specifically highlighted here. 

\section*{Acknowledgment}
This research is supported, in part, by the RIE2025
Industry Alignment Fund – Industry Collaboration
Projects (IAF-ICP) (Award I2301E0026), administered
by A*STAR, as well as supported by Alibaba
Group and NTU Singapore through Alibaba-NTU
Global e-Sustainability CorpLab (ANGEL). The
research is also partially funded by National Research Foundation
Fellowship (NRFF13-2021-0006), Singapore.

\bibliography{example_paper}
\bibliographystyle{icml2026}

\appendix
\onecolumn

\clearpage
\printappendixcontents
\clearpage

\apxcontentsopen


\appsection{Derivation of the Hypergradient Expressions}
\label{app:math-proof}
\subsection{Proof of Lemma \ref*{lemma-phi-gradient}}
\label{app:hypergrad-phi}
\begingroup
\renewcommand{\thetheorem}{\ref*{lemma-phi-gradient}} 
\begin{lemma} 
The update to $\phi$ is computed as 
\begin{equation}\tag{\ref*{eq:phi-update-lemma3.1}}
\phi_{t+1} = \phi_{t} - \eta_{\theta} \eta_\phi \Delta\phi_t ,
\end{equation}
where the hypergradient $\Delta\phi_t$ is
\begin{equation} \tag{\ref*{eq:final-phi-update}}
\Delta\phi :=
-\frac{1}{S}
\Bigg(\frac{1}{N}\sum_{i=1}^N \gamma_i (\theta_t')\Bigg)^{\!\top} 
\sum_{i=1}^N g_i(\theta_t)
\Bigg(
h_i(\phi_t)^{\top} - \frac{w_i}{S}\sum_{j=1}^N h_j(\phi_t)^{\!\top}
\Bigg),
\end{equation}

$\theta_t^\prime$ is the result of the one-step update to $\theta$ inside (\ref{eq:1-step-gradient-phi}),
\begin{equation} \tag{\ref*{eq:theta_t_prime}}
\theta_t^\prime = \theta_t -  \frac{\eta_{\theta}}{S} \sum_{i=1}^N w_i g_i(\theta_t),
\end{equation}
and $\eta_{\theta}$ and $\eta_\phi$ are learning rates.
\end{lemma}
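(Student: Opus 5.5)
The plan is a direct chain-rule computation, treating $\theta_t$ as fixed and regarding $\theta_t'$ as a function of $\phi_t$ only through the weights $w_i = s_{\phi_t}(x_i)$. First, I differentiate the training loss with respect to $\theta$ to get $\frac{\partial}{\partial\theta_t}\mathcal{L}_{\text{train}} = \frac{1}{S}\sum_i w_i g_i(\theta_t)$. Substituting this into the look-ahead step gives \eqref{eq:theta_t_prime}. By the chain rule,
\begin{equation*}
\frac{d}{d\phi_t}\mathcal{L}_{\text{val}}(\theta_t') = \Big(\frac{\partial \mathcal{L}_{\text{val}}}{\partial \theta}(\theta_t')\Big)^{\!\top} \frac{\partial \theta_t'}{\partial \phi_t},
\end{equation*}
and the first factor equals $\frac{1}{N}\sum_i \gamma_i(\theta_t')$ directly from \eqref{eq:val-loss}.

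The main step is the Jacobian $\partial\theta_t'/\partial\phi_t = -\eta_\theta \,\partial\big(\sum_i p_i g_i(\theta_t)\big)/\partial\phi_t$, where $p_i = w_i/S$ and the $g_i(\theta_t)$ do not depend on $\phi_t$. I apply the quotient rule to $p_i$. Using $\partial w_i/\partial\phi = h_i^\top$ (row-vector convention) and $\partial S/\partial\phi = \sum_j h_j^\top$, this gives
\begin{equation*}
\frac{\partial p_i}{\partial \phi} = \frac{1}{S}\Big(h_i^\top - \frac{w_i}{S}\sum_j h_j^\top\Big).
\end{equation*}
Hence $\partial\theta_t'/\partial\phi_t = -\frac{\eta_\theta}{S}\sum_i g_i(\theta_t)\big(h_i^\top - \frac{w_i}{S}\sum_j h_j^\top\big)$, which is an outer-product sum of dimension $\dim\theta \times \dim\phi$.

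Combining the two factors, the result carries an overall factor of $\eta_\theta$. Dividing by $\eta_\theta$ to form $\Delta\phi_t$ yields exactly \eqref{eq:final-phi-update}. Plugging $\Delta\phi_t$ back into \eqref{eq:1-step-gradient-phi} then gives \eqref{eq:phi-update-lemma3.1}.

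I do not expect a real obstacle. The only care needed is bookkeeping: keeping the transposes and outer-product shapes consistent so that $\Delta\phi$ comes out as a row vector (transposed as needed), and noting that no derivative flows through $\theta_t$ itself. $\theta_t$ is held fixed, and the validation gradients are evaluated at $\theta_t'$, so no second-order terms in $\ell$ appear.
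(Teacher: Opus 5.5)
Your proposal is correct and follows essentially the same route as the paper. Both compute $\nabla_\theta\mathcal{L}_{\text{train}}=\frac{1}{S}\sum_i w_i g_i(\theta_t)$, differentiate it in $\phi_t$ holding $\theta_t$ fixed, contract with the validation gradient at $\theta_t'$ via the chain rule, and factor out $\eta_\theta$. The only cosmetic difference is that you apply the quotient rule to each $p_i$, while the paper differentiates the whole sum and then regroups, which is the same computation.
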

\endgroup

\emph{Proof.} We make explicit the dependence of $w_i$ and $S$ on $\phi$:
\begin{equation}
  w_i(\phi) := w(x_i,\phi), \quad
  \ell_i(\theta) := \ell\bigl(f(x_i,\theta), y_i\bigr), \quad
  S(\phi) := \sum_{i=1}^N w_i(\phi).
\end{equation}
The training and validation losses are
\begin{align}
\mathcal{L}_{\text{train}}(\theta,\phi)
  &:= \frac{1}{S(\phi)} \sum_{i=1}^N w_i(\phi)\,\ell_i(\theta),
  \label{eq:training-loss-appendix}\\
\mathcal{L}_{\text{val}}(\theta)
  &:= \frac{1}{N}\sum_{i=1}^N \ell\bigl(f(x_i^{\mathrm v},\theta),y_i^{\mathrm v}\bigr).
\end{align}

Our goal is to compute the total derivative
$\mathrm{d}\mathcal{L}_{\text{val}}(\theta_t')/\mathrm{d}\phi_t$, where $\theta_t'$ is obtained by a single gradient step on the training loss:
\begin{equation}
  \theta_t' = \theta_t - \eta_{\theta} \,\nabla_\theta \mathcal{L}_{\text{train}}(\theta_t,\phi_t),
\end{equation}
with learning rate $\eta_{\theta}>0$.

For notational convenience, define
\begin{align}
  g_i(\theta_t) &:= \nabla_\theta \ell_i(\theta_t) 
    = \nabla_\theta \ell\bigl(f(x_i,\theta_t), y_i\bigr), \\
  h_i(\phi_t) &:= \nabla_\phi w_i(\phi_t) 
    = \nabla_\phi w(x_i,\phi_t) , \\
  g_i^{\mathrm v}(\theta_t) &:= \nabla_\theta \ell\bigl(f(x_i^{\mathrm v},\theta_t),y_i^{\mathrm v}\bigr),\\
  g^{\mathrm v}(\theta_t) &:= \nabla_\theta \mathcal{L}_{\text{val}}(\theta_t)
  = \frac{1}{N}\sum_{i=1}^N g_i^{\mathrm v}(\theta_t).
\end{align}
We treat gradients as column vectors.

In \eqref{eq:training-loss-appendix}, only $\ell_i(\theta)$ depends on $\theta$, hence
\begin{equation}
  \nabla_{\theta} \mathcal{L}_{\text{train}}(\theta_t,\phi_t)
  = \frac{1}{S(\phi_t)} \sum_{i=1}^N w_i(\phi_t)\, g_i(\theta_t).
  \label{eq:train-grad-theta}
\end{equation}

Differentiating \eqref{eq:train-grad-theta} w.r.t.\ $\phi_t$, we obtain
\begin{align}
  \frac{\partial}{\partial \phi_t}
  \bigl(\nabla_{\theta} \mathcal{L}_{\text{train}}(\theta_t,\phi_t)\bigr)
  &=
  \frac{1}{S(\phi_t)} \sum_{i=1}^N g_i(\theta_t)\, h_i(\phi_t)^\top
  -
  \frac{1}{S(\phi_t)^2}
  \Bigl(\sum_{i=1}^N w_i(\phi_t)\, g_i(\theta_t)\Bigr)
  \Bigl(\sum_{j=1}^N h_j(\phi_t)\Bigr)^\top \notag\\
  &=
  \frac{1}{S(\phi_t)} \sum_{i=1}^N g_i(\theta_t)
  \Bigl(
    h_i(\phi_t)
    -
    \frac{w_i(\phi_t)}{S(\phi_t)}\sum_{j=1}^N h_j(\phi_t)
  \Bigr)^\top.
  \label{eq:train-mixed-deriv}
\end{align}

The validation loss after the inner update is
\begin{equation}
  \mathcal{L}_{\text{val}}(\theta_t') = \mathcal{L}_{\text{val}}\bigl(\theta_t'(\phi_t)\bigr),
  \qquad
  \theta_t'(\phi_t) = \theta_t - \eta_{\theta} \,\nabla_{\theta_t}\mathcal{L}_{\text{train}}(\theta_t,\phi_t).
\end{equation}

By the chain rule,
\begin{equation}
  \frac{\mathrm{d}}{\mathrm{d}\phi_t}\mathcal{L}_{\text{val}}(\theta_t')
  =
  \bigl(g^{\mathrm v}(\theta_t')\bigr)^\top
  \frac{\mathrm{d}\theta_t'}{\mathrm{d}\phi_t},
  \label{eq:outer-chain-rule}
\end{equation}
and since $\theta_t'(\phi_t)=\theta_t-\eta_{\theta}\nabla_{\theta}\mathcal{L}_{\text{train}}(\theta_t,\phi_t)$,
\begin{equation}
  \frac{\mathrm{d}\theta_t'}{\mathrm{d}\phi_t}
  = -\eta_{\theta} \,\frac{\partial}{\partial \phi_t}
  \bigl(\nabla_{\theta} \mathcal{L}_{\text{train}}(\theta_t,\phi_t)\bigr).
\end{equation}

Substituting into \eqref{eq:outer-chain-rule} and using \eqref{eq:train-mixed-deriv} gives
\begin{align} \frac{\mathrm{d}}{\mathrm{d}\phi_t} \mathcal{L}_{\text{val}}(\theta_t') 
&= -\eta_{\theta} 
\bigl(g^{\mathrm{v}}(\theta_t')\bigr)^\top 
\left[ 
  \frac{1}{S(\phi_t)} \sum_{i=1}^N g_i(\theta_t)
  \Bigl(
    h_i(\phi_t)
    -
    \frac{w_i(\phi_t)}{S(\phi_t)}\sum_{j=1}^N h_j(\phi_t)
  \Bigr)^\top
\right]. 
\label{eq:outer-deriv-matrix} 
\end{align}

Finally, writing \(g^{\mathrm v}(\theta_t')=\frac1N\sum_{i=1}^N g_i^{\mathrm v}(\theta_t')\) yields the stated
expression for \(\Delta\phi\). \qed

\subsection{Hypergradient of $\mathcal{L}_{\text{val}}$ w.r.t. $w_i$}
\label{sec:hypergradient_w_i}
We consider the dependence of the post-update validation loss on the individual
training weight $w_i$. Recall that the training objective is
\begin{equation}
\mathcal{L}_{\text{train}}(\theta, w) \;=\; \frac{1}{S} \sum_{k} w_k \,\ell_k(\theta),
\qquad
S \;=\; \sum_k w_k,
\end{equation}
where $\ell_k(\theta) = \ell(f(x_k,\theta), y_k)$ and $w_k = w(x_k,\phi)$.
Let $g_k := \nabla_\theta \ell_k(\theta)$ denote the per-example training
gradients. A single gradient step on $\theta$ with learning rate $\eta_{\theta}$ yields
\begin{equation}
\theta' \;=\; \theta \;-\; \eta_{\theta} \,\nabla_\theta \mathcal{L}_{\text{train}}(\theta, w)
\;=\; \theta \;-\; \eta_{\theta} \,\frac{1}{S}\sum_k w_k g_k.
\end{equation}
We evaluate the validation loss at $\theta'$,
\begin{equation}
\mathcal{L}_{\text{val}}(\theta') \;=\; \frac{1}{N} \sum_{j=1}^N \ell(f(x_j,\theta'),y_j),
\end{equation}
and denote its gradient by
\begin{equation}
\gamma' \;:=\; \nabla_\theta \mathcal{L}_{\text{val}}(\theta').
\end{equation}
Using the chain rule, the total derivative of $\mathcal{L}_{\text{val}}$ with respect to
an individual weight $w_i$ is
\begin{equation}
\frac{d\mathcal{L}_{\text{val}}(\theta^\prime)}{dw_i}
\;=\;
\left(\frac{\partial \mathcal{L}_{\text{val}}}{\partial \theta'}\right)^\top
\frac{\partial \theta'}{\partial w_i}
\;=\;
(\gamma')^\top \frac{\partial \theta'}{\partial w_i}.
\end{equation}
From the update rule for $\theta'$,
\begin{equation}
\theta' \;=\; \theta - \eta_{\theta} \frac{1}{S}\sum_k w_k g_k
\quad\Rightarrow\quad
\frac{\partial \theta'}{\partial w_i}
\;=\;
-\eta_{\theta} \,\frac{\partial}{\partial w_i}
\left(\frac{1}{S}\sum_k w_k g_k\right).
\end{equation}

\begin{equation}
\frac{\partial}{\partial w_i}
\left(\frac{1}{S}\sum_k w_k g_k\right)
\;=\;
\frac{1}{S} \frac{\partial \sum_k w_k g_k}{\partial w_i} -
\frac{\sum_k w_k g_k\,}{S^2}\frac{\partial S}{\partial w_i}
\;=\;
\frac{S g_i - \sum_k w_k g_k}{S^2},
\end{equation}
as $\frac{\partial}{\partial w_i} \sum_k w_k g_k = g_i$ and $\frac{\partial S}{\partial w_i} = 1$.
Hence,
\begin{equation}
\frac{\partial \theta'}{\partial w_i}
\;=\;
-\eta_{\theta} \,\frac{S g_i - \sum_k w_k g_k}{S^2},
\end{equation}
\begin{equation}
\frac{d\mathcal{L}_{\text{val}}}{dw_i}
\;=\;
(\gamma')^\top \left(-\eta_{\theta} \,\frac{S g_i - \sum_k w_k g_k}{S^2}\right)
\;=\;
-\eta_{\theta} \frac{1}{S} (\gamma')^\top\left[
 g_i
-
\frac{1}{S} \sum_k w_k g_k
\right].
\end{equation}
Since $p_i = w_i / S$, 
\begin{equation}
\frac{d\mathcal{L}_{\text{val}}}{dw_i}
\;=\;
-\eta_{\theta} \frac{1}{S} (\gamma')^\top\left[
 g_i
-\sum_k p_k g_k
\right].
\end{equation}

\appsection{Theorem \ref*{theorem:p-converge-to-spiky} on the Dynamics of Normalized Data Weights $p_i$}
\label{app:w-dynamics-proof}
\noindent \textbf{Problem Setup.} 
Let $N \ge 2$ and let $a_1,\dots,a_N \in \mathbb{R}$ be fixed constants. 
Assume there is a unique dominant index $K \in \{1,\dots,N\}$ with
\begin{equation}
  a_K > a_i \quad \text{for all } i \neq K.
\end{equation}
Let $t \ge 0$ denote the continuous time variable. 
Consider continuous functions $w_i(t) \in [\delta,\infty)$, $i=1,\dots,N$, where $\delta \ge 0$. $w_i(t)$ satisfy the following ``clamped'' dynamics.

Define, for all $t$, 
\begin{equation}
 S(t) := \sum_{j=1}^N w_j(t) > 0, \qquad
  p_i(t) := \frac{w_i(t)}{S(t)},
  \qquad
  \bar a(t) := \sum_{j=1}^N p_j(t)a_j, \quad
\dot{w}_i(t) = \dfrac{1}{S(t)}\bigl(a_i - \bar a(t)\bigr)
\end{equation}
For all $t \ge 0$ and every $i \in \{1,\dots,N\}$,
\begin{equation}\label{eq:clamped-ode}
  \frac{dw_i}{dt}(t)
  =
  \begin{cases}
    \dot w_i(t),
    & \text{if } w_i(t) > \delta \text{ or }  \dot w_i(t) > 0,\\[6pt]
    0, & \text{if } w_i(t) = \delta \text{ and }  \dot w_i(t) < 0.
  \end{cases}
\end{equation}
Equivalently, $\frac{dw_i}{dt}(t) = \dot w_i(t)$ if and only if $i \in A(t)$, where $A(t)$ is the set of active indices:
\begin{equation}
  A(t) := \Bigl\{ i : w_i(t) > \delta
     \ \text{or}\ \bigl(w_i(t) = \delta \ \text{and}\ a_i > \bar a(t)\bigr)\Bigr\}.
\end{equation}
In plain English, each $w_i$ follows the original ODE
$\frac{1}{S}(a_i-\bar a)$ whenever it is not being pushed below the threshold $\delta$. If $w_i(t) = \delta$ and the ODE pushes it higher, it is allowed to leave the floor. Otherwise, $w_i(t)$ is clamped at $w_i(t)=\delta$. 

We assume $\forall i, w_i(0) > \delta$ so that $S(0) > 0$.

\medskip

{\renewcommand{\thetheorem}{\ref*{theorem:p-converge-to-spiky}}
\begin{theorem}[Convergence of normalized weights with lower bound]\label{thm:p-convergence-with-floor} With the problem defined above, the following hold.
\begin{enumerate}
  \item[(1)] (\emph{Convergence of $p$})
  For any $\delta \ge 0$, the normalized weights converge to a one-hot
  vector with 1 at the dominant index:
  \begin{equation}
    \lim_{t\to\infty} p_K(t) = 1,
    \qquad
    \lim_{t\to\infty} p_i(t) = 0
    \quad \text{for all } i\neq K.
  \end{equation}

  \item[(2)] (\emph{Finite-time convergence when $\delta = 0$})
  If $\delta = 0$, there exists a finite time $T_* \ge 0$ such that
  \begin{equation}
    p_K(t) = 1
    \quad\text{and}\quad
    p_i(t) = 0 \quad (i\neq K)
    \qquad\text{for all } t \ge T_*.
  \end{equation}
  In other words, the vector $p(t)$ becomes exactly one-hot in finite time.

  \item[(3)] (\emph{Polynomial-rate convergence when $\delta > 0$})
  If $\delta > 0$, then there exist constants
  $T_0, C_1, C_2 > 0$ (depending on $a_i$, $\delta$, and $w_i(0)$)
  such that
  \begin{equation}
    1 - p_K(t) \;\le\; \frac{C_1}{(t - T_0)^{1/3}}
    \quad\text{for all } t \ge T_0.
  \end{equation}
  In particular, for every sufficiently small $\epsilon \in (0,1)$ there
  exists a time $T_\epsilon \le T_0 + C_2\,\epsilon^{-3}$ such that
  \begin{equation}
    p_K(t) \ge 1 - \epsilon
    \qquad\text{for all } t \ge T_\epsilon.
  \end{equation}
\end{enumerate}
\end{theorem}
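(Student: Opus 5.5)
The plan is to find a Lyapunov-type quantity that keeps the weights bounded, use the gap $m := a_K - \max_{i\neq K} a_i > 0$ to force the non-dominant weights down to the floor, and then read off the rate from a scalar ODE for $S$ (or $w_K$).

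\emph{Step 1 (monotonicity and a conserved quantity).} Since $\sum_j w_j(a_j-\bar a) = S\sum_j p_j(a_j-\bar a)=0$, the unclamped flow conserves $Q(t):=\sum_i w_i(t)^2$, because $\tfrac{d}{dt}Q = \tfrac{2}{S}\sum_i w_i(a_i-\bar a)=0$. With clamping, an index $i$ clamped at $w_i=\delta$ simply drops its term. This gives
\begin{equation*}
\tfrac{d}{dt}Q = -\tfrac{2\delta}{S}\sum_{i\ \text{clamped}}(a_i-\bar a) \;\ge 0,
\qquad
\tfrac{d}{dt}Q \le \tfrac{2\delta N (a_K-\min_i a_i)}{S}.
\end{equation*}
When $\delta=0$, $Q$ is exactly conserved, so all $w_i$ and $S$ are bounded. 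Two further facts hold throughout. First, $a_K-\bar a=\sum_{i\ne K}p_i(a_K-a_i)\ge m(1-p_K)\ge 0$, so $w_K$ is nondecreasing and $w_K\ge w_K(0)>\delta$. Second, $S\ge w_K(0)>0$.

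\emph{Step 2 (elimination of non-dominant indices).} For $i\ne K$, I track $D_i := w_K-w_i$. While $i$ is active, $\dot D_i=(a_K-a_i)/S\ge m/S$. While $i$ is clamped, $w_i=\delta$ is frozen.

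For $\delta=0$, $S\le\sqrt{NQ(0)}$, so $\dot D_i\ge m/\sqrt{NQ(0)}$ whenever $i$ is active with $w_i>0$. Since $w_K\le\sqrt{Q(0)}$, the quantity $D_i$ cannot grow forever, so $w_i$ must hit $0$ within a uniformly bounded time. Once $w_i=0$ we have $p_i=0$, and $i$ can only reactivate if $a_i>\bar a$.

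I would process the indices in increasing order of $a_i$. I claim that once the indices with smaller $a$ are at $0$, $\bar a$ is a convex combination of the remaining $a_j$, one of which is $a_K$. I then need to show that $\bar a$ eventually exceeds the current smallest remaining $a_i$ and stays above it. This yields finite-time extinction of every $i\ne K$, hence $p_K=1$ for $t\ge T_*$, which is part (2).

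For part (1) with $\delta>0$, the same argument shows that each non-dominant $w_i$ is eventually pinned at $\delta$. It is enough to show $S\to\infty$, since then $1-p_K=(N-1)\delta/S\to0$ once all $i\ne K$ are clamped.

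\emph{Step 3 (rate for $\delta>0$).} After a time $T_0$ at which all $i\ne K$ are clamped, $S=w_K+(N-1)\delta$, and
\begin{equation*}
\dot S=\dot w_K=\frac{a_K-\bar a}{S},
\qquad
a_K-\bar a=\sum_{i\ne K}\frac{\delta}{S}(a_K-a_i)\ge \frac{(N-1)\delta m}{S}.
\end{equation*}
Hence $\dot S\ge c/S^2$ with $c=(N-1)\delta m$. Integrating gives $S(t)^3\ge 3c(t-T_0)$, and therefore
\begin{equation*}
1-p_K=\frac{(N-1)\delta}{S}\le \frac{C_1}{(t-T_0)^{1/3}}.
\end{equation*}
Requiring $C_1(t-T_0)^{-1/3}\le\epsilon$ gives $T_\epsilon\le T_0+C_2\epsilon^{-3}$. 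Because $p_K$ is then monotone in $S$ and $S$ is increasing, the bound holds for all $t\ge T_\epsilon$.

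\emph{Main obstacle.} The delicate step is Step 2: showing that each non-dominant index is clamped permanently rather than repeatedly leaving the floor, which requires $\bar a>a_i$ for all $i\ne K$ eventually. My first attempt is the induction on the ordering of the $a_i$ described above. In it I would use that the active non-dominant indices lose weight relative to $w_K$ at rate at least $m/S$. In the $\delta>0$ case I would combine this with the slow growth of $Q$, namely $\tfrac{d}{dt}Q=O(1/S)$, so that $S$ grows at most like $t^{1/2}$ and $\int m/S\,dt=\infty$ still holds. Together these would push $p_K$ close enough to $1$ that $\bar a>a_J$, where $J$ is the second-best index. I expect the chattering at the floor (the Filippov-type switching in the clamped dynamics) to need a careful but routine case analysis.
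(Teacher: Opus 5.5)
The gap is the step you flag as the main obstacle. Nothing in the proposal shows that, for each $i\neq K$, $\bar a$ eventually exceeds $a_i$ and never drops back. Without this, your $D_i$ argument for $\delta=0$ shows only that $w_i$ reaches $0$ at least once, not that it stays there. The premise of your Step~3 (all $i\neq K$ permanently pinned at $\delta$) is also unsupported, so (1) for $\delta>0$ and (3) remain open. The mechanism you suggest for $\delta>0$ does not close this. $D_i=w_K-w_i\to\infty$ controls a difference, not the ratio $w_i/w_K$, so with $w_K$ unbounded it cannot force $p_K$ close to $1$. Moreover, with $S\lesssim t^{1/2}$, both $\int m/S\,dt$ and your a priori bound $w_K\le\sqrt{Q}$ are of order $t^{1/2}$. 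So even the ``$D_i$ cannot grow forever'' contradiction no longer goes through.

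The missing idea, which is how the paper proceeds, is that $\bar a$ is itself monotone. Differentiating $\bar a=S^{-1}\sum_i w_i a_i$, with only active indices moving, gives
\begin{equation*}
\frac{d\bar a}{dt}=\frac{1}{S^2}\sum_{i\in A(t)}\bigl(a_i-\bar a\bigr)^2\;\ge\;0 .
\end{equation*}
Hence once $\bar a\ge a_i$ it stays so. Any index that reaches the floor after that time is clamped forever, so no chattering analysis is needed. Since $K$ is always active, $V:=a_K-\bar a$ satisfies $\dot V\le -V^2/S^2$. Therefore $1/V(t)\ge 1/V(0)+\int_0^t S^{-2}\,ds$, and the integral diverges by your own bound $S^2\le NQ\le N(Q(0)+ct)$. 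Thus $\bar a\to a_K$, and (1) follows from $1-p_K\le V/m$. Once $\bar a\ge (a_i+a_K)/2$, which then holds permanently, each $w_i$ falls at rate at least $(a_K-a_i)/(2S)$, and $\int S^{-1}dt=\infty$. This gives (2) and the pinning needed in your Step~3.

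Your increasing-order induction can also be closed directly. Once all indices with smaller $a$ sit at the floor,
\begin{equation*}
\bar a-a_i\ge S^{-1}\bigl(w_K(a_K-a_i)-\delta\sum_{j:\,a_j<a_i}(a_i-a_j)\bigr).
\end{equation*}
This is positive when $\delta=0$. When $\delta>0$ it becomes permanently positive, because $\dot w_K\ge m(N-1)\delta/S^2$ together with $\int S^{-2}dt=\infty$ forces $w_K\uparrow\infty$.

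The rest is sound. Conservation of $Q$ for $\delta=0$ is correct and gives a sharper bound than the paper's $S^2\le C_1+C_2t$. Step~3 matches the paper's exact integration, with $\delta\sum_{i\neq K}(a_K-a_i)$ replaced by the smaller constant $(N-1)\delta m$.
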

}

\begin{proof}
Throughout the proof we write $S(t) := \sum_i w_i(t)$,
$p_i(t) := w_i(t)/S(t)$, and $\bar a(t) := \sum_j p_j(t)a_j$.
We denote $a_{\min} := \min_i a_i$ and $a_{\max} := a_K$. $\bar a(t)$ is a convex combination of $a_i$, so $a_{\min} \le \bar a(t) \le a_{\max}$.

Since $w_K(0) > \delta$ and, whenever $w_K(t)>\delta$, we have
\begin{equation}
  \frac{dw_K}{dt}(t) = \frac{a_K - \bar a(t)}{S(t)} \ge 0
  \quad\text{(because $\bar a(t) \le a_K$),}
\end{equation}
it follows that $w_K(t)$ is nondecreasing and
\begin{equation}
  w_K(t) \ge w_K(0) > \delta
  \quad\Rightarrow\quad
  S(t) \ge w_K(t) > 0
\end{equation}
for all $t\ge 0$. Thus $S(t)$ and $\bar a(t)$ are well-defined for all $t$.

\medskip\noindent
\textbf{Step 1. Monotonicity and convergence of $\bar a(t)$.}

By definition,
\begin{equation}
  \bar a(t) = \frac{1}{S(t)}\sum_{i=1}^N w_i(t)a_i,
\end{equation}
so $\bar a(t)$ is always a convex combination of the $a_i$ and hence
\begin{equation}
  a_{\min} \le \bar a(t) \le a_{\max} = a_K
  \quad\text{for all } t\ge 0.
\end{equation}

Differentiating and using \eqref{eq:clamped-ode}, we obtain
\begin{equation}
  \frac{d\bar a}{dt}
  = \frac{1}{S}\sum_i a_i \frac{dw_i}{dt}
    - \frac{\bar a}{S}\frac{dS}{dt}.
\end{equation}
Only indices in $A(t)$ have nonzero derivatives, so
\begin{equation}
  \sum_i a_i\frac{dw_i}{dt}
  = \sum_{i\in A(t)} a_i\,\frac{1}{S}(a_i - \bar a),
\qquad
  \frac{dS}{dt}
  = \sum_{i\in A(t)} \frac{1}{S}(a_i - \bar a).
\end{equation}
Therefore
\begin{equation}
  \frac{d\bar a}{dt}
  = \frac{1}{S^2}\sum_{i\in A(t)} a_i(a_i - \bar a)
    - \frac{\bar a}{S^2}\sum_{i\in A(t)}(a_i - \bar a).
\end{equation}
For each $i\in A(t)$,
\(
  a_i(a_i-\bar a) - \bar a(a_i-\bar a) = (a_i - \bar a)^2.
\)
Hence
\begin{equation}\label{eq:dbaradt}
  \frac{d\bar a}{dt}
  = \frac{1}{S(t)^2}\sum_{i\in A(t)} (a_i - \bar a(t))^2 \;\ge\; 0.
\end{equation}
Thus $\bar a(t)$ is nondecreasing and bounded above by $a_K$,
so there is a limit
\begin{equation}
  \bar a_\infty := \lim_{t\to\infty} \bar a(t) \in [a_{\min}, a_K].
\end{equation}

\medskip\noindent
\textbf{Step 2. A linear growth bound on $S^2(t)$ and divergence of $\displaystyle\int \frac{dt}{S^2}$ and $\displaystyle\int \frac{dt}{S}$.}
\nopagebreak \\
From \eqref{eq:clamped-ode},
\begin{equation}
  \frac{dS}{dt}
  = \sum_{i\in A(t)} \frac{1}{S}(a_i - \bar a),
\end{equation}
hence
\begin{equation}\label{eq:dS2dt-general}
  \frac{d(S^2)}{dt}
  = 2S\frac{dS}{dt}
  = 2\sum_{i\in A(t)} (a_i - \bar a(t)).
\end{equation}
We have $|a_i - \bar a(t)| \le a_{\max} - a_{\min}$ and $|A(t)| \le N$,
so
\begin{equation}
  \left|\frac{d(S^2)}{dt}\right|
  \le 2N\,(a_{\max} - a_{\min}).
\end{equation}
Define $C_1 := S^2(0)$ and $C_2:=2N\,(a_{\max} - a_{\min}) > 0$. By integrating both sides of $\frac{d(S^2)}{dt} \le 2N\,(a_{\max} - a_{\min})$:
\begin{equation}
  S^2(t)
  \le S^2(0) + C_2 t
  = C_1 + C_2 t
  \quad\text{for all } t\ge 0.
\end{equation}
Thus
\begin{equation}
\label{eq:St-upper-bound}
  S(t) \le \sqrt{C_1 + C_2 t}
  \quad\Rightarrow\quad
  \frac{1}{S(t)^2} \ge \frac{1}{C_1 + C_2 t},
  \quad
  \frac{1}{S(t)} \ge \frac{1}{\sqrt{C_1 + C_2 t}}.
\end{equation}
Consequently
\begin{equation}\label{eq:int-1-over-S2-div}
  \int_0^\infty \frac{dt}{S(t)^2}
  \;\ge\; \int_0^\infty \frac{dt}{C_1 + C_2 t} = \infty,
\end{equation}
and similarly
\begin{equation}\label{eq:int-1-over-S-div}
  \int_0^\infty \frac{dt}{S(t)}
  \;\ge\; \int_0^\infty \frac{dt}{\sqrt{C_1 + C_2 t}} = \infty.
\end{equation}

\medskip\noindent
\textbf{Step 3. Convergence of $\bar a(t)$ to $a_K$.}

Define $V(t) := a_K - \bar a(t) \ge 0$. Since $\forall i, w_i(t) > \delta$, $\bar{a}(t) < a_K$, and $V(0) > 0$. 

We claim that $\bar a(t)$ converges to $a_K$ as $t\to\infty$.
There are two cases.

\smallskip\emph{Case A: $V(t)$ hits zero in finite time.}
Assume there exists $T^* < \infty$ with $V(T^*) = 0$, i.e.
$\bar a(T^*) = a_K$. Since $\bar a(t)$ is nondecreasing by \eqref{eq:dbaradt} and bounded
above by $a_K$, we must have
\begin{equation}
  \bar a(t) = a_K
  \quad\text{for all } t \ge T^*.
\end{equation}
Hence $\lim_{t\to\infty} \bar a(t) = a_K$ in this case.

\smallskip\emph{Case B: $V(t)>0$ for all $t\ge 0$.}
Then $1/V(t)$ is well-defined and differentiable for all $t$.
From \eqref{eq:dbaradt} we have
\begin{equation}
  \frac{dV}{dt}
  = -\frac{d\bar a}{dt}
  = -\frac{1}{S(t)^2}\sum_{i\in A(t)} (a_i - \bar a(t))^2.
\end{equation}
Whenever $V(t)>0$, we also have $K\in A(t)$, because $a_K>\bar a(t)$
implies the raw derivative of $w_K$, $\dot{w}_K$, is positive and the clamping rule
does not prevent it from increasing. Thus
\begin{equation}
  \sum_{i\in A(t)} (a_i - \bar a(t))^2
  \;\ge\; (a_K - \bar a(t))^2
  = V(t)^2,
\end{equation}
so
\begin{equation}
  \frac{dV}{dt}
  \le -\frac{V(t)^2}{S(t)^2}.
\end{equation}
Rewriting,
\begin{equation}
\label{eq:d1-over-V}
  \frac{d}{dt}\Bigl(\frac{1}{V(t)}\Bigr)
  = -\frac{1}{V(t)^2}\frac{dV}{dt}
  \;\ge\; \frac{1}{S(t)^2}
  \quad\text{for all } t\ge 0.
\end{equation}
Integrating from $0$ to $T$ yields
\begin{equation}
  \frac{1}{V(T)} - \frac{1}{V(0)}
  \;\ge\; \int_0^T \frac{dt}{S(t)^2}.
\end{equation}
By \eqref{eq:int-1-over-S2-div}, the integral on the right-hand side
diverges as $T\to\infty$, hence
\begin{equation}
  \frac{1}{V(T)} \xrightarrow[T\to\infty]{} \infty
  \quad\Rightarrow\quad
  V(T) \xrightarrow[T\to\infty]{} 0.
\end{equation}
Thus $\bar a(t) \to a_K$ as $t\to\infty$ in this case as well.

\smallskip
Combining cases A and B, we conclude in all situations that
\begin{equation}
  \lim_{t\to\infty} \bar a(t) = a_K.
\end{equation}

\medskip\noindent
\textbf{Step 4. Convergence of $p_K(t)$ to $1$ for all $\delta\ge 0$.}

For $i\neq K$, define $\Delta_i := a_K - a_i > 0$ and
$\Delta_{\min} := \min_{i\neq K} \Delta_i > 0$.
Then
\begin{equation}
  V(t) = a_K - \bar a(t)
  = a_K - \sum_j p_j(t)a_j
  = \sum_j p_j(t)(a_K - a_j)
  = \sum_{j\neq K} p_j(t)\Delta_j.
\end{equation}
Therefore
\begin{equation}
  V(t)
  \ge \Delta_{\min} \sum_{j\neq K} p_j(t)
  = \Delta_{\min}\bigl(1 - p_K(t)\bigr),
\end{equation}
so
\begin{equation}\label{eq:1-pK-bound}
  1 - p_K(t)
  \le \frac{V(t)}{\Delta_{\min}}.
\end{equation}
Since $V(t)\to 0$, the right-hand side tends to $0$ and we obtain
\begin{equation}
  \lim_{t\to\infty} p_K(t) = 1.
\end{equation}
Because $\sum_i p_i(t)=1$ and each $p_i(t)\ge 0$, this implies
$p_i(t)\to 0$ for all $i\neq K$. This proves statement (1).

\medskip\noindent
\textbf{Step 5. Finite-time convergence when $\delta = 0$.}

In this step we assume $\delta = 0$.
Recall from \eqref{eq:St-upper-bound} that there exist constants $C_1,C_2>0$ such that
\begin{equation}
  S^2(t) \le C_1 + C_2 t
  \quad\text{for all } t\ge 0,
\end{equation}
hence
\begin{equation}
  \frac{1}{S(t)} \ge \frac{1}{\sqrt{C_1 + C_2 t}},
  \qquad
  \int_0^\infty \frac{dt}{S(t)} = \infty.
\end{equation}
Recall also that $V(t) := a_K - \bar a(t)\ge 0$ and, from \eqref{eq:d1-over-V}, as long as $V(t)>0$,
we have
\begin{equation}
  \frac{d}{dt}\Bigl(\frac{1}{V(t)}\Bigr)
  \;\ge\; \frac{1}{S(t)^2}
  \;\ge\; \frac{1}{C_1 + C_2 t},
\end{equation}
With $V(t)>0$, integrating yields
\begin{equation}\label{eq:V-upper-bound}
  \frac{1}{V(t)} - \frac{1}{V(0)}
  \;\ge\; \frac{1}{C_2}
  \log\Bigl(\frac{C_1 + C_2 t}{C_1}\Bigr),
\end{equation}
and hence
\begin{equation}
\label{eq:exact_Vt}
  V(t)
  \le \frac{1}{\displaystyle
      \frac{1}{V(0)} + \frac{1}{C_2}\log\Bigl(\frac{C_1 + C_2 t}{C_1}\Bigr)}
  \quad\text{whenever } V(t)>0.
\end{equation}

Fix an index $i\neq K$ and set $\Delta_i := a_K - a_i > 0$.
We first identify an explicit time after which $\bar a(t)$ stays
above the midpoint between $a_i$ and $a_K$.
Define $T_{0,i}$ as any time satisfying
\begin{equation}
\bar a(T_{0,i}) \ge \frac{a_i + a_K}{2},
\end{equation}
which implies
\begin{equation}
V(T_{0,i}) = a_K - \bar{a}(T_{0,i}) \le a_K - \frac{a_i + a_K}{2} = \frac{a_K - a_i}{2} = \frac{\Delta_i}{2}
\end{equation}
Plugging in \eqref{eq:exact_Vt},
\begin{equation}\label{eq:t0i-def}
  \frac{1}{V(0)} + \frac{1}{C_2}
  \log\Bigl(\frac{C_1 + C_2 T_{0,i}}{C_1}\Bigr)
  \;\ge\; \frac{2}{\Delta_i}.
\end{equation}
For instance, if $\frac{1}{V(0)} \ge \frac{2}{\Delta_i}$ one may take
$T_{0,i}=0$; otherwise any
\begin{equation}
  T_{0,i}
  \;\ge\;
  \frac{C_1}{C_2}
  \biggl[
    \exp\Bigl(
      C_2\Bigl(\frac{2}{\Delta_i} - \frac{1}{V(0)}\Bigr)
    \Bigr) - 1
  \biggr]
\end{equation}
satisfies \eqref{eq:t0i-def}. 

Now consider the evolution of $w_i(t)$ for $t\ge T_{0,i}$.
As long as $w_i(t)>0$, the clamped dynamics coincide with the original
ODE, so
\begin{equation}
  \frac{dw_i}{dt}(t)
  = \frac{a_i - \bar a(t)}{S(t)}.
\end{equation}
For $t\ge T_{0,i}$ with $w_i(t)>0$ we have
\begin{equation}
  a_i - \bar a(t)
  \;\le\; a_i - \frac{a_K + a_i}{2}
  = -\frac{\Delta_i}{2},
\end{equation}
hence
\begin{equation}
  \frac{dw_i}{dt}(t)
  \le -\frac{\Delta_i}{2}\cdot\frac{1}{S(t)}
  \le -\frac{\Delta_i}{2}\cdot
      \frac{1}{\sqrt{C_1 + C_2 t}}.
\end{equation}
Integrating from $T_{0,i}$ to $t$ (while $w_i>0$) yields
\begin{equation}
  w_i(t)
  \le w_i(T_{0,i})
    - \frac{\Delta_i}{2}
      \int_{T_{0,i}}^t \frac{ds}{\sqrt{C_1 + C_2 s}}.
\end{equation}
The integral can be computed explicitly:
\begin{equation}
  \int_{T_{0,i}}^t \frac{ds}{\sqrt{C_1 + C_2 s}}
  = \frac{2}{C_2}
    \bigl(
      \sqrt{C_1 + C_2 t}
      - \sqrt{C_1 + C_2 T_{0,i}}
    \bigr),
\end{equation}
so
\begin{equation}\label{eq:wi-ineq}
  w_i(t)
  \le w_i(T_{0,i})
    - \frac{\Delta_i}{C_2}
      \bigl(
        \sqrt{C_1 + C_2 t}
        - \sqrt{C_1 + C_2 T_{0,i}}
      \bigr)
  \quad\text{for all } t\ge T_{0,i}
\end{equation}
as long as $w_i(t)>0$.

Define $T_i^*$ by requiring that the right-hand side of
\eqref{eq:wi-ineq} is equal to zero:
\begin{equation}
  w_i(T_{0,i})
    - \frac{\Delta_i}{C_2}
      \bigl(
        \sqrt{C_1 + C_2 T_i^*}
        - \sqrt{C_1 + C_2 T_{0,i}}
      \bigr)
  = 0.
\end{equation}
It follows that 
\begin{equation}\label{eq:ti-star}
  T_i^*
  = \frac{1}{C_2}
    \left[
      \left(
        \sqrt{C_1 + C_2 T_{0,i}} + \frac{C_2}{\Delta_i} w_i(T_{0,i})
      \right)^2
      - C_1
    \right],
\end{equation}
where $C_1 = S^2(0)$ and $C_2 = 2N\,(a_{\max} - a_{\min}) > 0$.

Since $w_i(t)\ge 0$ for all $t$, the inequality \eqref{eq:wi-ineq}
implies that $w_i(t)$ must hit $0$ no later than $T_i^* < \infty$. In fact, once we cross the time $T_{0,i}$, $w_i(t)$ decays at least linearly in $\sqrt{t}$. 

For $t\ge T_i^*$, we still have $\bar a(t)\ge (a_i+a_K)/2 > a_i$, so
the $\dot{w}_i(t)$ is negative and the clamped dynamics
enforce $w_i(t)\equiv 0$ for all $t\ge T_i^*$.

Since $i\neq K$ was arbitrary, every non-dominant coordinate $w_i$,
$i\neq K$, hits zero in finite time and stays there. Define
\begin{equation}
  T_* := \max_{i\neq K} T_i^* < \infty.
\end{equation}
For all $t\ge T_*$ we then have
\begin{equation}
  w_i(t) = 0 \quad (i\neq K),
  \qquad S(t) = w_K(t) > 0.
\end{equation}
Thus $\bar a(t) = a_K$ for all $t\ge T_*$ and
\begin{equation}
  \frac{dw_K}{dt}(t)
  = \frac{a_K - \bar a(t)}{S(t)} = 0
  \quad\text{for } t\ge T_*,
\end{equation}
so $w_K(t)$ is constant on $[T_*,\infty)$ and
\begin{equation}
  p_K(t) = \frac{w_K(t)}{S(t)} = 1,
  \qquad
  p_i(t) = 0\ (i\neq K),
  \quad\text{for all } t \ge T_*.
\end{equation}
This establishes finite-time convergence when $\delta=0$ and proves
statement~\textup{(2)} of the theorem.

\medskip\noindent
\textbf{Step 6. Polynomial-rate convergence when $\delta > 0$.}

Now assume $\delta > 0$.
We first show that each non-dominant weight $w_i(t)$, $i\neq K$,
hits the floor level $\delta$ in finite time and then stays there.

Fix $i\neq K$. As in Step~5, since $\bar a(t)\to a_K>a_i$ there
exists $T_i^a$ such that
\begin{equation}
  \bar a(t) \;\ge\; \frac{a_i + a_K}{2}
  \quad\text{for all } t \ge T_i^a.
\end{equation}
For $t \ge T_i^a$ and while $w_i(t)>\delta$, the index $i$ is active
and we have
\begin{equation}
  \frac{dw_i}{dt}(t)
  = \frac{a_i - \bar a(t)}{S(t)}
  \le -\frac{\Delta_i}{2}\cdot \frac{1}{S(t)}.
\end{equation}
Arguing as before with \eqref{eq:int-1-over-S-div}, the integral
$\int_{T_i^a}^\infty (1/S(t))\,dt$ diverges, so there exists
$T_i^* \ge T_i^a$ such that $w_i(T_i)=\delta$. For $t\ge T_i^*$ we still
have $\bar a(t)\ge (a_i+a_K)/2 > a_i$, so the raw derivative $\dot{w}_i$ at the
boundary is negative and the clamped dynamics enforce
$w_i(t)\equiv \delta$ for all $t\ge T_i$.

Let
\begin{equation}
  T_0 := \max_{i\neq K} T_i^* < \infty.
\end{equation}
For all $t\ge T_0$ we then have
\begin{equation}
  w_i(t) = \delta \quad (i\neq K),
  \qquad w_K(t) > \delta, \qquad S(t) = w_K(t) + C_3,
\end{equation}
where $C_3 := (N-1)\delta$ is a constant. The average $\bar a(t)$ is
\begin{equation}
  \bar a(t)
  = \frac{w_K(t)a_K + \sum_{i\neq K} \delta a_i}{w_K(t) + C_3}
  = \frac{w a_K + \delta \sum_{i\neq K} a_i}{w + C_3}.
\end{equation}
Hence
\begin{equation}
  a_K - \bar a(t)
  = \frac{\delta\bigl[(N-1)a_K - \sum_{i\neq K} a_i\bigr]}{w_K(t)+C_3}
  = \frac{C_0}{w_K(t) + C_3},
\end{equation}
where
\begin{equation}
  C_0 := \delta\sum_{i\neq K} (a_K - a_i) > 0.
\end{equation}
Since $K$ is active for all $t\ge T_0$, its dynamics are
\begin{equation}
  \frac{dw_K}{dt}(t)
  = \frac{a_K - \bar a(t)}{S(t)}
  = \frac{C_0}{(w_K(t)+C_3)^2}.
\end{equation}
This scalar ODE can be integrated exactly:
\begin{equation}
  (w_K(t)+C_3)^2\,\frac{dw_K}{dt} = C_0,
\end{equation}
so
\begin{equation}
  \int_{w_K(T_0)}^{w_K(t)} (u+C_3)^2\,du
  = C_0 (t - T_0).
\end{equation}
The antiderivative is $\frac{(u+C_3)^3}{3}$, hence
\begin{equation}
  \frac{(w_K(t)+C_3)^3 - (w_K(T_0)+C_3)^3}{3}
  = C_0 (t - T_0),
\end{equation}
and therefore
\begin{equation}\label{eq:wK-explicit}
  w_K(t) + C_3
  = \Bigl((w_K(T_0)+C_3)^3 + 3C_0 (t - T_0)\Bigr)^{1/3}.
\end{equation}

Now, for $t\ge T_0$ we have
\begin{equation}
  p_K(t) = \frac{w_K(t)}{S(t)} = \frac{w_K(t)}{w_K(t)+C_3},
\end{equation}
so
\begin{equation}\label{eq:1-pK-delta-positive}
  1 - p_K(t)
  = \frac{C_3}{w_K(t)+C_3}
  = \frac{C_3}{\Bigl((w_K(T_0)+C_3)^3 + 3C_0 (t - T_0)\Bigr)^{1/3}}.
\end{equation}
This immediately implies $1 - p_K(t)\to 0$ and yields a quantitative
rate. For all $t \ge T_0$,
\begin{equation}
  (w_K(T_0)+C_3)^3 + 3C_0 (t - T_0)
  \;\ge\; 3C_0 (t - T_0),
\end{equation}
so from \eqref{eq:1-pK-delta-positive}
\begin{equation}
  1 - p_K(t)
  \le \frac{C_3}{\bigl(3C_0 (t - T_0)\bigr)^{1/3}}
  = \frac{C_4}{(t - T_0)^{1/3}},
\end{equation}
where
\begin{equation}
  C_4 := \frac{C_3}{(3C_0)^{1/3}}
  = \frac{(N-1)\delta}{\bigl(3\delta\sum_{i\neq K}(a_K - a_i)\bigr)^{1/3}}.
\end{equation}
This proves the first inequality in statement (3).

Finally, fix $\epsilon \in (0,1)$ and consider the exact formula
\eqref{eq:1-pK-delta-positive}. To ensure $1 - p_K(t) \le \epsilon$,
it suffices that
\begin{equation}
  \frac{C_3}{w_K(t)+C_3} \;\le\; \epsilon \;\Rightarrow\; w_K(t)+C_3 \;\ge\; \frac{C_3}{\epsilon}.
\end{equation}
By \eqref{eq:wK-explicit} this holds whenever
\begin{equation}
  (w_K(T_0)+C_3)^3 + 3C_0 (t - T_0)
  \;\ge\; \left(\frac{C_3}{\epsilon}\right)^3.
\end{equation}
Equivalently,
\begin{equation}
  t - T_0
  \;\ge\; \frac{1}{3C_0}\left( \frac{C_3^3}{\epsilon^3}
                              - (w_K(T_0)+C_3)^3 \right).
\end{equation}
For all sufficiently small $\epsilon$, the leading term
$C_3^3/(3C_0\epsilon^3)$ dominates, so there exists a constant
$C_5>0$ such that for all small $\epsilon$ one may choose
\begin{equation}
  T_\epsilon
  := T_0 + \frac{C_3^3}{3C_0}\,\frac{1}{\epsilon^3}
  \;\le\; T_0 + \frac{C_5}{\epsilon^3},
\end{equation}
and then $1 - p_K(t)\le \epsilon$ for all $t\ge T_\epsilon$.
This establishes the hitting-time bound in statement (3), completing
the proof.
\end{proof}

\appsection{Theorem \ref*{thm:gradient-chi-dwk-lower-bound} and Corollary \ref*{thm:local-lipschitz-resampling} on the Sensitivity of $\Delta\phi$ to $w_K$}

\subsection{Setup and Notations}
For the convenience of the reader, we restate some notations used in this section. These notations are consistent throughout the main text and the appendix of the paper. 

Let $w_i > 0$, $g_i \in \mathbb{R}^{d_g}$, $h_i \in \mathbb{R}^{d_h}$ for $i=1,\dots,N$. Restating \eqref{eq:final-phi-update}:
\begin{equation}\tag{\ref*{eq:final-phi-update}}
\Delta\phi_t = -\frac{1}{S}
\bar{\gamma}^{\!\top} 
\sum_{i=1}^N g_i(\theta_t)
\Bigg(
h_i(\phi_t)^{\top} 
 - \frac{w_i}{S}\sum_{j=1}^N h_j(\phi_t)^{\!\top}
\Bigg),
\end{equation} where $S \;=\; \sum_{i=1}^N w_i$ and $\bar{\gamma} = \frac{1}{N}\sum_{i=1}^N \gamma_i$. 

We define a few new notations that help simplify the equation
\begin{equation}
p_i \,=\, \frac{w_i}{S},
\quad
h_{\text{sum}} \,=\, \sum_{j=1}^N h_j,
\quad
a_i \,=\, \bar{\gamma}^\top g_i, \quad
A \,=\, \bar{\gamma}^\top \sum_{i=1}^N g_i h_i^\top,
\quad
g_{\text{sum}} \,=\, \sum_{i=1}^N w_i g_i,
\quad
B \;=\; \bar{\gamma}^\top g_{\text{sum}} h_{\text{sum}}^\top.
\end{equation}
So we have
\begin{equation}
\Delta\phi \;=\; \frac{1}{S}\,A \;-\; \frac{1}{S^2}\,B \;\in\; \mathbb{R}^{d_h}.
\end{equation}
For a specific index $K$, the partial derivative of $\Delta\phi$ with respect to $w_K$ can be written as
\begin{equation}
\frac{\partial \Delta\phi}{\partial w_K} \;=\; \frac{1}{S^2}\,V_K,
\qquad
V_K \;=\; \sum_{i=1}^N a_i v_i,
\end{equation}
where
\begin{equation}
v_i \;=\; (2p_i - \mathbbm{1}_{\{i=K\}})\,h_{\text{sum}}^\top - h_i^\top \;\in\; \mathbb{R}^{d_h}.
\end{equation}

\subsection{Assumptions}
Next, we state the moderate assumptions necessary for the proof. 
\begin{assumption}[Across-index independence]
\label{ass:C-iid}
    The tuple of random variables $(x_i, g_i, w_i, h_i)$ are independent across the index $i$, but they are not necessarily independent within each tuple. $h_i$ are independent across $i$ whether conditioned on $w = [w_1, \ldots, w_N]$ or not. Similarly, $\gamma_i$ are independent across $i$. 
\end{assumption}

\begin{assumption}[The structure of $h_i$] 
\label{ass:C-structure-h}
Conditioning on $w = [w_1, \ldots, w_N]$ does not fully eliminate the uncertainty in $h_i$. This is reasonable despite the fact that $h_i = \frac{\partial w_i}{\partial \phi}$ because $w_i$ is a scalar and $h_i$ is a vector, and the function $s_{\phi}(x_i) = w_i$ could be many-to-one. 
Concretely, the conditional covariance satisfies
\begin{equation}
\mathrm{Cov}(h_i \mid w) \;\succeq\; \Sigma_0,
\end{equation}
for some fixed positive semi-definite matrix $\Sigma_0 \in \mathbb{R}^{d_h \times d_h}$, with $
\mathrm{tr}(\Sigma_0) > 0$.
Further, there exists $M_h < \infty$ such that, for all $i$,
\begin{equation}
\mathbb{E}\big[\|h_i\|_2^4 \mid w\big] \;\le\; M_h.
\end{equation}
\end{assumption}

{
\begin{assumption}[The structure of $a_i$]
\label{ass:C-structure-u}
For each $i$, there exists a constant $\mu_i \in \mathbb{R}$ such that
\begin{equation}
\mathbb{E}[a_i \mid w,h] \;=\; \mu_i 
\end{equation}
where $w = [w_1, \ldots, w_N]$ and $h = [h_1, \ldots, h_N]$. Notice that $a_i \,=\, \bar{\gamma}^\top g_i$ depends on both the validation gradient $\bar{\gamma}$ and training data points $x_i$, so knowing $w$ and $h$ does not completely eliminate the uncertainty in $a_i$. 

Further, the noise $\varepsilon_i := a_i - \mu_i$ satisfies
\begin{equation}
\mathbb{E}[\varepsilon_i^2 \mid w,h] \;\le\; \sigma_u^2,
\qquad
\mathbb{E}[\varepsilon_i^4 \mid w,h] \;\le\; M_u
\end{equation}
for some finite constants $\sigma_u^2, M_u$, uniformly in $(w,h)$.

\end{assumption}
}

\begin{assumption}[Spiky $a_i$, restatement of Assumption \ref{ass:spiky-a}]
\label{ass:C-spiky-u}
There is a unique \emph{dominant index} $K$ such that
\begin{equation}
\mu_K \;>\; \mu_i \quad \text{for all } i \neq K.
\end{equation}
Define the ``gap energy'' $G_{\text{gap}}$ as
\begin{equation}
G_{\text{gap}}^2 \;:=\; \frac{1}{N} \sum_{i \neq K} (\mu_K - \mu_i)^2 \;>\; 0.
\end{equation} 
\end{assumption}


\subsection{A Lemma}
We introduce a lemma that is later used by Theorem \ref{thm:gradient-chi-dwk-lower-bound}. 
\begin{lemma}\label{lemma:app-frp-min}
Fix an index $K \in \{1,\dots,N\}$. Let $r = (r_1,\dots,r_N)^\top \in \mathbb{R}^N$, 
\(
r_i \ge 0, \;  r_K = 0, \; \sum_{i\neq K} r_i^2 = 1.
\) 
Let
$p = (p_1,\dots,p_N)^\top \in \mathbb{R}^N$ lie in the $(N-1)$-dimensional simplex, i.e.
\(
p_i \,\ge\, 0, \sum_{i=1}^N p_i= 1.
\)
Define
\begin{equation}
f_{rp}(r,p) \,:=\, \sum_{i=1}^N \bigl(r_i - 2 \sum_{j=1}^N p_j r_j\bigr)^2.
\end{equation}
The minimum of $f_{rp}(r,p)$ is
\begin{equation}
\min_{(r,p)} f_{rp}(r,p) \;=\; \frac{1}{N}.
\end{equation}
attained at $r_K = 0,\; p_K = \frac{N+1}{2N}$, and for all $i\neq K$, $r_i = \frac{1}{\sqrt{N-1}},\; p_i = \frac{1}{2N}$.
\end{lemma}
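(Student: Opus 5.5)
The plan is to reduce $f_{rp}$ to a quadratic in the single scalar $c := \sum_{j=1}^N p_j r_j$. We minimize that quadratic over the values of $c$ allowed by the simplex constraint on $p$, and then optimize over $r$ using Cauchy--Schwarz.

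\textbf{Step 1 (reduce to one scalar).} Write $R := \sum_{i\neq K} r_i$. Since $r_K = 0$ and $\sum_{i\neq K} r_i^2 = 1$, expanding the square gives
\begin{equation*}
f_{rp}(r,p) \;=\; \sum_{i=1}^N r_i^2 - 4c\sum_{i=1}^N r_i + 4Nc^2 \;=\; 1 - 4cR + 4Nc^2 .
\end{equation*}
So $f_{rp}$ depends on $p$ only through $c$. The function $q(c) := 1 - 4cR + 4Nc^2$ is a convex quadratic. Its unconstrained minimizer is $c^* = R/(2N)$, and the minimum value is $q(c^*) = 1 - R^2/N$.

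\textbf{Step 2 (feasibility of $c^*$).} Because $p$ lies in the simplex, $c$ is a convex combination of the entries of $r$. Hence $c$ ranges over exactly $[\min_i r_i, \max_i r_i] = [0, \max_i r_i]$, using $r_K = 0$ and $r_i \ge 0$. We need $c^*$ to lie in this interval. Clearly $c^* \ge 0$. For the upper end,
\begin{equation*}
R \le (N-1)\max_i r_i \quad\Longrightarrow\quad c^* = \frac{R}{2N} \le \max_i r_i .
\end{equation*}
Thus for every feasible $r$ we have $\min_p f_{rp}(r,p) = 1 - R^2/N$. This also gives the lower bound $f_{rp} \ge 1 - R^2/N$ for every feasible pair.

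\textbf{Step 3 (optimize over $r$).} By Cauchy--Schwarz over the $N-1$ indices $i \neq K$,
\begin{equation*}
R \le \sqrt{N-1}\,\Bigl(\sum_{i\neq K} r_i^2\Bigr)^{1/2} = \sqrt{N-1},
\end{equation*}
with equality if and only if $r_i = 1/\sqrt{N-1}$ for all $i \neq K$. Therefore $f_{rp} \ge 1 - (N-1)/N = 1/N$ for all feasible $(r,p)$.

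\textbf{Step 4 (attainment).} Take $r_i = 1/\sqrt{N-1}$ for $i \neq K$, $p_i = 1/(2N)$ for $i\neq K$, and $p_K = (N+1)/(2N)$. These $p_i$ are nonnegative and sum to $(N-1)/(2N) + (N+1)/(2N) = 1$. Then
\begin{equation*}
c = \frac{N-1}{2N}\cdot\frac{1}{\sqrt{N-1}} = \frac{\sqrt{N-1}}{2N} = \frac{R}{2N} = c^*,
\end{equation*}
so $f_{rp}$ equals $1/N$ at this point, matching the lower bound.

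The only step needing care is Step 2, checking that the simplex constraint does not cut off $c^*$; the bound $R \le (N-1)\max_i r_i$ settles it. The rest is routine algebra.
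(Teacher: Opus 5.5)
Your proof is correct and follows the paper's argument: expand $f_{rp}$ into the quadratic $1 - 4cR + 4Nc^2$ in $c = p^\top r$, bound it below by its unconstrained minimum $1 - R^2/N$, apply Cauchy--Schwarz to get $R^2 \le N-1$, and check attainment at the stated point. Your Step 2 feasibility check, which the paper omits, is correct but not needed for the lemma. The unconstrained minimum of the quadratic already lower-bounds $f_{rp}$ for every feasible $p$, and Step 4 exhibits attainment directly. Step 2 does give the slightly stronger fact that $\min_p f_{rp}(r,p) = 1 - R^2/N$ for each fixed feasible $r$.
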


\begin{proof}
Write
\begin{equation}
r_{\text{sum}} \;:=\; \sum_{i=1}^N r_i, 
\qquad
s_{\text{rp}} \;:=\; p^\top r.
\end{equation}
Expanding the square gives
\begin{align}
f_{rp}(r,p)
&= \sum_{i=1}^N (r_i - 2 s_{\text{rp}})^2 \\
&= \sum_{i=1}^N r_i^2 - 4 s_{\text{rp}} \sum_{i=1}^N r_i + 4N s_{\text{rp}}^2 \\
&= 1 - 4 s_{\text{rp}} r_{\text{sum}} + 4N s_{\text{rp}}^2,
\end{align}
using $\sum_{i=1}^N r_i^2 = \sum_{i\neq K} r_i^2 = 1$.

For fixed $r$, this is a quadratic function of $s_{\text{rp}}$:
\begin{equation}
g_r(s_{\text{rp}}) \;:=\; 1 - 4 s_{\text{rp}} r_{\text{sum}} + 4N s_{\text{rp}}^2,
\end{equation}
with unconstrained minimizer at
\begin{equation}
s_{\text{rp}}^* \;=\; \frac{r_{\text{sum}}}{2N}.
\end{equation}
Substituting this back in,
\begin{equation}
g_r(s_{\text{rp}}^*)
= 1 - \frac{r_{\text{sum}}^2}{N}.
\end{equation}


Next we bound $r_{\text{sum}}^2$ using Cauchy--Schwarz. Since $r_K = 0$ and
$\sum_{i\neq K} r_i^2 = 1$, we have
\begin{equation}
r_{\text{sum}}^2 = \Bigl(\sum_{i=1}^N r_i\Bigr)^2
= \Bigl(\sum_{i\neq K} r_i\Bigr)^2
\;\le\;
(N-1)\sum_{i\neq K} r_i^2
= N-1.
\end{equation}
Thus
\begin{equation}
1 - \frac{r_{\text{sum}}^2}{N}
\;\ge\;
1 - \frac{N-1}{N}
= \frac{1}{N},
\end{equation}
and hence for all feasible $(r,p)$,
\begin{equation}
f_{rp}(r,p) \;\ge\; \frac{1}{N}.
\end{equation}

It remains to show that this bound is tight. Define $r$ by
\begin{equation}
r_K = 0,
\qquad
r_i = \frac{1}{\sqrt{N-1}} \quad \text{for all } i\neq K.
\end{equation}
We can verify that 
\(
\sum_{i\neq K} r_i^2 = (N-1)\cdot \frac{1}{N-1} = 1,
\)
and
\begin{equation}
r_{\text{sum}} = \sum_{i=1}^N r_i
= \sum_{i\neq K} r_i
= (N-1)\cdot \frac{1}{\sqrt{N-1}}
= \sqrt{N-1}.
\end{equation}
Hence
\begin{equation}
g_r(s_{\text{rp}}^*)
= 1 - \frac{r_{\text{sum}}^2}{N}
= 1 - \frac{N-1}{N}
= \frac{1}{N}.
\end{equation}

We now construct $p$ so that $s_{\text{rp}} = s_{\text{rp}}^* = r_{\text{sum}}/(2N)$. Take
\begin{equation}
p_K = \frac{N+1}{2N},
\qquad
p_i = \frac{1}{2N} \quad \text{for all } i\neq K.
\end{equation}
We can verify that $p_i \ge 0$, $\sum_i p_i = 1$, and
\begin{equation}
s_{\text{rp}}
= \sum_{i=1}^N p_i r_i
= \sum_{i\neq K} \frac{1}{2N}\cdot \frac{1}{\sqrt{N-1}}
= \frac{N-1}{2N\sqrt{N-1}}
= \frac{\sqrt{N-1}}{2N}
= s_{\text{rp}}^*.
\end{equation}
Therefore, for this choice of $(r,p)$,
\begin{equation}
f_{rp}(r,p) = g_r(s_{\text{rp}}^*) = \frac{1}{N},
\end{equation}
showing that the lower bound is attained. Hence
\begin{equation}
\min_{(r,p)} f_{rp}(r,p) = \frac{1}{N},
\end{equation}
as claimed.
\end{proof}

\subsection{Theorem \ref*{thm:gradient-chi-dwk-lower-bound} and Its Proof}
{
\renewcommand{\thetheorem}{\ref*{thm:gradient-chi-dwk-lower-bound}}
\begin{theorem}[Gradient lower bound in a spiky regime, restated]
Define the gap energy $G_{\text{gap}}$ as $G_{\text{gap}}:= \sqrt{\frac{1}{N} \sum_{i \neq K} (\mu_K - \mu_i)^2}$. Given Assumptions \ref{ass:C-iid} to \ref{ass:C-spiky-u}, we have
\begin{equation}
P\bigg(
  \bigg\|\frac{\partial \Delta\phi}{\partial w_K}\bigg\|_2
  \;\ge\;
  \frac{\sqrt{\alpha} G_{\text{gap}}}{S^2}
\bigg)
\;\ge\;
\frac{(\tau_h - \alpha)^2 G_{\text{gap}}^4}
     {K_1 N^9 G_{\text{gap}}^4 + K_2 N^8},,
\end{equation}
where $\alpha$ is an arbitrary value in $(0, \tau_h)$; $\tau_h$, $K_1$, and $K_2$ are positive constants that depend only on the distribution of $(w_i,g_i,h_i)$ and not on $N$ or $G_{\text{gap}}$.

\end{theorem}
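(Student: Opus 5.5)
Since $\partial\Delta\phi/\partial w_K = V_K/S^2$, the event in the statement is exactly $\{\|V_K\|_2^2 \ge \alpha G_{\text{gap}}^2\}$. The plan is to prove a lower bound on a conditional second moment and then apply the Paley--Zygmund inequality:
\begin{equation}
P(Z \ge \alpha\, \E[Z]/\tau)\ \ge\ (1-\alpha/\tau)^2\,\E[Z]^2/\E[Z^2]
\end{equation}
with $Z := \|V_K\|_2^2$. This form accounts for the factor $(\tau_h-\alpha)^2 G_{\text{gap}}^4$ in the numerator.

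\textbf{Step 1 (first moment).} I will condition on $w$, which fixes $p$, and write $a_i = \mu_i + \varepsilon_i$. Since $\sum_i v_i = (2-1)h_{\text{sum}} - h_{\text{sum}} = 0$, we get $V_K = \sum_i (a_i - a_K) v_i$. Hence $V_K = -\sum_{i} (\mu_K-\mu_i) v_i + \sum_i \varepsilon_i v_i$, after absorbing $\varepsilon_K$ via $\sum_i v_i=0$. Set $r_i := (\mu_K-\mu_i)/\|\mu_K-\mu_{\cdot}\|$, so that $r_K=0$ and $\sum_{i\ne K} r_i^2=1$. Then the deterministic part is $\|\mu_K-\mu_\cdot\|\sum_i r_i v_i$, with
\begin{equation}
\sum_i r_i v_i = \Big(2\sum_j p_j r_j\Big) h_{\text{sum}} - \sum_i r_i h_i = -\sum_i \Big(r_i - 2\textstyle\sum_j p_j r_j\Big) h_i .
\end{equation}
By Assumption \ref{ass:C-iid} and Assumption \ref{ass:C-structure-h}, conditionally on $w$ the $h_i$ are independent with covariance $\succeq \Sigma_0$. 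Therefore
\begin{equation}
\E\big[\|\textstyle\sum_i c_i h_i\|^2 \mid w\big] \ge \mathrm{tr}(\Sigma_0)\sum_i c_i^2 .
\end{equation}
Lemma \ref{lemma:app-frp-min} gives $\sum_i c_i^2 = f_{rp}(r,p) \ge 1/N$. The noise term has conditional mean zero given $(w,h)$ by Assumption \ref{ass:C-structure-u}, so the cross term vanishes. This yields
\begin{equation}
\E[Z\mid w] \ge \mathrm{tr}(\Sigma_0)\,\|\mu_K-\mu_\cdot\|^2/N = \mathrm{tr}(\Sigma_0)\, G_{\text{gap}}^2 .
\end{equation}
I therefore set $\tau_h := \mathrm{tr}(\Sigma_0)$.

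\textbf{Step 2 (second moment).} I will bound $\E[Z^2\mid w] = \E\|V_K\|^4$ from above. Each $v_i$ satisfies $\|v_i\| \le 2\|h_{\text{sum}}\| + \|h_i\|$. Using Cauchy--Schwarz and $(\sum_{i=1}^N x_i)^4 \le N^3\sum_i x_i^4$, the fourth moments of $h$ (bound $M_h$) and of $\varepsilon$ (bound $M_u$) give
\begin{equation}
\E\|V_K\|^4 \le K_1' N^{a} (N G_{\text{gap}}^2)^2 + K_2' N^{b}.
\end{equation}
Here $\sum_i(\mu_K-\mu_i)^4 \le (\sum_i(\mu_K-\mu_i)^2)^2 = N^2 G_{\text{gap}}^4$, and $\|h_{\text{sum}}\|^4$ contributes a factor of order $N^4$. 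Counting the crude powers should give exponents $N^9$ on the $G_{\text{gap}}^4$ term and $N^8$ on the noise term. I will not try to make these tight, only track them consistently.

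\textbf{Step 3 (combine).} Apply Paley--Zygmund conditionally on $w$, with threshold $\alpha G_{\text{gap}}^2 \le (\alpha/\tau_h)\E[Z\mid w]$. Since the resulting bound is uniform in $w$, integrating over $w$ gives
\begin{equation}
P(Z \ge \alpha G_{\text{gap}}^2) \ge \frac{(1-\alpha/\tau_h)^2\,\tau_h^2 G_{\text{gap}}^4}{K_1 N^9 G_{\text{gap}}^4 + K_2 N^8}.
\end{equation}
This is the claimed bound, with the constants rescaled.

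\textbf{Main obstacle.} The delicate point is Step 1. The lower bound must hold for every realization of $p$, and $p$ depends on $w$, which correlates with $h$. Lemma \ref{lemma:app-frp-min} handles the worst case over $p$ for exactly this reason, and the conditional covariance floor $\Sigma_0$ is what prevents $h$ from being determined by $w$. I also need to justify that the cross term vanishes: this requires $\E[\varepsilon_i \mid w,h]=0$. I must also check carefully that the $\varepsilon_K$ contribution cancels through $\sum_i v_i = 0$ and is not double counted.
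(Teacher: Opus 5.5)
Your proposal is correct and is essentially the paper's own proof. It uses the same three ingredients: the conditional first-moment bound via $\mathrm{Cov}(h_i\mid w)\succeq\Sigma_0$ and Lemma~\ref{lemma:app-frp-min}, giving $\E[Z]\ge\tau_h G_{\text{gap}}^2$; the crude $N^3\sum_i$ fourth-moment bounds with $\E\|v_i\|_2^4=O(N^4)$, giving $K_1N^9G_{\text{gap}}^4+K_2N^8$; and Paley--Zygmund at $\lambda=\alpha G_{\text{gap}}^2$. The only differences are cosmetic: you re-center at $a_K$ (so that $\sum_i(\mu_K-\mu_i)^2=NG_{\text{gap}}^2$ exactly) rather than at $\bar\mu$, and you apply Paley--Zygmund conditionally on $w$ and then integrate, rather than to the unconditional moments.
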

}

\begin{proof}
We first compute $V_K$ explicitly and then lower bound its typical value.

\paragraph{Step 1: Expression for $V_K$ and its conditional mean.}
By differentiating $\Delta\phi = S^{-1}A - S^{-2}B$ with respect to $w_K$ and using the identities
\begin{equation}
\frac{\partial S^{-1}}{\partial w_K} = -\frac{1}{S^2},
\qquad
\frac{\partial S^{-2}}{\partial w_K} = -\frac{2}{S^3},
\qquad
\frac{\partial B}{\partial w_K} = \bar{\gamma}^\top g_K h_{\text{sum}}^\top,
\end{equation}
one checks that
\begin{equation}
\frac{\partial \Delta\phi}{\partial w_K}
\;=\;
-\frac{A}{S^2}
-\frac{1}{S^2} \bar{\gamma}^\top g_K h_{\text{sum}}^\top
+\frac{2}{S^3} B
\;=\;
\frac{1}{S^2}\Big(
  -A + \bar{\gamma}^\top(2\bar g - g_K)h_{\text{sum}}^\top
\Big),
\end{equation}
where $\bar g = \sum_i p_i g_i$. Writing $a_i = \bar{\gamma}^\top g_i$ and $h_{\text{sum}} = \sum_j h_j$, one can rewrite this as
\begin{equation}
V_K \;=\; -A + \bar{\gamma}^\top(2\bar g - g_K)h_{\text{sum}}^\top
\;=\;
\sum_{i=1}^N a_i v_i,
\end{equation}
with
\begin{equation}
v_i \;=\; (2p_i - \mathbbm{1}_{\{i=K\}}) h_{\text{sum}}^\top - h_i^\top.
\end{equation}

Define mean conditioned on 
\begin{equation}
m(w,h) \;:=\; \mathbb{E}[ V_K \mid w,h ].
\end{equation}
By Assumption \ref{ass:C-spiky-u}, $\mathbb{E}[a_i \mid w,h] = \mu_i$, and $v_i$ is a deterministic function of $(w,h)$, hence
\begin{equation}
m(w,h) \;=\; \sum_{i=1}^N \mu_i v_i.
\end{equation}
Moreover,
\begin{equation}
\mathbb{E}\big[\|V_K\|_2^2 \mid w,h\big]
\;=\;
\|m(w,h)\|_2^2 + \mathrm{tr}\big(\mathrm{Cov}(V_K \mid w,h)\big)
\;\ge\;
\|m(w,h)\|_2^2.
\end{equation}

\paragraph{Step 2: Writing $m(w,h)$ as a linear combination of $h_i$.}
Using $h_{\text{sum}}^\top = \sum_j h_j^\top$, we have
\begin{equation}
m(w,h)
=
\sum_i \mu_i \big[(2p_i - \mathbbm{1}_{\{i=K\}}) h_{\text{sum}}^\top - h_i^\top\big]
=
\alpha_0(w) \sum_i h_i^\top - \sum_i \mu_i h_i^\top,
\end{equation}
where
\begin{equation}
\alpha_0(w)
\;:=\;
\sum_i \mu_i(2p_i - \mathbbm{1}_{\{i=K\}})
\;=\;
2\sum_i p_i \mu_i - \mu_K.
\end{equation}
Thus there are scalar coefficients
\begin{equation}
c_i(w) \;:=\; \alpha_0(w) - \mu_i
\end{equation}
such that
\begin{equation}
m(w,h) \;=\; \sum_{i=1}^N c_i(w)\, h_i.
\end{equation}

\paragraph{Step 3: Conditional second moment of $m$ given $w$.}
Assumptions \ref{ass:C-iid} and \ref{ass:C-structure-h} tell us that given $w$, $h_i$ are conditionally independent and
\begin{equation}
\mathrm{Cov}(h_i \mid w) \succeq \Sigma_0.
\end{equation}
Hence 
\begin{equation}
\mathrm{Cov}(m \mid w)
\;=\;
\sum_i c_i(w)^2\, \mathrm{Cov}(h_i \mid w),
\end{equation}
so that
\begin{equation}
\mathbb{E}_h\big[\|m(w,h)\|_2^2 \mid w\big]
\;=\;
\mathrm{tr}\big(\mathrm{Cov}(m \mid w)\big) + \| \mathbb{E}_h(m \mid w)\|^2_2
\;\ge\;
\sum_i c_i(w)^2\,\mathrm{tr}(\mathrm{Cov}(h_i \mid w)).
\end{equation}
By $\mathrm{Cov}(h_i \mid w) \succeq \Sigma_0$ we have $\mathrm{tr}(\mathrm{Cov}(h_i \mid w)) \ge \tau_h := \mathrm{tr}(\Sigma_0)$, hence
\begin{equation}\label{eq:m-second-moment}
\mathbb{E}_h\big[\|m(w,h)\|_2^2 \mid w\big]
\;\ge\;
\tau_h \sum_i c_i(w)^2.
\end{equation}

\paragraph{Step 4: Relating $\sum_i c_i(w)^2$ to the gap energy.} 
Define
\begin{equation}
G_{\text{gap}}^2 \;:=\; \frac{1}{N} \sum_{i\neq K} (\mu_K - \mu_i)^2 \;>\; 0.
\end{equation}
We claim that for all $w$,
\begin{equation}\label{eq:c-sum-gap}
\sum_{i=1}^N c_i(w)^2 \;\ge\; G_{\text{gap}}^2.
\end{equation}

To see this, fix the vector $(\mu_i)$ and consider the normalized gaps $r_i$, defined as 
\begin{equation}
r_i := (\mu_K - \mu_i)/ (\sqrt{N} G_{\text{gap}}) \text{ for } i\neq K, \text{ and } r_K = 0,
\end{equation}
so that $\sum_{i\neq K} r_i^2 = 1$. 
We can show that
\begin{equation}
\sum_i c_i(w)^2 \;=\; N G_{\text{gap}}^2 f_{rp}(r,p),\quad f_{rp}(r,p)
:= \sum_{i=1}^N \Big(r_i - 2\sum_{j=1}^N p_j r_j\Big)^2.
\label{eq:f-rp}
\end{equation}

The following steps show how \eqref{eq:f-rp} is derived. 
For all $i$ we can write
\begin{equation}
\mu_i = \mu_K - \sqrt{N} G_{\text{gap}}\,r_i.
\end{equation}
Hence,
\begin{align}
\sum_i p_i \mu_i
&= \sum_i p_i(\mu_K - \sqrt{N} G_{\text{gap}} r_i)
 = \mu_K - \sqrt{N} G_{\text{gap}}\sum_i p_i r_i, \\
\alpha_0(w)
&= 2\big(\mu_K - \sqrt{N} G_{\text{gap}}\sum_i p_i r_i\big) - \mu_K
 = \mu_K - 2 \sqrt{N} G_{\text{gap}}\sum_i p_i r_i.
\end{align}
Therefore
\begin{align}
c_i(w)
&:= \alpha_0(w) - \mu_i \\
&= \big(\mu_K - 2\sqrt{N}G_{\text{gap}}\sum_j p_j r_j\big)
   - \big(\mu_K - \sqrt{N}G_{\text{gap}} r_i\big) \\
&= \sqrt{N}G_{\text{gap}}\Big(r_i - 2\sum_j p_j r_j\Big),
\end{align}
and the exact form of $f_{rp}$ follows.

By Lemma \ref{lemma:app-frp-min}, under the constraints $r_i \ge 0$, $\sum_{i\neq K} r_i^2 = 1,\ p_i \ge 0,\ \sum_i p_i = 1$, 
\begin{equation}
\min f_{rp}(r,p) = \frac{1}{N}. 
\end{equation}
Therefore \eqref{eq:c-sum-gap} holds for all $w$.


Combining \eqref{eq:m-second-moment} and \eqref{eq:c-sum-gap}, we obtain a bound independent of $w$ and $h$:
\begin{equation}
\label{eq:section-C-step-4}
\mathbb{E}_h\big[\|m(w,h)\|_2^2 \mid w\big]
\;\ge\;
\tau_h G_{\text{gap}}^2.
\end{equation}

\paragraph{Step 5: Lower bound for $\mathbb{E}[\|V_K\|_2^2]$.}
Set
\begin{equation}
Z \;:=\; \|V_K\|_2^2.
\end{equation}
Recall that $m(w,h) = \mathbb{E}[ V_K \mid w,h ]$, and the second moment is lower bounded by the square of the mean,
\begin{equation}
\mathbb{E}[Z]
=
\mathbb{E}\big[ \|V_K\|_2^2 \,\big]
= \mathbb{E}_{w, h}\big[ \E[\|V_K\|_2^2 \mid w, h] \big] 
\;\ge\;
\mathbb{E}_{w, h}\big[ \|m(w,h)\|_2^2 \,\big]
\end{equation}
Using the bound \eqref{eq:section-C-step-4},
\begin{equation}
\mathbb{E}_{w, h}\big[ \|m(w,h)\|_2^2 \big]
=
\mathbb{E}_w\Big[ \mathbb{E}_h[\|m(w,h)\|_2^2 \mid w] \Big]
\;\ge\;
\tau_h G_{\text{gap}}^2.
\end{equation}
Hence
\begin{equation}\label{eq:Z-second-lb}
\mathbb{E}[Z]
\;\ge\;
\tau_h G_{\text{gap}}^2.
\end{equation}

\paragraph{Step 6: Upper bound for $\mathbb{E}[Z^2]$ with explicit $N$-dependence.}
Recall that
\begin{equation}
Z \;:=\; \|V_K\|_2^2,
\qquad
V_K = \sum_{i=1}^N a_i v_i,
\qquad
a_i = \mu_i + \varepsilon_i \in \mathbb{R}.
\end{equation}

\smallskip
\noindent\emph{Step 6(a). Re-centering the means.}
Define the average
\begin{equation}
\bar\mu := \frac{1}{N}\sum_{i=1}^N \mu_i,
\qquad
\delta_i := \mu_i - \bar\mu,
\end{equation}
so that $\sum_{i=1}^N \delta_i = 0$ and $a_i = \bar\mu + \delta_i + \varepsilon_i$.
Using the definitions
$
v_i = (2p_i - \mathbbm{1}_{\{i=K\}})h_{\text{sum}} - h_i, \;
h_{\text{sum}} = \sum_{j=1}^N h_j,
$
we obtain
\begin{align}
\sum_{i=1}^N v_i
&= \sum_{i=1}^N (2p_i - \mathbbm{1}_{\{i=K\}})h_{\text{sum}} - \sum_{i=1}^N h_i \\
&= (2\sum_{i=1}^N p_i - 1)h_{\text{sum}} - h_{\text{sum}}
= (2-1)h_{\text{sum}} - h_{\text{sum}} = 0.
\end{align}
Therefore
\begin{equation}
V_K
= \sum_{i=1}^N a_i v_i
= \sum_{i=1}^N (\bar\mu + \delta_i + \varepsilon_i) v_i
= \sum_{i=1}^N \delta_i v_i + \sum_{i=1}^N \varepsilon_i v_i.
\end{equation}
where we used $\sum_{i=1}^N v_i = 0$.

Using the inequality $(x+y)^4 \le 8(x^4 + y^4)$ for all $x,y \in \mathbb{R}$, applied to norms, we obtain
\begin{equation}
Z^2 = \|V_K\|_2^4
\;\le\;
8\bigg\|\sum_{i=1}^N \delta_i v_i\bigg\|_2^4
+
8\bigg\|\sum_{i=1}^N \varepsilon_i v_i\bigg\|_2^4.
\label{eq:Z2-split}
\end{equation}

\smallskip
\noindent\emph{Step 6(b). A bound on $\mathbb{E}\|v_i\|_2^4$.}
Recall
\begin{equation}
v_i = (2p_i - \mathbbm{1}_{\{i=K\}})\,h_{\text{sum}} - h_i,
\qquad
h_{\text{sum}} = \sum_{j=1}^N h_j.
\end{equation}
Since $|2p_i - \mathbbm{1}_{\{i=K\}}| \le 2$, there exists a constant $C_3>0$ such that
\begin{equation}
\|v_i\|_2 \;\le\; C_3\big(\|h_{\text{sum}}\|_2 + \|h_i\|_2\big),
\end{equation}
and hence, by $(a+b)^4 \le 8(a^4 + b^4)$,
\begin{equation}
\|v_i\|_2^4
\;\le\;
C_4\Big(\|h_{\text{sum}}\|_2^4 + \|h_i\|_2^4\Big)
\label{eq:vi-bound}
\end{equation}
for some constant $C_4>0$ independent of $N$.

Write $h_j = m + z_j$, where $m := \mathbb{E}[h_1]$ and $\mathbb{E}[z_j]=0$.
Then
\begin{equation}
h_{\text{sum}} = \sum_{j=1}^N h_j = N m + \sum_{j=1}^N z_j,
\end{equation}
so
\begin{equation}
\|h_{\text{sum}}\|_2^4
\le
8\,\|Nm\|_2^4
+
8\bigg\|\sum_{j=1}^N z_j\bigg\|_2^4.
\end{equation}
Standard fourth-moment bounds for sums of independent mean-zero random vectors imply that there is a constant
$C_5>0$ (depending only on the one-sample distribution of $h_i$ and the dimension $d_h$, but not on $N$) such that
\begin{equation}
\mathbb{E}\bigg\|\sum_{j=1}^N z_j\bigg\|_2^4
\;\le\;
C_5 N^2.
\end{equation}
Therefore,
\begin{equation}
\mathbb{E}\|h_{\text{sum}}\|_2^4
\;\le\;
8N^4\|m\|_2^4 + 8C_5 N^2.
\end{equation}

Taking expectations in \eqref{eq:vi-bound} and using Assumption~\ref{ass:C-structure-h}, which guarantees
$\mathbb{E}\|h_i\|_2^4 \le M_h$, we obtain
\begin{equation}
\mathbb{E}\|v_i\|_2^4
\;\le\;
C_4\Big(\mathbb{E}\|h_{\text{sum}}\|_2^4 + \mathbb{E}\|h_i\|_2^4\Big)
\;\le\;
C_4\big(8N^4\|m\|_2^4 + 8C_5 N^2 + M_h\big)
\;\le\;
C_v N^4,
\label{eq:bound-on-Ev4}
\end{equation}
for some constant $C_v>0$ depending only on the one-sample distribution of $h_i$,
but not on $N$ or $G_{\text{gap}}$.

\smallskip
\noindent\emph{Step 6(c). Bounding the signal term $\mathbb{E}\big\|\sum_{i=1}^N \delta_i v_i\big\|_2^4$.}
For arbitrary vectors $\beta_1,\dots,\beta_N$,
\begin{equation}
\label{eq:beta-example}
\bigg\|\sum_{i=1}^N \beta_i\bigg\|_2^4
\;\le\;
\bigg(\sum_{i=1}^N \|\beta_i\|_2\bigg)^4
\;\le\;
N^3 \sum_{i=1}^N \|\beta_i\|_2^4.
\end{equation}
Applying this with $\beta_i := \delta_i v_i$ gives
\begin{equation}
\bigg\|\sum_{i=1}^N \delta_i v_i\bigg\|_2^4
\;\le\;
N^3 \sum_{i=1}^N |\delta_i|^4\,\|v_i\|_2^4.
\end{equation}
Taking expectations and using the bound on $\mathbb{E}\|v_i\|_2^4$ from \eqref{eq:bound-on-Ev4},
\begin{equation}
\mathbb{E}\bigg\|\sum_{i=1}^N \delta_i v_i\bigg\|_2^4
\;\le\;
N^3 \sum_{i=1}^N |\delta_i|^4 \,\mathbb{E}\|v_i\|_2^4
\;\le\;
N^3 \sum_{i=1}^N |\delta_i|^4 \, C_v N^4
=
C_v N^7 \sum_{i=1}^N \delta_i^4.
\label{eq:signal-4th}
\end{equation}

Next, we relate the centered means $(\delta_i)$ to the gap energy $G_{\text{gap}} = \sqrt{\frac{1}{N} \sum_{i\neq K} (\mu_K - \mu_i)^2}$. Recall $\mu_i = \bar\mu + \delta_i$. Then
\begin{equation}
\mu_K - \mu_i
= (\bar\mu + \delta_K) - (\bar\mu + \delta_i)
= \delta_K - \delta_i,
\end{equation}
so
\begin{align*}
G_{\text{gap}}^2
&=
\frac{1}{N} \sum_{i\neq K} (\delta_K - \delta_i)^2
=
\frac{1}{N}\sum_{i\neq K} \big(\delta_K^2 - 2\delta_K\delta_i + \delta_i^2\big) \\
&=
\frac{1}{N}\left((N-1)\delta_K^2 - 2\delta_K \sum_{i\neq K} \delta_i + \sum_{i\neq K}\delta_i^2 \right).
\end{align*}
Using $\sum_{i=1}^N \delta_i = 0$ gives $\sum_{i\neq K}\delta_i = -\delta_K$, hence
\begin{equation}
N G_{\text{gap}}^2
=
(N-1)\delta_K^2 + 2\delta_K^2 + \sum_{i\neq K}\delta_i^2
=
(N+1)\delta_K^2 + \sum_{i\neq K}\delta_i^2
\;\ge\;
\sum_{i=1}^N \delta_i^2.
\end{equation}
Thus
\begin{equation}
\label{eq:delta2-bound}
\sum_{i=1}^N \delta_i^2 \;\le\; N G_{\text{gap}}^2.
\end{equation}
Since all $\delta_i^2 \ge 0$, we have
\begin{equation}
\sum_{i=1}^N \delta_i^4
\;\le\;
\Big(\sum_{i=1}^N \delta_i^2\Big)^2
\;\le\;
N^2 G_{\text{gap}}^4.
\end{equation}
Substituting into \eqref{eq:signal-4th} yields
\begin{equation}
\mathbb{E}\bigg\|\sum_{i=1}^N \delta_i v_i\bigg\|_2^4
\;\le\;
C_v N^9 G_{\text{gap}}^4.
\label{eq:signal-4th-Ggap}
\end{equation}

\smallskip
\noindent\emph{Step 6(d). Bounding the noise term $\mathbb{E}\big\|\sum_{i=1}^N \varepsilon_i v_i\big\|_2^4$.}
Applying the inequality \eqref{eq:beta-example} with $\beta_i := \varepsilon_i v_i$,
\begin{equation}
\bigg\|\sum_{i=1}^N \varepsilon_i v_i\bigg\|_2^4
\;\le\;
N^3 \sum_{i=1}^N |\varepsilon_i|^4 \,\|v_i\|_2^4.
\end{equation}
Taking expectations,
\begin{align}
\mathbb{E}\bigg\|\sum_{i=1}^N \varepsilon_i v_i\bigg\|_2^4
&=
\mathbb{E}\Big[
  N^3 \sum_{i=1}^N |\varepsilon_i|^4 \,\|v_i\|_2^4
\Big] \nonumber\\
&\le
N^3 \sum_{i=1}^N
\mathbb{E}\Big[
  \|v_i\|_2^4\;\mathbb{E}\big[|\varepsilon_i|^4 \mid w,h\big]
\Big] \nonumber\\
&\le
N^3 \sum_{i=1}^N
\mathbb{E}\Big[
  \|v_i\|_2^4 M_u
\Big]
\quad\text{(by Assumption~\ref{ass:C-structure-u})} \nonumber\\
&\le
N^3 \sum_{i=1}^N M_u\,\mathbb{E}\|v_i\|_2^4
\;\le\;
N^3 \sum_{i=1}^N M_u \cdot C_v N^4 \nonumber\\
&=
C_v M_u N^8.
\label{eq:noise-4th}
\end{align}

\smallskip
\noindent\emph{Step 6(e). Putting the bounds together.}
Combining \eqref{eq:Z2-split}, \eqref{eq:signal-4th-Ggap}, and \eqref{eq:noise-4th}, we obtain
\begin{equation}
\mathbb{E}[Z^2]
=
\mathbb{E}\|V_K\|_2^4
\;\le\;
8\,\mathbb{E}\bigg\|\sum_{i=1}^N \delta_i v_i\bigg\|_2^4
+
8\,\mathbb{E}\bigg\|\sum_{i=1}^N \varepsilon_i v_i\bigg\|_2^4
\;\le\;
8 C_v N^9 G_{\text{gap}}^4 + 8 C_v M_u N^8.
\end{equation}
Define
\begin{equation}
K_1 := 8C_v,
\qquad
K_2 := 8C_v M_u.
\end{equation}
Thus
\begin{equation}
\mathbb{E}[Z^2]
\;\le\;
K_1 N^9 G_{\text{gap}}^4
+
K_2 N^8.
\label{eq:Z-fourth-ub-constant}
\end{equation}

\paragraph{Step 7: Paley--Zygmund inequality.}
Recall from Step~5 that
\begin{equation}\tag{\ref*{eq:Z-second-lb}}
\mathbb{E}[Z]
= \mathbb{E}\big[\|V_K\|_2^2\big]
\;\ge\;
\tau_h G_{\text{gap}}^2.
\end{equation}
We now apply the Paley--Zygmund inequality to the nonnegative random variable $Z$.
For any $\lambda \in (0,\mathbb{E}[Z])$,
\begin{equation}
\mathbb{P}(Z \ge \lambda)
\;\ge\;
\frac{(\mathbb{E}[Z] - \lambda)^2}{\mathbb{E}[Z^2]}.
\end{equation}

We choose $\lambda$ proportional to $G_{\text{gap}}^2$:
\begin{equation}
\lambda := \alpha\,G_{\text{gap}}^2
\end{equation}
for some $0 < \alpha < \tau_h/N$.
Then $0<\lambda<\mathbb{E}[Z]$ and
\begin{equation}
\mathbb{E}[Z] - \lambda
\;\ge\;
(\tau_h - \alpha) G_{\text{gap}}^2.
\end{equation}
Using the lower bound on $\mathbb{E}[Z]$ and the upper bound \eqref{eq:Z-fourth-ub-constant} on $\mathbb{E}[Z^2]$, we obtain
\begin{equation}
\label{eq:second-last}
\mathbb{P}\big(Z \ge \alpha G_{\text{gap}}^2\big)
\;\ge\;
\frac{(\tau_h - \alpha)^2 G_{\text{gap}}^4}
     {K_1 N^9 G_{\text{gap}}^4 + K_2 N^8}.
\end{equation}

Recall that
\begin{equation}
\frac{\partial \Delta\phi}{\partial w_K}
\;=\;
\frac{1}{S^2}\,V_K,
\qquad
S = \sum_{i=1}^N w_i,
\end{equation}
so on the event $\{Z \ge \lambda\}$ (equivalently $\{\|V_K\|_2 \ge \sqrt{\lambda}\}$) we have
\begin{equation}
\bigg\|\frac{\partial \Delta\phi}{\partial w_K}\bigg\|_2
\;=\;
\frac{\|V_K\|_2}{S^2}
\;\ge\;
\frac{\sqrt{\lambda}}{S^2}
=
\frac{\sqrt{\alpha}\,G_{\text{gap}}}{S^2}.
\end{equation}
Therefore,
\begin{equation}
\mathbb{P}\bigg(
  \bigg\|\frac{\partial \Delta\phi}{\partial w_K}\bigg\|_2
  \;\ge\;
  \frac{\sqrt{\alpha}\,G_{\text{gap}}}{S^2}
\bigg)
\;\ge\;
\frac{(\tau_h - \alpha)^2 G_{\text{gap}}^4}
     {K_1 N^9 G_{\text{gap}}^4 + K_2 N^8},
\end{equation}
for any $\alpha \in (0,\tau_h)$, where $\tau_h$, $K_1$, and $K_2$ are positive constants depending only on the distribution of $(w_i,g_i,h_i)$ and not on $N$ or $G_{\text{gap}}$.

\end{proof}

\subsection{Corollary \ref*{thm:local-lipschitz-resampling}: A Local Lower Bound on the Lipschitz-like Ratio $\frac{\|\doublewidetilde{\Delta\phi} - \Delta\phi\|_2}{|\widetilde{w}_K - w_K|}$}
In this subsection, we consider the effect of resampling the dominant data point $(x_K, y_K)$, which enters the calculation through the following: $w_i = s_{\phi}(x_i), h_i = \frac{\partial s_{\phi}(x_i)}{\partial \phi}$ and $g_i = \frac{\partial f(x_i)}{\partial \theta}$. Hence, this is equivalent to resampling the tuple $(g_K,w_K,h_K)$. More formally, we consider the following setting:

\begin{definition}[Resampled configuration]
\label{def:resample} 
Let $(g_i,w_i,h_i)_{i=1}^N$ be the original sample and let
$\Delta\phi$ be defined as in \eqref{eq:final-phi-update}.
Let $(g_K',w_K',h_K')$ be an
i.i.d copy of $(g_K,w_K,h_K)$, and set
\begin{equation}
(g_i^\dagger,w_i^\dagger,h_i^\dagger)
:=
\begin{cases}
(g_i,w_i,h_i), & i\neq K,\\
(g_K',w_K',h_K'), & i=K.
\end{cases}
\end{equation}
Let $\doublewidetilde{\Delta\phi}$ be the quantity obtained by applying
\eqref{eq:final-phi-update} to the resampled tuples
$(g_i^\dagger,w_i^\dagger,h_i^\dagger)_{i=1}^N$, and let
\(
\delta w := w_K' - w_K.
\)
\end{definition}

\medskip

We also need one more mild assumption:
\begin{assumption}[Non-degenerate support of $(g_i,w_i,h_i)$]\label{ass:C-support}
In addition to Assumption~\ref{ass:C-iid}, assume that the law of
$(g_i,w_i,h_i)\in\mathbb{R}^{d_g}\times(0,\infty)\times\mathbb{R}^{d_h}$ is
absolutely continuous with respect to Lebesgue measure. That is, there exists
a Borel-measurable function $f:\mathbb{R}^{d_g+1+d_h}\to[0,\infty)$ such that
for every Borel set $A$,
\begin{equation}
P\big((g_i,w_i,h_i)\in A\big) = \int_A f(z)\,dz.
\end{equation}
Moreover, there exists a nonempty open set $D\subset\mathbb{R}^{d_g+1+d_h}$
such that $f(z) > 0$ for all $z\in D$. 
With the event $
E_{\mathrm{grad}}
:=
\bigg\{
  \bigg\|\frac{\partial\Delta\phi}{\partial w_K}\bigg\|_2
  \;\ge\;
  \frac{\sqrt{\alpha} G_{\mathrm{gap}}}{S^2}
\bigg\}$, $P(E_{\mathrm{grad}} \cap D^N)>0$.
\end{assumption}

\medskip

{
\renewcommand{\thetheorem}{\ref*{thm:local-lipschitz-resampling}}
\begin{corollary}[Local Lipschitz-type lower bound under resampling, restated]
Let $K\in\{1,\dots,N\}$ be the dominant index. Suppose
Assumptions~\ref{ass:C-iid}, \ref{ass:C-structure-h}, \ref{ass:C-structure-u},
\ref{ass:C-spiky-u}, and
\ref{ass:C-support} hold. Let
$(g_i^\dagger,w_i^\dagger,h_i^\dagger)$ and the associated
$\doublewidetilde{\Delta\phi}$ be defined as in
Definition~\ref{def:resample}. Then there exists a constant $K_3>0$,
depending only on the distribution and dimensions of $(g_i,w_i,h_i)$ and the validation
gradient $\bar\gamma$ (but not on $N$ or $G_{\mathrm{gap}}$), such that for
every $\alpha\in(0,\tau_h)$,
\begin{equation}
P\bigg(
  \frac{\|\doublewidetilde{\Delta\phi} - \Delta\phi\|_2}{|\delta w|}
  \;\ge\;
  K_3 \,\frac{\sqrt{\alpha}\,G_{\mathrm{gap}}}{S^2}
\bigg)
\;>\; 0.
\end{equation}
\end{corollary}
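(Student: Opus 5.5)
The plan is to view $\Delta\phi$ as a smooth function of the $K$-th tuple $(g_K,w_K,h_K)$ with all other tuples held fixed, and to perturb only the $w_K$ coordinate while keeping $(g_K',h_K')$ close to $(g_K,h_K)$. Write $F(g,w,h)$ for the value of \eqref{eq:final-phi-update} when the $K$-th tuple is replaced by $(g,w,h)$. Then $\Delta\phi=F(g_K,w_K,h_K)$ and $\doublewidetilde{\Delta\phi}=F(g_K',w_K',h_K')$. On $(0,\infty)$, $F$ is a rational function of $w$ with denominator $S^2$ ($S>0$) and is polynomial in $g,h$. It is therefore $C^2$ on any compact subset of $D$, which gives a uniform bound on its second derivatives there.

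First I fix an outcome in $E_{\mathrm{grad}}\cap D^N$; this event has positive probability by Assumption \ref{ass:C-support}. On it, Theorem \ref{thm:gradient-chi-dwk-lower-bound} gives $\|\partial_w F\|_2\ge \sqrt{\alpha}G_{\mathrm{gap}}/S^2$. Next I split the increment:
\begin{equation*}
F(g_K',w_K',h_K')-F(g_K,w_K,h_K)=\partial_wF\,\delta w+R_w+\big[F(g_K',w_K',h_K')-F(g_K,w_K',h_K)\big].
\end{equation*}
Taylor's theorem bounds the remainder by $\|R_w\|_2\le \tfrac12 M_2|\delta w|^2$. The last bracket is bounded by $M_1(\|g_K'-g_K\|+\|h_K'-h_K\|)$, where $M_1,M_2$ bound the relevant derivatives on a compact neighbourhood inside $D$. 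I then restrict to the resampling event
\begin{equation*}
\{\,|\delta w|\in[\rho/2,\rho],\ \|g_K'-g_K\|+\|h_K'-h_K\|\le \kappa\rho\,\}.
\end{equation*}
Because $(g_K',w_K',h_K')$ is an independent copy whose density is positive on the open set $D$, this event has positive conditional probability for small $\rho,\kappa>0$. On it,
\begin{equation*}
\frac{\|\doublewidetilde{\Delta\phi}-\Delta\phi\|_2}{|\delta w|}\ \ge\ \|\partial_wF\|_2-\tfrac12M_2\rho-2M_1\kappa .
\end{equation*}
Choosing $\rho,\kappa$ so that the two error terms are at most half of $\sqrt{\alpha}G_{\mathrm{gap}}/S^2$ gives the claim with $K_3=1/2$. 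Integrating the positive conditional probability over the positive-probability base event, by Fubini and independence, gives positivity of the joint event.

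The main obstacle is the requirement that $K_3$ not depend on $N$ or $G_{\mathrm{gap}}$. The construction above fixes $K_3=1/2$ and pushes all $N$-, $G_{\mathrm{gap}}$- and configuration-dependence into the choice of $\rho,\kappa$. The statement only asserts positive probability, with no quantitative lower bound, so that dependence is harmless. The delicate point I will check is measurability and the positivity of the conditional probability. This needs the chosen outcome to lie in the interior of the support, which membership in the open set $D$ provides, and needs $\rho,\kappa$ chosen measurably; otherwise I pick a fixed small ball around a density point of $E_{\mathrm{grad}}\cap D^N$ on which the bounds hold uniformly.
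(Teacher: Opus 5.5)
Your proof is correct. Its skeleton is the paper's: a good configuration in $E_{\mathrm{grad}}\cap D^N$, a Taylor expansion in $w_K$, and positivity of the density on the open set $D$. The execution differs in two places.

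First, the paper reduces to the scalar coordinate $j^\star$ of $\partial\Delta\phi/\partial w_K$ with the largest magnitude before applying Taylor's theorem. That costs a factor $1/\sqrt{d_h}$ and gives $K_3=3/(8\sqrt{d_h})$. Your vector-valued remainder bound $\|R_w\|_2\le\tfrac12 M_2|\delta w|^2$ avoids this and gives the dimension-free $K_3=1/2$.

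Second, the paper never bounds derivatives in $(g_K,h_K)$. It freezes $(g_K^\star,h_K^\star)$ and picks a single $w^\dagger$ at fixed positive distance from $w_K^\star$. It then uses only continuity of the normalized difference quotient at $(Z^\star,g_K^\star,w^\dagger,h_K^\star)$. Continuity holds there because $\delta w$ is bounded away from $0$. This yields a product neighbourhood $U'\times V'$ whose probability is positive by independence. You instead control the $(g_K,h_K)$ perturbation quantitatively through $M_1$, which is why you need the coupled region $\|g_K'-g_K\|+\|h_K'-h_K\|\le\kappa\rho$, $|\delta w|\ge\rho/2$. This is more bookkeeping, but it shows the conditional probability is positive for every configuration in $E_{\mathrm{grad}}\cap D^N$, not just near one point.

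Your measurability concern does not actually arise. Apply Tonelli to the single Borel event $\{\delta w\neq 0,\ \|\doublewidetilde{\Delta\phi}-\Delta\phi\|_2\ge\tfrac12\sqrt{\alpha}\,G_{\mathrm{gap}}|\delta w|/S^2\}$. You have shown its sections have positive mass on a positive-probability set of configurations. The configuration-dependent $\rho,\kappa$ are only needed to check that each section is positive, never to define the integrated event.
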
}

\begin{proof}
The proof has three parts. We first show that there exists a deterministic
configuration where the directional derivative of $\Delta\phi$ with respect
to $w_K$ is nonzero. We then use a Taylor expansion in $w_K$ to produce a
finite-difference lower bound at that configuration. Finally, we invoke
absolute continuity of the joint distribution and continuity of $\Delta\phi$
to lift this to an event of strictly positive probability under the
resampling scheme.

\medskip
\paragraph{Step 1: Existence of a configuration with large $\partial\Delta\phi/\partial w_K$.}
This step is mostly just bookkeeping required by mathematical rigor. 
By Theorem~\ref{thm:gradient-chi-dwk-lower-bound}, under
Assumptions~\ref{ass:C-iid}, \ref{ass:C-structure-h},
\ref{ass:C-structure-u}, and \ref{ass:C-spiky-u}, there exist constants
$\tau_h,K_1,K_2>0$ (independent of $N$ and $G_{\mathrm{gap}}$) such that for
all $\alpha\in(0,\tau_h)$,
\begin{equation}\label{eq:grad-lb-cor}
P\bigg(
  \bigg\|\frac{\partial\Delta\phi}{\partial w_K}\bigg\|_2
  \;\ge\;
  \frac{\sqrt{\alpha} G_{\mathrm{gap}}}{S^2}
\bigg)
\;\ge\;
\frac{(\tau_h - \alpha)^2 G_{\mathrm{gap}}^4}
     {K_1 N^9 G_{\mathrm{gap}}^4 + K_2 N^8}
\;>\; 0.
\end{equation}
Define the gradient event
\begin{equation}
E_{\mathrm{grad}}
:=
\bigg\{
  \bigg\|\frac{\partial\Delta\phi}{\partial w_K}\bigg\|_2
  \;\ge\;
  \frac{\sqrt{\alpha} G_{\mathrm{gap}}}{S^2}
\bigg\}.
\end{equation}
Then $P(E_{\mathrm{grad}})>0$.

Let $Z$ denote the full configuration
\begin{equation}
Z := (z_1,\dots,z_N)
\quad\text{with}\quad
z_i := (g_i,w_i,h_i)\in\mathbb{R}^{d_g+1+d_h},
\end{equation}
viewed as a random vector in
$\big(\mathbb{R}^{d_g}\times(0,\infty)\times\mathbb{R}^{d_h}\big)^N$.
By Assumption~\ref{ass:C-support}, each $z_i$ has density $f>0$
on the open set $D\subset\mathbb{R}^{d_g+1+d_h}$, so $Z$ has joint density
\begin{equation}
f_Z(Z) = \prod_{i=1}^N f(z_i),
\end{equation}
which is strictly positive on $D^N$. In particular, the law of $Z$ is
supported on $D^N$ and $f_Z(Z)>0$ for all $Z\in D^N$.

Since $P(E_{\mathrm{grad}} \cap D^N)>0$ and $Z$ has density $f_Z$, the integral
\begin{equation}
\int_{E_{\mathrm{grad}} \cap D^N} f_Z(Z)\,dZ > 0,
\end{equation}
so there exists at least one point
\begin{equation}
Z^\star = (z_1^\star,\dots,z_N^\star)
= (g_1^\star,w_1^\star,h_1^\star,\dots,g_N^\star,w_N^\star,h_N^\star)
\in E_{\mathrm{grad}} \cap D^N
\end{equation}
such that $f_Z(Z^\star)>0$. In particular, each triple
$(g_i^\star,w_i^\star,h_i^\star)\in D$, and the corresponding sum of weights
\begin{equation}
S^\star := \sum_{i=1}^N w_i^\star > 0.
\end{equation}

At this fixed configuration $Z^\star$, define
\begin{equation}
G^\star
:=
\bigg\|\frac{\partial\Delta\phi}{\partial w_K}(Z^\star)\bigg\|_2.
\end{equation}
From $Z^\star\in E_{\mathrm{grad}}$, we have
\begin{equation}\label{eq:grad-lb-star}
G^\star
\;\ge\;
\frac{\sqrt{\alpha}G_{\mathrm{gap}}}{(S^\star)^2},
\end{equation}
where $S^\star = \sum_{i=1}^N w^{\star}_i$.

\medskip
\paragraph{Step 2: Local finite-difference lower bound around $Z^\star$.}
We would like to apply a Taylor-style argument, but $\frac{\partial\Delta\phi}{\partial w_K}$ is a vector, which creates some difficulties. To work around that, we single out one component, indexed by $j^\star$, from the vector and focus the Taylor argument there. 

We define $j^\star$ as an index in the vector $\frac{\partial\Delta\phi}{\partial w_K}(Z^\star)$ that takes on the maximum absolute value. 
\begin{equation}
j^\star := \argmax_{1\le j\le d_h} \Bigg| \bigg[\frac{\partial\Delta\phi}{\partial w_K}(Z^\star)\bigg]_j\Bigg|
\end{equation}
We now treat $[\Delta\phi]_{j^\star}$ as a scalar function of $w_K$ while
keeping all other coordinates fixed at $Z^\star$.

Define the scalar function
\begin{equation}
\psi(w)
:=
[\Delta\phi(Z(w))]_{j^\star},
\end{equation}
where $Z(w)$ denotes the configuration obtained from $Z^\star$ by replacing
$w_K^\star$ with $w$:
\begin{equation}
Z(w)
:=
\big(g_1^\star,w_1^\star,h_1^\star,\dots,
      g_K^\star,w,h_K^\star,\dots,
      g_N^\star,w_N^\star,h_N^\star\big).
\end{equation}

Because $w_i>0$ for all $i$, the sum
\begin{equation}
S(w) = \sum_{i\neq K} w_i^\star + w
\end{equation}
is strictly positive for all $w$ in a neighborhood of $w_K^\star$.
The map $Z\mapsto\Delta\phi(Z)$ is rational in the weights $(w_1,\dots,w_N)$
with denominators $S$ and $S^2$, so it is twice continuously differentiable ($C^2$) in any region where $S>0$.
Therefore $\psi$ is $C^2$ in a neighborhood of $w_K^\star$.

Let
\begin{equation}
\psi'(w_K^\star)
:= \bigg[\frac{\partial\Delta\phi}{\partial w_K}(Z^\star)\bigg]_{j^\star}.
\end{equation}
Then from the preceding bound (\ref{eq:grad-lb-star}) we have
\begin{equation}\label{eq:psi-prime-lb-star}
|\psi'(w_K^\star)|
\;\ge\;
\frac{\sqrt{\alpha}}{\sqrt{d_h}}\frac{G_{\mathrm{gap}}}{(S^\star)^2}.
\end{equation}

By $C^2$-smoothness, there exists $r_1>0$ such that
$[w_K^\star-r_1,w_K^\star+r_1]\subset(0,\infty)$ and $\psi''$ is continuous
on this interval. Define
\begin{equation}
L_1 := \sup_{|w-w_K^\star|\le r_1} |\psi''(w)| < \infty.
\end{equation}
Now set
\begin{equation}
r_2 := \min\Bigg\{r_1,\ \frac{|\psi'(w_K^\star)|}{4L_1+1}\Bigg\} > 0.
\end{equation}

For any $w$ with $0<|w-w_K^\star|\le r_2$, Taylor's theorem gives
\begin{equation}
\psi(w) - \psi(w_K^\star)
= \psi'(w_K^\star)(w-w_K^\star)
 + \tfrac12\psi''(\xi)(w-w_K^\star)^2
\end{equation}
for some $\xi$ between $w_K^\star$ and $w$. Dividing by $|w-w_K^\star|$,
\begin{equation}
\frac{|\psi(w) - \psi(w_K^\star)|}{|w-w_K^\star|}
\;\ge\;
|\psi'(w_K^\star)| - \frac12|\psi''(\xi)|\,|w-w_K^\star|.
\end{equation}
Using $|\psi''(\xi)|\le L_1$ and $|w-w_K^\star|\le r_2$,
\begin{equation}
\frac12|\psi''(\xi)|\,|w-w_K^\star|
\;\le\;
\frac12 L_1 r_2
\;\le\;
\frac12 L_1 \frac{|\psi'(w_K^\star)|}{4L_1+1}
\;\le\;
\frac{1}{4}|\psi'(w_K^\star)|.
\end{equation}
Hence, for all $w$ with $0<|w-w_K^\star|\le r_2$,
\begin{equation}\label{eq:psi-fd-lb}
\frac{|\psi(w) - \psi(w_K^\star)|}{|w-w_K^\star|}
\;\ge\;
\frac34\,|\psi'(w_K^\star)|.
\end{equation}

Now choose any $w^\dagger$ such that
\begin{equation}
0 < |w^\dagger - w_K^\star| \le r_2
\end{equation}
and such that
\begin{equation}
z_K^\dagger := (g_K^\star,w^\dagger,h_K^\star)\in D,
\end{equation}
which is possible because $D$ is open and
$(g_K^\star,w_K^\star,h_K^\star)\in D$. Let $Z^\dagger_\star$ denote the
configuration obtained from $Z^\star$ by replacing the $K$th triple
$(g_K^\star,w_K^\star,h_K^\star)$ with $z_K^\dagger$; denote by
$\Delta\phi_\star^\dagger$ the corresponding update, and let
\begin{equation}
\delta w_\star := w^\dagger - w_K^\star.
\end{equation}
Then $\Delta\phi_\star^\dagger = \Delta\phi(Z(w^\dagger))$ and
$\Delta\phi^\star = \Delta\phi(Z(w_K^\star))$, so by
\eqref{eq:psi-fd-lb} and \eqref{eq:psi-prime-lb-star},
\begin{equation}
\frac{\|\Delta\phi_\star^\dagger - \Delta\phi^\star\|_2}{|\delta w_\star|}
\;\ge\;
\frac{|\psi(w^\dagger) - \psi(w_K^\star)|}{|\delta w_\star|}
\;\ge\;
\frac34\,|\psi'(w_K^\star)|
\;\ge\;
\frac{3}{4\sqrt{d_h}}\,
\frac{\sqrt{\alpha}G_{\mathrm{gap}}}{(S^\star)^2}.
\end{equation}
Define
\begin{equation}
K_4 := \frac{3}{4\sqrt{d_h}}.
\end{equation}
We have thus found a \emph{deterministic} pair of configurations
$(Z^\star,Z^\dagger_\star)$ such that
\begin{equation}\label{eq:pair-star-LB}
\frac{\|\Delta\phi_\star^\dagger - \Delta\phi^\star\|_2}{|\delta w_\star|}
\;\ge\;
K_4\,\frac{\sqrt{\alpha}G_{\mathrm{gap}}}{(S^\star)^2}.
\end{equation}

\medskip
\paragraph{Step 3: From a good point to a positive-probability neighborhood.}
We now connect the deterministic pair $(Z^\star,Z_\star^\dagger)$ from
Step~2 to the random resampled configuration of
Definition~\ref{def:resample}, and turn the pointwise lower bound
\eqref{eq:pair-star-LB} into a positive-probability event.

\medskip
\noindent\emph{Step 3(a). A basic continuity fact.}
We first recall the simple fact from real analysis that we will use
repeatedly:

\emph{Let a function $F:\mathbb{R}^m\to\mathbb{R}$ be continuous, and suppose
$F(x_0) > 0$ at some point $x_0\in\mathbb{R}^m$. Then there exists a
neighborhood $B$ around $x_0$ such that $F(x)\ge F(x_0)/2$ for all
$x\in B$.}

Indeed, set $\varepsilon := F(x_0)/2>0$ and use continuity of $F$ at
$x_0$ to obtain $\delta>0$ such that
$\|x-x_0\|<\delta$ implies $|F(x)-F(x_0)|<\varepsilon$. Then
$\|x-x_0\|<\delta$ implies
\begin{equation}
F(x) \;\ge\; F(x_0) - |F(x)-F(x_0)|
       \;\ge\; F(x_0) - \varepsilon
       \;=\; \frac{F(x_0)}{2}.
\end{equation}

We will apply this to a function $F$ defined on the product space of
configurations and resampled triples.

\medskip
\noindent\emph{Step 3(b). Defining a continuous finite-difference functional.}
For any configuration $Z=(g_1,w_1,h_1,\dots,g_N,w_N,h_N)$ and any triple
$(\widetilde{g}_K,\widetilde{w}_K,\widetilde{h}_K)$, let $Z^\dagger$
denote the configuration obtained by replacing the $K$th triple of $Z$
with $(\widetilde{g}_K,\widetilde{w}_K,\widetilde{h}_K)$. Define
\begin{equation}
\delta w(Z,\widetilde{w}_K) := \widetilde{w}_K - w_K,
\qquad
S(Z) := \sum_{i=1}^N w_i,
\end{equation}
and consider the scalar functional
\begin{equation}
F(Z,\widetilde{g}_K,\widetilde{w}_K,\widetilde{h}_K)
:=
\frac{S(Z)^2}{\sqrt{\alpha}G_{\mathrm{gap}}}
\cdot
\frac{\|\Delta\phi(Z^\dagger) - \Delta\phi(Z)\|_2}
     {|\delta w(Z,\widetilde{w}_K)|},
\end{equation}
defined on the domain where all weights are positive and
$\delta w(Z,\widetilde{w}_K)\neq 0$. With the definition of $F$, we can see that \eqref{eq:pair-star-LB} is
\begin{equation}
F(Z^\star,g_K^\star,w^\dagger,h_K^\star) \;\ge\; K_4 > 0.
\end{equation}

We also need to show $F$ is continuous. By the explicit formula
\begin{equation}
\Delta\phi(Z) = \frac{1}{S(Z)}A(Z) - \frac{1}{S(Z)^2}B(Z),
\end{equation}
with $A(Z)$ and $B(Z)$ linear in $(g_i,w_i,h_i)$, the map
$Z\mapsto\Delta\phi(Z)$ is continuous whenever $S(Z)>0$. The same is
true for $Z^\dagger$. The maps $(Z,\widetilde{g}_K,\widetilde{w}_K,\widetilde{h}_K)
\mapsto S(Z)$ and
$(Z,\widetilde{g}_K,\widetilde{w}_K,\widetilde{h}_K)\mapsto
\delta w(Z,\widetilde{w}_K)$ are also continuous. Thus $F$ is continuous
at any point where $S(Z)>0$, $\widetilde{w}_K>0$, and
$\delta w(Z,\widetilde{w}_K)\neq 0$.

At the point
\begin{equation}
(Z^\star,g_K^\star,w^\dagger,h_K^\star),
\end{equation}
constructed in Step~2, we have $S(Z^\star)=S^\star>0$,
$\widetilde{w}_K=w^\dagger>0$, and
$\delta w(Z^\star,w^\dagger)=\delta w_\star:=w^\dagger-w_K^\star\neq 0$,
so $F$ is continuous there. 

\medskip
\noindent\emph{Step 3(c). A product neighborhood where $F$ stays large.}
We now apply the basic continuity fact from Step~3(a) to the function $F$
at the point
\begin{equation}
x_0 := (Z^\star,g_K^\star,w^\dagger,h_K^\star).
\end{equation}
Since $F$ is continuous at $x_0$ and $F(x_0)\ge K_4$, there exists a
neighborhood
\begin{equation}
N \subset
\Big(\big(\mathbb{R}^{d_g}\times(0,\infty)\times\mathbb{R}^{d_h}\big)^N
\times \mathbb{R}^{d_g+1+d_h}\Big)
\end{equation}
of $x_0$ such that
\begin{equation}
F(Z,\widetilde{g}_K,\widetilde{w}_K,\widetilde{h}_K)
\;\ge\;
\frac{K_4}{2}
\quad\text{for all }(Z,\widetilde{g}_K,\widetilde{w}_K,\widetilde{h}_K)\in N.
\end{equation}

The product space
\(
\big(\mathbb{R}^{d_g}\times(0,\infty)\times\mathbb{R}^{d_h}\big)^N
\times \mathbb{R}^{d_g+1+d_h}
\)
can be factorized into open sets $U\times V$, where
\(
U \subset \big(\mathbb{R}^{d_g}\times(0,\infty)\times\mathbb{R}^{d_h}\big)^N,
V \subset \mathbb{R}^{d_g+1+d_h}
\)
are open. Recall that $F(\cdot) > K_4$ on $(Z^\star,g_K^\star,w^\dagger,h_K^\star)$. Therefore, we may choose two specific open sets $U$ and $V$ such that
\(
(Z^\star,g_K^\star,w^\dagger,h_K^\star) \in U\times V\), and that for every $Z\in U$ and every
$(\widetilde{g}_K,\widetilde{w}_K,\widetilde{h}_K)\in V$, 
\begin{equation}\label{eq:F-lb-UV}
F(Z,\widetilde{g}_K,\widetilde{w}_K,\widetilde{h}_K)
\;\ge\;
\frac{K_4}{2}.
\end{equation}

Finally, the map
\begin{equation}
(Z,\widetilde{w}_K)\mapsto \delta w(Z,\widetilde{w}_K)
= \widetilde{w}_K - w_K
\end{equation}
is continuous, and at $(Z^\star,w^\dagger)$ we have
$\delta w_\star\neq 0$. Hence there exists a smaller product neighborhood
$U'\times V'$ with $U'\subset U$ and $V'\subset V$, still containing
$(Z^\star,g_K^\star,w^\dagger,h_K^\star)$, such that
\begin{equation}
\delta w(Z,\widetilde{w}_K)\neq 0
\quad\text{for all }Z\in U',\ 
(\widetilde{g}_K,\widetilde{w}_K,\widetilde{h}_K)\in V'.
\end{equation}
On this smaller neighborhood, $F$ is well-defined and the bound
\eqref{eq:F-lb-UV} continues to hold. In particular, for all
$Z\in U'$ and $(\widetilde{g}_K,\widetilde{w}_K,\widetilde{h}_K)\in V'$,
\begin{equation}
\frac{\|\Delta\phi(Z^\dagger) - \Delta\phi(Z)\|_2}{|\delta w(Z,\widetilde{w}_K)|}
\;\ge\;
\frac{K_4}{2}\,\frac{\sqrt{\alpha}G_{\mathrm{gap}}}{S(Z)^2}.
\end{equation}

\medskip
\noindent\emph{Step 3(d). Positive probability.}
By Assumption~\ref{ass:C-support}, the joint density of $Z$ is strictly
positive on $D^N$ and $Z^\star\in D^N$, so
\begin{equation}
P(Z\in U') > 0.
\end{equation}
Similarly, the resampled triple
$(\widetilde{g}_K,\widetilde{w}_K,\widetilde{h}_K)$ has density $f>0$
on $D$ and $(g_K^\star,w^\dagger,h_K^\star)\in D$, so
\begin{equation}
P\big((\widetilde{g}_K,\widetilde{w}_K,\widetilde{h}_K)\in V'\big) > 0.
\end{equation}
Since $(\widetilde{g}_K,\widetilde{w}_K,\widetilde{h}_K)$ is independent
of $Z$,
\begin{equation}
P\big(Z\in U',\ (\widetilde{g}_K,\widetilde{w}_K,\widetilde{h}_K)\in V'\big)
= P(Z\in U')\,
  P\big((\widetilde{g}_K,\widetilde{w}_K,\widetilde{h}_K)\in V'\big)
> 0.
\end{equation}
On this event, the inequality \eqref{eq:F-lb-UV} holds and $S(Z)>0$, so
by the definition of $F$ we have
\begin{equation}
\frac{\|\Delta\phi(Z^\dagger) - \Delta\phi(Z)\|_2}{|\delta w(Z,\widetilde{w}_K)|}
\;\ge\;
\frac{K_4}{2}\,\frac{\sqrt{\alpha}G_{\mathrm{gap}}}{S(Z)^2}.
\end{equation}
Recalling that in the resampling construction we write
$\doublewidetilde{\Delta\phi} = \Delta\phi(Z^\dagger)$ and
$\Delta\phi = \Delta\phi(Z)$, and setting
$K_3 := K_4/2 = 3/(8\sqrt{d_h})$, we obtain
\begin{equation}
P\bigg(
  \frac{\|\doublewidetilde{\Delta\phi}-\Delta\phi\|_2}{|\delta w|}
  \;\ge\;
  K_3\,\frac{\sqrt{\alpha}G_{\mathrm{gap}}}{S^2}
\bigg)
\;\ge\;
P\big(Z\in U',\ (\widetilde{g}_K,\widetilde{w}_K,\widetilde{h}_K)\in V'\big)
\;>\; 0,
\end{equation}
which is the desired positive-probability lower bound.

\end{proof}







\appsection{Theorem \ref*{thm:SNR-s} on the Upper Bound of GSNR}
\subsection{Common Notation}

For the convenience of the reader, we restate some notations used in this section. These notations are consistent throughout the main text and the appendix of the paper. 

Fix an integer $N \ge 1$. For each $i \in \{1,\dots,N\}$ we are given random variables
\begin{equation}
    w_i \in (0,\infty),\quad
    g_i \in \mathbb{R}^{d_g},\quad
    h_i \in \mathbb{R}^{d_h},\quad
    \gamma_i \in \mathbb{R}^{d_g}.
\end{equation}
Define
\begin{equation}
    \bar \gamma := \frac{1}{N} \sum_{i=1}^N \gamma_i,\qquad
    S := \sum_{i=1}^N w_i > 0,\qquad
    p_i := \frac{w_i}{s},\qquad
    h_{\text{sum}} := \sum_{j=1}^N h_j.
\end{equation}
As introduced in \eqref{eq:final-phi-update}, the random gradient vector $\Delta\phi \in \mathbb{R}^{d_h}$ is given by
\begin{equation}\tag{\ref*{eq:final-phi-update}}
    \Delta\phi
    \;=\;
    \frac{1}{S}\,
    \bar{\gamma}^\top \sum_{i=1}^N g_i \Big( h_i^\top - \frac{w_i}{s} \sum_{j=1}^N h_j^\top \Big)
    \;=\;
    \frac{1}{S}\,\bar{\gamma}^\top \sum_{i=1}^N g_i \big( h_i^\top - p_i h_{\text{sum}}^\top \big).
\end{equation}
For convenience, define
\begin{equation}
    A := \bar{\gamma}^\top \sum_{i=1}^N g_i h_i^\top \in \mathbb{R}^D,\qquad
    g_{\text{sum}} := \sum_{i=1}^N w_i g_i,\qquad
    B := \bar{\gamma}^\top g_{\text{sum}}\, h_{\text{sum}}^\top \in \mathbb{R}^D.
\end{equation}
Then~\eqref{eq:final-phi-update} can be rewritten as
\begin{equation}
    \Delta\phi = \frac{1}{S} A \;-\; \frac{1}{S^2} B.
    \label{eq:y-A-B}
\end{equation}

We measure the variance of $\Delta\phi$ by
\begin{equation}
    \mathrm{Var}(\Delta\phi) := \E\bigl[\|\Delta\phi - \E[\Delta\phi]\|_2^2\bigr] = \mathrm{tr}\bigl(\mathrm{Cov}(\Delta\phi)\bigr).
\end{equation}
The gradient signal-to-noise ratio (GSNR) is
\begin{equation}
    \mathrm{GSNR}(\Delta\phi) := \frac{\|\E[\Delta\phi]\|_2^2}{\mathrm{Var}(\Delta\phi)}.
\end{equation}

\subsection{Common Assumptions}

We work under the following additional assumptions.


\begin{assumption}[Maximum range of random variables]
\label{ass:maximum-range}
The norms  $\|\gamma_i\|_2, \|g_i\|_2, \|h_i\|_2$ and $w_i$ are bounded. 
\begin{equation}
\text{For all } i,\; \|\gamma_i\|_2 + \|g_i\|_2 + \|h_i\|_2 + |w_i| < \infty
\end{equation}
\end{assumption}

\begin{remark}
Assumption \ref{ass:maximum-range} may seem slightly strong in a mathematical sense, though it is always true in an engineering setting (since there is a maximum value for floating-point numbers represented by a finite number of bits). Strictly speaking, it is not needed. With assumptions on higher moments of the random variables, we can derive a  bound similar to \eqref{eq:final-bound-E-Delta-phi}, which would include a higher order of $N$. We choose the simpler path that yields a tighter bound. 
\end{remark}

\begin{assumption}[Negative moments of $S$]
\label{ass:s-neg-mom}
\begin{equation}
    \E[S^{-4}] < \infty.
\end{equation}
\end{assumption}




\begin{assumption}[Remaining variance of the dominant weight]
\label{ass:rem-var}
Let
$\mathcal{G} := \sigma\bigl( (w_i,g_i,h_i)_{i \neq K}, (\gamma_i)_{i=1}^K \bigr)$
be the $\sigma$-field generated by all random variables excluding $w_K, g_K$ and $h_K$, 
\begin{equation}
    \mathrm{Var}(w_K \mid \mathcal{G}) = \sigma_w^2 > 0
\end{equation}
Given a fixed $s_\phi(\cdot)$ and a fixed $f_\theta(\cdot)$, $w_i$ and $h_i$ are both deterministic functions of $\phi$ and $x_i$, and $g_i$ is a deterministic function of $\theta$ and $x_i$. By Assumption \ref{ass:C-iid}, $(x_i)$ are independent across $i$, so $w_i$ is independent from $w_j$, $h_j$, and $g_j$, for all $i \neq j$. Therefore, $\mathrm{Var}(w_K \mid \mathcal{G}) = \mathrm{Var}(w_K)$. Hence, this assumption boils down to $\mathrm{Var}(w_K) = \sigma_w^2 > 0$.
\end{assumption}


\subsection{A Lower Bound on $\mathrm{Var}(\Delta\phi)$}

\begin{lemma}
\label{lem:var-lb-s}
Suppose Assumption \ref{ass:C-iid}, and Assumptions \ref{ass:maximum-range}-\ref{ass:rem-var} hold, and the necessary assumptions for Corollary \ref{thm:local-lipschitz-resampling} hold. Then
\begin{equation}
    \mathrm{Var}(\Delta\phi)
    \;\ge\;
   C^2_{\text{Lip}} G^2_{\text{gap}} \sigma_w^2\,
    \E\big[ S^{-4} \mathds{1}_{E_{\text{Lip}}} \big],
\end{equation}
where $\mathds{1}_{E_{\text{Lip}}}$ is the indicator of the event $E_{\text{Lip}}$, on which the lower bound for $\frac{\|\Delta\phi - \doublewidetilde{\Delta\phi}\|_2}{|w_K - \widetilde{w}_K|}$ from Corollary \ref{thm:local-lipschitz-resampling} holds.
\end{lemma}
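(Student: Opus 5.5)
The plan is to lower bound the total variance by the part of it that comes from the randomness of the dominant tuple $(g_K,w_K,h_K)$ alone, and then to convert that randomness into a finite-difference bound using Corollary~\ref{thm:local-lipschitz-resampling}.

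Let $\mathcal{G}$ be the $\sigma$-field of Assumption~\ref{ass:rem-var}. By the law of total variance,
\begin{equation*}
\mathrm{Var}(\Delta\phi)\;\ge\;\E\big[\mathrm{tr}\,\mathrm{Cov}(\Delta\phi\mid\mathcal{G})\big].
\end{equation*}
For the conditional variance I use the symmetrization identity with an independent copy. Let $(\widetilde g_K,\widetilde w_K,\widetilde h_K)$ be the resampled triple of Definition~\ref{def:resample}, which is independent of $\mathcal{G}$ and of the original triple. Then
\begin{equation*}
\mathrm{tr}\,\mathrm{Cov}(\Delta\phi\mid\mathcal{G})=\tfrac12\,\E\big[\|\doublewidetilde{\Delta\phi}-\Delta\phi\|_2^2\mid\mathcal{G}\big].
\end{equation*}
This turns the variance into an expected squared difference under resampling of index $K$, which is exactly the quantity controlled by the Lipschitz-type ratio $L_K$.

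On the event $E_{\text{Lip}}$ (where $\delta w\neq 0$) the corollary gives
\begin{equation*}
\|\doublewidetilde{\Delta\phi}-\Delta\phi\|_2^2\;\ge\;K_3^2\,\alpha\,G_{\text{gap}}^2\,S^{-4}\,(\widetilde w_K-w_K)^2 .
\end{equation*}
Dropping the complement of $E_{\text{Lip}}$, which contributes a nonnegative amount, I get
\begin{equation*}
\mathrm{Var}(\Delta\phi)\;\ge\;\tfrac12 K_3^2\alpha\,G_{\text{gap}}^2\;\E\big[S^{-4}(\widetilde w_K-w_K)^2\mathds{1}_{E_{\text{Lip}}}\big].
\end{equation*}
The remaining task is to extract $\sigma_w^2$ from $(\widetilde w_K-w_K)^2$. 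Unconditionally, $\E[(\widetilde w_K-w_K)^2]=2\mathrm{Var}(w_K)=2\sigma_w^2$ by Assumption~\ref{ass:rem-var} and independence. Setting $C_{\text{Lip}}^2$ to absorb $K_3^2\alpha$ and the numerical factors would then give the stated bound.

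The main obstacle is this last step. The factor $(\widetilde w_K-w_K)^2$ is correlated with both $S^{-4}$ (since $S$ contains $w_K$) and the indicator $\mathds{1}_{E_{\text{Lip}}}$, so the expectation does not factor directly. I would first try to condition on $(Z,\mathbf{1}_{E_{\text{Lip}}})$ and argue that on the neighborhood $U'\times V'$ of the corollary's proof, $|\delta w|$ is bounded away from zero. There $\delta w\approx w^\dagger-w_K^\star\neq 0$, so one can bound $(\delta w)^2\ge c\,\sigma_w^2$ for a constant $c$ depending only on the distribution of $w_K$; this constant is again absorbed into $C_{\text{Lip}}$. If that fails, the fallback is to use boundedness (Assumption~\ref{ass:maximum-range}) to bound $S$ between $S$ evaluated with $w_K$ and with $\widetilde w_K$ up to constants, so that $S^{-4}$ can be replaced by a $\mathcal{G}$-measurable quantity. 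The conditional expectation of $(\delta w)^2$ then gives $2\sigma_w^2$ on the event. Assumption~\ref{ass:s-neg-mom} ensures that all the expectations involved are finite.
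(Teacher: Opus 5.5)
Your route is the paper's: law of total variance, the conditional symmetrization identity $\mathrm{tr}\,\mathrm{Cov}(\Delta\phi\mid\mathcal{G})=\tfrac12\E[\|\doublewidetilde{\Delta\phi}-\Delta\phi\|_2^2\mid\mathcal{G}]$, the corollary on $E_{\text{Lip}}$, and $\E[(\widetilde w_K-w_K)^2\mid\mathcal{G}]=2\sigma_w^2$. Your derivation of
\[
\mathrm{Var}(\Delta\phi)\;\ge\;\tfrac12 K_3^2\alpha\,G_{\text{gap}}^2\,\E\big[S^{-4}(\delta w)^2\mathds{1}_{E_{\text{Lip}}}\big]
\]
is sound. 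The remaining step you flag is a genuine gap, and neither of your remedies closes it. The paper's proof does not close it either: it writes $S^{-4}$ and $\mathds{1}_{E_{\text{Lip}}}$ outside $\E[\,\cdot\mid\mathcal{G}]$ as if they were $\mathcal{G}$-measurable, and then substitutes $2\sigma_w^2$.

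\emph{First remedy.} Shrinking $U'\times V'$ does give $|\delta w|\ge|\delta w_\star|/2$ on $E_{\text{Lip}}$. But $\delta w_\star$ was chosen with $0<|\delta w_\star|\le r_2=\min\{r_1,\,|\psi'(w_K^\star)|/(4L_1+1)\}$, a Taylor radius attached to the particular configuration $Z^\star$. It is deliberately small and unrelated to $\sigma_w$. So $c=\delta w_\star^2/(4\sigma_w^2)$ depends on $Z^\star$ (hence on $N$, $G_{\text{gap}}$, $S^\star$), not only on the law of $w_K$. Absorbing it into $C_{\text{Lip}}$ gives the inequality only with a constant that has lost the $N$- and $G_{\text{gap}}$-independence Theorem~\ref{thm:SNR-s} needs. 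The root cause is that the corollary is local. On $E_{\text{Lip}}=\{Z\in U',\ (\widetilde g_K,\widetilde w_K,\widetilde h_K)\in V'\}$ the increment $\delta w$ is pinned near $\delta w_\star$, so $\E[S^{-4}(\delta w)^2\mathds{1}_{E_{\text{Lip}}}]$ can be far smaller than $\sigma_w^2\,\E[S^{-4}\mathds{1}_{E_{\text{Lip}}}]$.

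\emph{Fallback.} This handles the $S^{-4}$ factor, via $S\le S_{-K}+w_{\max}$ with $S_{-K}:=\sum_{i\neq K}w_i$. (Converting back to $S^{-4}$ would need a positive lower bound on $S$, which Assumption~\ref{ass:maximum-range} does not supply.) It does not handle the indicator. $\mathds{1}_{E_{\text{Lip}}}$ depends on $(g_K,w_K,h_K)$ and on the resampled triple, so in general $\E[(\delta w)^2\mathds{1}_{E_{\text{Lip}}}\mid\mathcal{G}]\neq 2\sigma_w^2\,\mathds{1}_{E_{\text{Lip}}}$. This is the same unjustified factorization.

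\emph{What is needed.} Closing the gap requires a global strengthening of the corollary: a lower bound on $L_K$ that holds, on a $\mathcal{G}$-measurable event, for all resampled triples, or at least for a set carrying a fixed fraction of $\E[(\delta w)^2\mid\mathcal{G}]$. Without that, your argument proves the bound with $\E[S^{-4}(\delta w)^2\mathds{1}_{E_{\text{Lip}}}]$ in place of $\sigma_w^2\,\E[S^{-4}\mathds{1}_{E_{\text{Lip}}}]$.
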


\begin{proof}

The core idea of the proof is as follows: when the lower bound on $L_K = \frac{\|\Delta\phi - \doublewidetilde{\Delta\phi}\|_2}{|w_K - \widetilde{w}_K|}$ from Corollary \ref{thm:local-lipschitz-resampling} holds, we can related $\mathrm{Var}(w_K)$ with $\mathrm{Var}(\Delta\phi)$. As $w_k$ and $\widetilde{w}_K$ are i.i.d., $\E[(w_k-\widetilde{w}_K)^2] = 2\mathrm{Var}(w_K)$. Applying  the same idea, we can lower bound $\mathrm{Var}(\Delta\phi)$. Here we need to condition on $\mathcal{G}$, $\E[\|\Delta\phi - \doublewidetilde{\Delta\phi}\|_2] = 2\mathrm{Var}(\Delta\phi \mid \mathcal{G})$. 

Let $K$ be the dominant index, and 
$ \mathcal{G} := \sigma\bigl( (w_i,g_i,h_i)_{i \neq K}, (\gamma_i)_{i=1}^K \bigr)$
be the $\sigma$-field generated by all random variables excluding $w_K, g_K$ and $h_K$.
By Assumption~\ref{ass:C-iid}, conditionally on $\mathcal{G}$ the only remaining randomness is in the $K$-th block $(w_K,g_K,h_K)$.

Given $\mathcal{G}$, we resample training data point $(x_K, y_K)$, which yields a new tuple $(\widetilde{w}_K, \widetilde{g}_K, \widetilde{h}_K)$, and the new vector $\doublewidetilde{\Delta\phi}$. By Corollary \ref{thm:local-lipschitz-resampling}, on some event $E_{\text{LIP}}$ with positive probability and a constant $C_{\text{Lip}}>0$, which is independent of $N$, we have 
\begin{equation}
\label{eq:lower-bound-from-before}
    \frac{\|\Delta\phi - \doublewidetilde{\Delta\phi}\|_2}{|w_K - \widetilde{w}_K|}
    \;\ge\;
    \frac{C_{\text{Lip}} G_{\text{gap}}}{S^2}\,.
\end{equation} 

Squaring and taking conditional expectation given $\mathcal{G}$,
\begin{equation}
    \E\big[\|\Delta\phi - \doublewidetilde{\Delta\phi}\|_2^2 \mid \mathcal{G}\big]
    \;\ge\;
    \frac{C^2_{\text{Lip}} G^2_{\text{gap}}}{S^4}\,
    \E\big[ (w_K - \widetilde{w}_K)^2 \mid \mathcal{G} \big]\,
    \mathds{1}_{E_{\text{LIP}}},
\end{equation}
where $\mathds{1}_{E_{\text{Lip}}}$ is the indicator function that the lower bound of \eqref{eq:lower-bound-from-before} happens
\begin{equation}
\mathds{1}_{E_{\text{Lip}}} 
:= \begin{cases}
            1 & \text{if $E_{\text{Lip}}$ happens.} \\
            0 & \text{otherwise.}
        \end{cases}
\end{equation}
$w_K$ and $\widetilde{w}_K$ are i.i.d. (whether conditioned on $\mathcal{G}$ or not), so
\begin{equation}
    \E[ (w_K - \widetilde{w}_K)^2 \mid \mathcal{G}]
    = 2\,\mathrm{Var}(w_K \mid \mathcal{G}).
\end{equation}
Conditioning on $\mathcal{G}$ makes $\Delta\phi$ and $\Delta\phi^{\prime}$ i.i.d., so
\begin{equation}
    \E\big[\|\Delta\phi - \doublewidetilde{\Delta\phi}\|_2^2 \mid \mathcal{G}\big] = 2 \mathrm{Var}(\Delta\phi \mid \mathcal{G})
\end{equation}
\begin{equation}
    \mathrm{Var}(\Delta\phi \mid \mathcal{G})
    \;\ge\;
    \frac{c_{SP}^2}{4 S^4}\,\mathrm{Var}(w_K \mid \mathcal{G})\,\mathds{1}_{E_{\text{Lip}}}.
\end{equation}
By Assumption \ref{ass:rem-var}, $\mathrm{Var}(w_K \mid \mathcal{G}) = \mathrm{Var}(w_K) = \sigma_w^2 > 0$, so
\begin{equation}
    \mathrm{Var}(\Delta\phi \mid \mathcal{G})
    \;\ge\;
    \frac{C^2_{\text{Lip}} G^2_{\text{gap}} \sigma_w^2}{S^4}\,\mathds{1}_{E_{\text{Lip}}}.
\end{equation}
Taking total expectation and using the law of total variance,
\begin{equation}
    \mathrm{Var}(\Delta\phi)
    \;\ge\;
    \E\big[\mathrm{Var}(\Delta\phi \mid \mathcal{G})\big]
    \;\ge\;
    C^2_{\text{Lip}} G^2_{\text{gap}} \sigma_w^2\,
    \E\big[ S^{-4} \mathds{1}_{E_{\text{Lip}}} \big],
\end{equation}
which is the desired bound.  
\end{proof}

\subsection{Upper Bound on $\|\E[\Delta\phi]\|_2^2$ and SNR}

We now bound the numerator of the SNR.

\begin{lemma}
\label{lem:mean-ub}
Under Assumptions~\ref{ass:C-iid} and \ref{ass:maximum-range}-\ref{ass:s-neg-mom}, there exists a constant 
$C_m<\infty$ independent of $N$ such that
\begin{equation}
    \|\E[\Delta\phi]\|_2^2
    \;\le\;
    C_m\,N^4\,\E[S^{-2}].
\end{equation}
\end{lemma}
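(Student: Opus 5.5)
The plan is to bound $\Delta\phi$ pointwise by a random variable of the form $C N^2/S$, square it, and take expectations; Jensen then finishes the argument. Concretely, I would start from the decomposition $\Delta\phi = S^{-1}A - S^{-2}B$ in \eqref{eq:y-A-B}. By Assumption~\ref{ass:maximum-range}, there is a deterministic constant $M<\infty$, independent of $N$, with $\|\gamma_i\|_2,\|g_i\|_2,\|h_i\|_2,w_i\le M$ almost surely for every $i$. This gives $\|\bar\gamma\|_2\le M$.

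Next, I would bound the two pieces separately. For $A=\bar\gamma^\top\sum_i g_i h_i^\top$, the triangle inequality gives $\|A\|_2\le \sum_i |\bar\gamma^\top g_i|\,\|h_i\|_2\le N M^3$. For the second term, I would avoid bounding $g_{\text{sum}}=\sum_i w_i g_i$ by $NM^2$ and then dividing by $S^2$ separately. Instead, write $S^{-2}B = S^{-1}\,\bar\gamma^\top\big(\sum_i p_i g_i\big)\,h_{\text{sum}}^\top$. Since $p$ lies in the simplex, $\|\sum_i p_i g_i\|_2\le M$, and also $\|h_{\text{sum}}\|_2\le NM$. Hence $\|S^{-2}B\|_2\le N M^3/S$. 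Combining the two bounds yields the pointwise estimate $\|\Delta\phi\|_2\le 2NM^3/S$.

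Finally, I would apply Jensen's inequality for the norm: $\|\E[\Delta\phi]\|_2\le \E\|\Delta\phi\|_2\le 2NM^3\,\E[S^{-1}]$. Squaring and using $(\E[S^{-1}])^2\le \E[S^{-2}]$ (Cauchy--Schwarz), which is finite by Assumption~\ref{ass:s-neg-mom} since $\E[S^{-2}]\le \E[S^{-4}]^{1/2}$, gives
\begin{equation}
\|\E[\Delta\phi]\|_2^2\le 4M^6 N^2\,\E[S^{-2}]\le C_m N^4\,\E[S^{-2}]
\end{equation}
with $C_m=4M^6$, using $N\ge1$. The stated $N^4$ is therefore looser than what this argument gives, and I would keep the stated form to match the theorem; a cruder bound on $B$, namely $\|g_{\text{sum}}\|_2\le NM^2$, would naturally produce an $N^2/S$-type factor and hence the $N^4$ exponent.

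The only delicate point is making sure the constant $M$ does not depend on $N$. This requires reading Assumption~\ref{ass:maximum-range} as a uniform almost-sure bound, with the same constant for each index $i$ by the i.i.d.\ structure of Assumption~\ref{ass:C-iid}, rather than merely pointwise finiteness. I would state this interpretation explicitly. Integrability of $S^{-2}$ is then guaranteed by Assumption~\ref{ass:s-neg-mom}. Independence across indices is not otherwise needed for this upper bound.
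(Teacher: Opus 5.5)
Your proof is correct and follows the same overall strategy as the paper's: a deterministic pointwise bound $\|\Delta\phi\|_2 \lesssim N^k/S$ from the uniform boundedness in Assumption~\ref{ass:maximum-range}, then Jensen and expectation. Your decomposition, however, is tighter. The paper keeps the combined form $\bar\gamma^\top\sum_i g_i(h_i - p_i h_{\text{sum}})^\top$ and bounds every factor by $\|h_i - p_i h_{\text{sum}}\|_2 \le (N+1)H_{\max}$, using only $p_i\le 1$ term by term. This gives $\|\Delta\phi\|_2 \le \Gamma_{\max}G_{\max}H_{\max}N(N+1)/S$, and hence the $N^4$. You instead split off $S^{-2}B = S^{-1}\bar\gamma^\top\big(\sum_i p_i g_i\big)h_{\text{sum}}^\top$ and exploit $\sum_i p_i = 1$, so you do not lose that factor of $N$. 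You get $\|\E[\Delta\phi]\|_2^2 \le 4M^6N^2\,\E[S^{-2}]$, which implies the lemma and would sharpen the $N^4$ in Theorem~\ref{thm:SNR-s} to $N^2$. The same gain is available inside the paper's own form, since $\sum_i \|g_i\|_2\|h_i - p_i h_{\text{sum}}\|_2 \le G_{\max}\sum_i\big(\|h_i\|_2 + p_i\|h_{\text{sum}}\|_2\big) \le 2NG_{\max}H_{\max}$.

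One small correction to your closing remark: the paper's $N^4$ does not come from the bound $\|g_{\text{sum}}\|_2\le NM^2$. That route would leave a factor $S^{-2}$ in the pointwise bound and lead to $\E[S^{-4}]$ rather than $\E[S^{-2}]$. It comes from the termwise use of $p_i\le 1$ described above. Your explicit reading of Assumption~\ref{ass:maximum-range} as a uniform almost-sure bound, independent of $i$ and $N$, is exactly what the paper uses implicitly when it introduces $\Gamma_{\max}$, $G_{\max}$, $H_{\max}$.
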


\begin{proof}
From~\eqref{eq:final-phi-update},
\begin{equation}
    \Delta\phi
    = \frac{1}{S}\,\bar{\gamma}^\top \sum_{i=1}^N g_i(h_i^\top - p_i h_{\text{sum}}^\top).
\end{equation}
Taking norms and with triangle inequality,
\begin{equation}
    \|\Delta\phi\|_2 \;\le\; \frac{1}{S}\,\sum_{i=1}^N |\bar{\gamma}^\top g_i| \|h_i - p_i h_{\text{sum}}\|_2.
\end{equation}
Using Cauchy-Schwarz,
\begin{equation}
    \|\Delta\phi\|_2
    \;\le\;
    \frac{1}{S}\,\|\bar \gamma\|_2
    \sum_{i=1}^N \|g_i\|_2 \,\|h_i - p_i h_{\text{sum}}\|_2.
\end{equation}
Therefore
\begin{equation}
    \|\Delta\phi\|_2^2
    \;\le\;
    \frac{1}{S^2}\,\|\bar \gamma\|_2^2
    \Big( \sum_{i=1}^N \|g_i\|_2 \,\|h_i - p_i h_{\text{sum}}\|_2 \Big)^2.
\end{equation}
Applying Cauchy-Schwarz to the sum,
\begin{equation}
    \Big( \sum_{i=1}^N \|g_i\|_2 \,\|h_i - p_i h_{\text{sum}}\|_2 \Big)^2
    \;\le\;
    \Big( \sum_{i=1}^N \|g_i\|_2^2 \Big)
    \Big( \sum_{i=1}^N \|h_i - p_i h_{\text{sum}}\|_2^2 \Big).
\end{equation}
Hence
\begin{equation}
\label{eq:simplified-delta-phi}
    \|\Delta\phi\|_2^2
    \;\le\;
    \frac{1}{S^2}\,\|\bar \gamma\|_2^2
    \Big( \sum_{i=1}^N \|g_i\|_2^2 \Big)
    \Big( \sum_{i=1}^N \|h_i - p_i h_{\text{sum}}\|_2^2 \Big).
\end{equation}

By Assumption \ref{ass:maximum-range}, we define finite constants $\Gamma_{\max}, G_{\max}, H_{\max} < \infty$ such that, for all $i$,
\begin{equation}
    \|\gamma_i\|_2 \le \Gamma_{\max},\qquad
    \|g_i\|_2 \le G_{\max},\qquad
    \|h_i\|_2 \le H_{\max}
\end{equation}

First, by convexity of the norm,
\begin{equation}
    \|\bar \gamma\|_2
    =
    \Big\|\frac{1}{N}\sum_{i=1}^N \gamma_i\Big\|_2
    \;\le\;
    \frac{1}{N}\sum_{i=1}^N \|\gamma_i\|_2
    \;\le\;
    \Gamma_{\max}.
\end{equation}

Next, $\|g_i\|_2 \le G_{\max}$ for all $i$.

For the $h$-terms, we first bound
\begin{equation}
    \|h_{\text{sum}}\|_2
    =
    \Big\|\sum_{j=1}^N h_j\Big\|_2
    \;\le\;
    \sum_{j=1}^N \|h_j\|_2
    \;\le\;
    N H_{\max}.
\end{equation}
Since $p_i = w_i/S \in (0,1]$, we have
\begin{equation}
    \|h_i - p_i h_{\text{sum}}\|_2
    \;\le\;
    \|h_i\|_2 + |p_i|\,\|h_{\text{sum}}\|_2
    \;\le\;
    H_{\max} + N H_{\max}
    = (N+1)H_{\max},
\end{equation}
for all $i$.

Substituting these bounds into~\eqref{eq:simplified-delta-phi}, we obtain
\begin{equation}
    \|\Delta\phi\|_2
    \;\le\;
    \frac{1}{S}\,\Gamma_{\max}
    \sum_{i=1}^N G_{\max}(N+1)H_{\max}
    =
    \frac{1}{S}\,\Gamma_{\max}G_{\max}H_{\max}N(N+1).
\end{equation}
Squaring both sides yields
\begin{equation}
    \|\Delta\phi\|_2^2
    \;\le\;
    \frac{\Gamma_{\max}^2 G_{\max}^2 H_{\max}^2}{S^2}\,N^2(N+1)^2.
\end{equation}
For $N\ge 1$ we have $(N+1)^2 \le 4N^2$, hence
\begin{equation}
    \|\Delta\phi\|_2^2
    \;\le\;
    \frac{4\,\Gamma_{\max}^2 G_{\max}^2 H_{\max}^2}{S^2}\,N^4.
\end{equation}

Taking expectations and using Assumption~\ref{ass:s-neg-mom},
\begin{equation}
    \E\big[\|\Delta\phi\|_2^2\big]
    \;\le\;
    4\,\Gamma_{\max}^2 G_{\max}^2 H_{\max}^2\,N^4\,\E[S^{-2}].
\end{equation}
Finally, by Jensen's inequality for the convex function $x \mapsto \|x\|_2^2$,
\begin{equation}
\label{eq:final-bound-E-Delta-phi}
    \|\E[\Delta\phi]\|_2^2
    \;\le\;
    \E\big[\|\Delta\phi\|_2^2\big]
    \;\le\;
    C_m N^4\,\E[S^{-2}],
\end{equation}
where $C_m(N) := 4\,\Gamma_{\max}^2 G_{\max}^2 H_{\max}^2 > 0$ is independent of $N$ and $w_i$.
\end{proof}

\subsection{Theorem \ref*{thm:SNR-s}}
Combining Lemmas~\ref{lem:var-lb-s} and~\ref{lem:mean-ub} yields an SNR upper bound in terms of moments of $S$ and $G^2_{\text{gap}}$.

{\renewcommand{\thetheorem}{\ref*{thm:SNR-s}}
\begin{theorem}[SNR upper bound]
Under Assumptions~\ref{ass:C-iid}, \ref{ass:maximum-range}-\ref{ass:rem-var},
\begin{equation}
    \mathrm{SNR}(\Delta\phi)
    :=
    \frac{\|\E[\Delta\phi]\|_2^2}{\mathrm{Var}(\Delta\phi)}
    \;\le\; C_{\mathrm{SNR}}
    \, \frac{ N^4\,\E[S^{-2}]}{ G^2_{\text{gap}}\,
    \E\big[ S^{-4} \mathds{1}_{E_{\text{Lip}}} \big]}
    ,
\end{equation}
where $C_{\mathrm{SNR}} = \frac{C_m}{ C^2_{\text{Lip}} \sigma_{w_i}} >0 $ is independent of $N$ and $G^2_{\text{gap}}$ .
\end{theorem}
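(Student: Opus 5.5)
The bound follows by dividing the two lemmas just established. The GSNR is the ratio $\|\E[\Delta\phi]\|_2^2/\mathrm{Var}(\Delta\phi)$. Lemma~\ref{lem:mean-ub} bounds the numerator from above, and Lemma~\ref{lem:var-lb-s} bounds the denominator from below, so the plan is to check that both lemmas' hypotheses hold under Assumptions~\ref{ass:C-iid} and \ref{ass:maximum-range}--\ref{ass:rem-var}, together with the assumptions behind Corollary~\ref{thm:local-lipschitz-resampling}, and then take the quotient.

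For the numerator, I will invoke Lemma~\ref{lem:mean-ub} directly:
\begin{equation*}
\|\E[\Delta\phi]\|_2^2 \le C_m N^4 \E[S^{-2}].
\end{equation*}
This needs the boundedness in Assumption~\ref{ass:maximum-range}. It also needs $\E[S^{-2}]<\infty$, which follows from $\E[S^{-4}]<\infty$ in Assumption~\ref{ass:s-neg-mom} by Jensen's (or Lyapunov's) inequality. The constant $C_m = 4\Gamma_{\max}^2G_{\max}^2H_{\max}^2$ does not involve $N$.

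For the denominator, Lemma~\ref{lem:var-lb-s} gives
\begin{equation*}
\mathrm{Var}(\Delta\phi) \ge C_{\text{Lip}}^2 G_{\text{gap}}^2 \sigma_w^2\, \E\big[S^{-4}\mathds{1}_{E_{\text{Lip}}}\big].
\end{equation*}
Here $C_{\text{Lip}}=K_3\sqrt{\alpha}$ is inherited from Corollary~\ref{thm:local-lipschitz-resampling} with a fixed $\alpha\in(0,\tau_h)$, so it is independent of $N$ and $G_{\text{gap}}$. The lower bound is strictly positive because $E_{\text{Lip}}$ has positive probability by that corollary, $S^{-4}>0$, and $\sigma_w^2>0$ by Assumption~\ref{ass:rem-var}. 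Positivity is what makes the division legitimate, so I will state it explicitly. Dividing the two bounds then gives the claim with $C_{\mathrm{SNR}} = C_m/(C_{\text{Lip}}^2\sigma_w^2)$; the $\sigma_{w_i}$ in the statement should read $\sigma_w^2$.

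The main obstacle is making the denominator bound rigorous rather than the algebra of the quotient. Lemma~\ref{lem:var-lb-s} uses the identity $\E[\|\Delta\phi-\doublewidetilde{\Delta\phi}\|_2^2\mid\mathcal{G}] = 2\mathrm{Var}(\Delta\phi\mid\mathcal{G})$. This is valid because, conditional on $\mathcal{G}$, the original and resampled $K$-th tuples are i.i.d., so $\Delta\phi$ and $\doublewidetilde{\Delta\phi}$ are conditionally i.i.d.

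The delicate point is the pointwise inequality
\begin{equation*}
\|\Delta\phi-\doublewidetilde{\Delta\phi}\|_2^2 \ge C_{\text{Lip}}^2G_{\text{gap}}^2 S^{-4}(w_K-\widetilde w_K)^2
\end{equation*}
on $E_{\text{Lip}}$. Passing to conditional expectations requires that $S$ and $\mathds{1}_{E_{\text{Lip}}}$ be handled consistently. $S$ depends on $w_K$, so it is not $\mathcal{G}$-measurable, and $E_{\text{Lip}}$ depends on both tuples. My plan is to skip the factorization $\E[(w_K-\widetilde w_K)^2\mid\mathcal{G}]\,\mathds{1}$ and work with the unconditional expectation of the pointwise inequality. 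Since $\mathrm{Var}(\Delta\phi)\ge\E[\mathrm{Var}(\Delta\phi\mid\mathcal{G})]$, this gives
\begin{equation*}
\mathrm{Var}(\Delta\phi)\ge \tfrac12 C_{\text{Lip}}^2G_{\text{gap}}^2\,\E\big[S^{-4}(w_K-\widetilde w_K)^2\mathds{1}_{E_{\text{Lip}}}\big].
\end{equation*}

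To extract $\sigma_w^2$ from this, I would shrink $E_{\text{Lip}}$ to the product neighbourhood $U'\times V'$ from the corollary's proof. On that set $|w_K-\widetilde w_K|$ is bounded below by a positive constant, and that constant can be absorbed into $C_{\text{SNR}}$ in place of $\sigma_w^2$. The bound then has the stated form, with constants independent of $N$ and $G_{\text{gap}}$.
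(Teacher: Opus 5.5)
Your top-level step is exactly the paper's proof, which is nothing more than dividing Lemma~\ref{lem:mean-ub} by Lemma~\ref{lem:var-lb-s}. Reading $\sigma_{w_i}$ as $\sigma_w^2$ and adding the hypotheses of Corollary~\ref{thm:local-lipschitz-resampling} are both correct. Your objection to Lemma~\ref{lem:var-lb-s} is also well founded. The paper pulls $S^{-4}$ and $\mathds{1}_{E_{\text{Lip}}}$ out of $\E[\,\cdot\mid\mathcal{G}]$ although neither is $\mathcal{G}$-measurable. It then replaces $(w_K-\widetilde{w}_K)^2$ on $E_{\text{Lip}}$ by its unrestricted conditional mean $2\sigma_w^2$. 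Nothing justifies this: the only positive-probability piece of $E_{\text{Lip}}$ the corollary exhibits is a neighbourhood on which $|w_K-\widetilde{w}_K|$ is roughly $|\delta w_\star|\le r_2$. Your unconditional inequality
\[
\mathrm{Var}(\Delta\phi)\ge\tfrac12 C_{\text{Lip}}^2G_{\text{gap}}^2\,\E\big[S^{-4}(w_K-\widetilde{w}_K)^2\mathds{1}_{E_{\text{Lip}}}\big]
\]
is a valid replacement.

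The gap is your final step. On the paper's $U'\times V'$ one only knows $\delta w\neq 0$. After shrinking further you can get $|\delta w|\ge c_0:=|\delta w_\star|/2$, but $|\delta w_\star|\le r_2=\min\{r_1,|\psi'(w_K^\star)|/(4L_1+1)\}$. Here $r_1$, $L_1$ and $\psi'(w_K^\star)=[V_K(Z^\star)]_{j^\star}/(S^\star)^2$ are all evaluated at an existentially chosen configuration $Z^\star$ of $N$ tuples. Nothing in the corollary bounds $c_0$ from below uniformly in $N$ or $G_{\text{gap}}$. Absorbing $c_0^2$ into $C_{\mathrm{SNR}}$ therefore gives up exactly the independence the statement asserts. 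The explicit $N^4$ and $G_{\text{gap}}^{-2}$ factors are the entire content of the theorem, and hidden dependence in the constant can offset them.

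Replacing $E_{\text{Lip}}$ by $U'\times V'$ also swaps $\mathds{1}_{E_{\text{Lip}}}$ for the indicator of a smaller event. So even with a uniform $c_0$ you would get a weaker bound than the one stated. What your argument actually proves is
\[
\mathrm{SNR}(\Delta\phi)\le\frac{2C_mN^4\,\E[S^{-2}]}{C_{\text{Lip}}^2G_{\text{gap}}^2\,\E\big[S^{-4}(w_K-\widetilde{w}_K)^2\mathds{1}_{E_{\text{Lip}}}\big]}.
\]
Reaching the stated form needs a lower bound on $(w_K-\widetilde{w}_K)^2$ over a positive-probability part of $E_{\text{Lip}}$, uniform in $N$ and $G_{\text{gap}}$. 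Neither your proposal nor the paper supplies that ingredient.
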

}


\appsection{Training Dynamics across Batch Sizes 512 and 1024}
\label{app:batch_size_dynamics}

In Figure~\ref{fig:weight_trends_bs_2}, we provide additional training dynamics for larger batch sizes, $N=512$ and $N=1024$, complementing Figure~\ref{fig:weight_trends_bs}. The results exhibit trends consistent with those in Figure~\ref{fig:weight_trends_bs}. First, the distribution of $p$ is quite spiky. Second, the empirical mean of $w_i$ decreases during training. Third, the sample variance of $p_i$, calculated as $\hat{\sigma}_p^2 := \frac{1}{N} \sum_{i=1}^N (p_i - 1/N)^2$, increases. Simultaneously, the effective batch size (EBS), calculated as $\left(\sum_i p_i^2\right)^{-1}$ decreases. These observations corroborate our theoretical analysis of data weights dynamics in Section~\ref{sec:dw_dynamics}.

\begin{figure*}[t]
    \centering
    
    \begin{minipage}{0.494\linewidth}
        \centering
        \includegraphics[width=\linewidth]{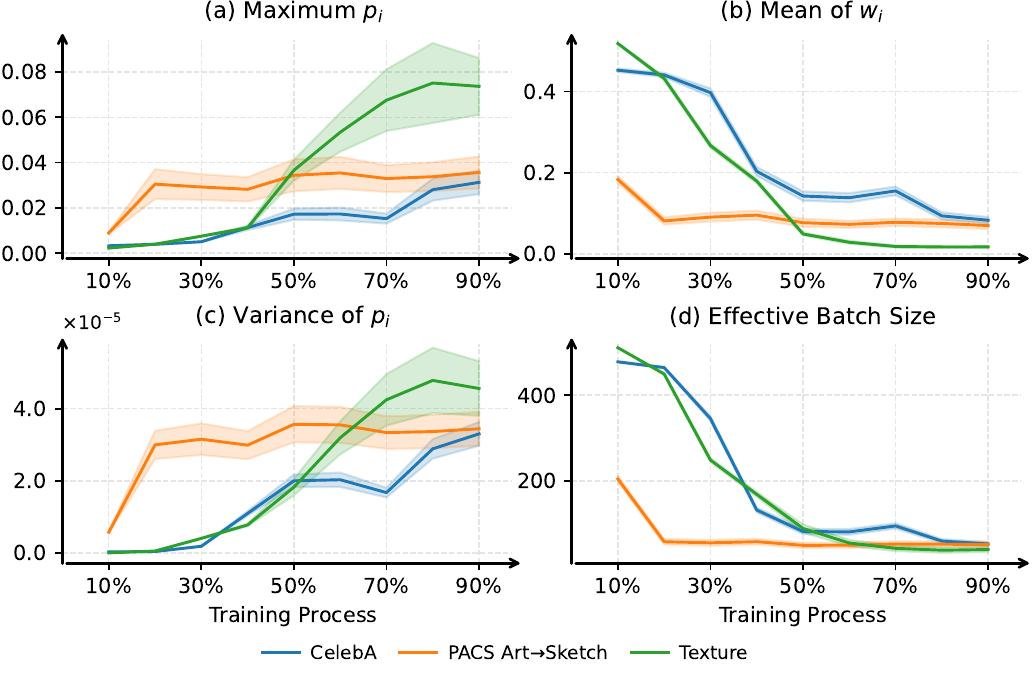}
        {\scriptsize \textbf{(i) Batch size $N=512$}}
    \end{minipage}
    \hfill
    \begin{minipage}{0.494\linewidth}
        \centering
        \includegraphics[width=\linewidth]{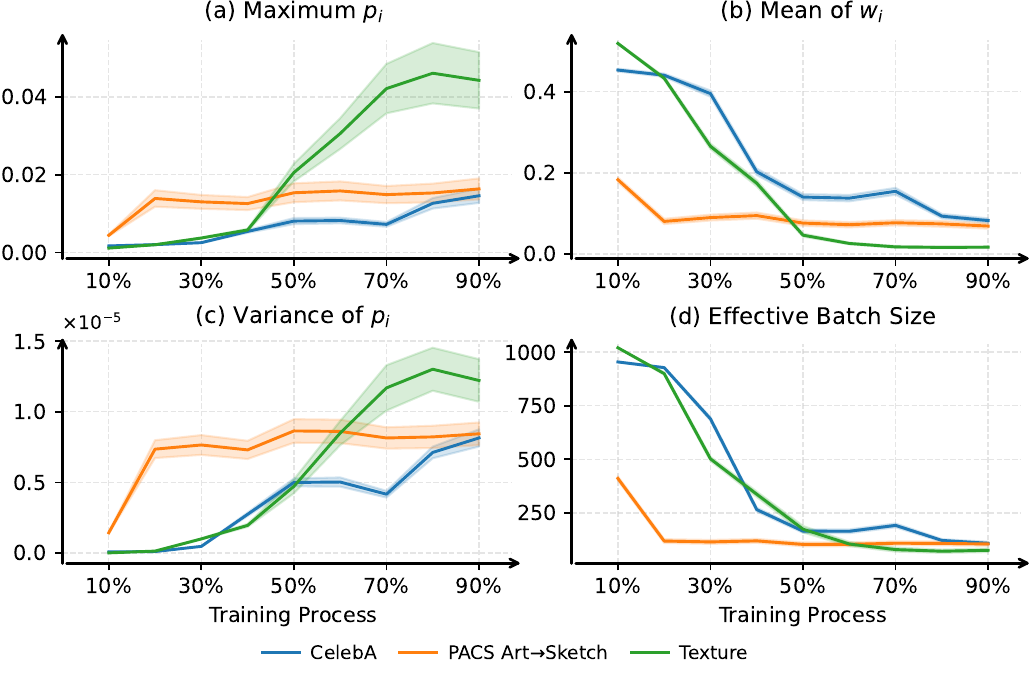}
        {\scriptsize \textbf{(ii) Batch size $N=1024$}}
    \end{minipage}

    \caption{Training dynamics under batch sizes $512$ and $1024$: (a) the mean \emph{unnormalized} weight $\hat{\mathbb{E}}[w_i]=S/N$, which decreases over time,
        (b) the maximum \emph{normalized} weight $\max_i p_i$, which increases but to a lower ceiling for $N=1024$
        (c) the variance $\mathrm{Var}(p_i)$, which increases to a lower ceiling for $N=1024$, and 
        (d) the effective batch size $B_{\mathrm{eff}} = (\sum_i p_i^2)^{-1}$, which is larger for $N=1024$. 
        Colors indicate datasets. Shaded regions show mean~$\pm$~std across independent batches. 
    }
    \label{fig:weight_trends_bs_2}
\end{figure*}

\appsection{Details of the Datasets} 
\label{app:dataset}
We evaluate all methods on four benchmarks designed to assess robustness to different forms of spurious correlation and distributional variation. Dataset statistics refers to Table \ref{tab:dataset_stats}.
\begin{enumerate}
    \item \emph{Waterbirds} \cite{sagawa2020distributionally} evaluates robustness to spurious background correlations. The training set contains only land birds on forest backgrounds and waterbirds on water backgrounds, inducing a correlation between bird type and background. Validation and test sets are balanced across bird types and backgrounds.
    \item \emph{CelebA} \cite{yuan2024not} evaluates spurious correlations between hair color and gender. We subsample the training set to contain only blonde females and non-blonde males. Validation and test sets are balanced across both attributes.
    \item \emph{Texture} \cite{yuan2024not} evaluates texture bias using cue-conflict images from CCS \citep{geirhos2019imagenet}, in which shape and texture cues disagree. The training set consists of standard ImageNet images restricted to the CCS classes. The CCS dataset is split into validation and test sets using a 25\%/75\% ratio for evaluation.
    \item \emph{PACS} \cite{Li2017dg} evaluates domain generalization across four visual domains. We use the single-source single-target setting, resulting in 12 ordered source→target pairs. We train a separate model for each pair. For example, in the Art→Photo setting, training data is from the Art domain, while validation and test data are from the Photo domain. In Table \ref{tab:tab_main_result}, the four sub-columns (Art/Cartoon/Photo/Sketch) group these pairwise results by source domain: each entry is the average accuracy over the three target domains corresponding to that source. For example, the value under \textit{Art} is the average performance across the settings Art→Photo, Art→Cartoon, and Art→Sketch.

\end{enumerate}

\label{app:dataset_details}
\begin{table}[t]
\centering
\begin{tabular}{lcccc}
\toprule
\textbf{Dataset} & \multicolumn{2}{c}{\textbf{Train}} & \textbf{Validation} & \textbf{Test} \\
\cmidrule(lr){2-3}
 & \textbf{Real} & \textbf{Synthetic} &  &  \\
\midrule
Waterbirds & 1139 & 839 & 1439 & 5794 \\
CelebA-sub & 3535 & 3535 & 500 & 1000 \\
Texture & 9600 & 9600 & 320 & 960 \\
\midrule
PACS Art Painting & 1840 & 1840 & 208 & 2048 \\
PACS Cartoon & 2107 & 2107 & 237 & 2344 \\
PACS Photo & 1499 & 1499 & 171 & 1670 \\
PACS Sketch & 3531 & 3531 & 398 & 3929 \\
\bottomrule
\end{tabular}
\caption{Dataset statistics for training, validation, and test splits.}
\label{tab:dataset_stats}
\end{table}

\appsection{Details of the Baselines}
\label{app:baseline}
We provide implementation details for all the baselines considered in this paper. In hard selection, we discard a fixed fraction \(p\%\) of training samples with the lowest scores, corresponding to those deemed less informative. We evaluate \(p \in \{5, 10, 20, 30\}\) for each dataset and consistently observe that \(p = 5\%\) yields the best performance. We therefore adopt this setting throughout. In soft selection, all training samples are retained and reweighted according to their scores.

\paragraph{Training-Dynamics Heuristics.}
We first train a classifier on the full training set without data selection and record its training dynamics. We consider two per-sample signals: \emph{forgetting events} and \emph{gradient norms}. For each sample $i$ in soft selection, we convert the score $s_i$ into a weight in $(0,1]$ as $w_i = 1/\bigl(1 + \log\!(1 + s_i)\bigr)$, so that samples with higher forgetting counts receive smaller weights. This monotonic transformation preserves the ranking induced by the original scores while keeping the resulting weights bounded.

\emph{Forgetting Events.}
For each sample, we compute a forgetting count, defined as the number of times the model’s prediction for that sample transitions from correct to incorrect during training, following \citet{toneva2019forgetting}.

\emph{GraNd (Gradient Norm).}
For each sample, we compute its GraNd score as the $\ell_2$ norm of the gradient of the sample loss with respect to the model parameters, following \citet{paul2021el2n}.

\paragraph{Synthetic Image Selection Heuristics.}
These heuristics score synthetic samples according to their semantic alignment with the target label or their predictive confidence under an auxiliary model.

\emph{CLIP Similarity.}
For each sample $i$, we compute the cosine similarity between its CLIP image embedding and the CLIP text embedding of its ground-truth label. We denote this score by $s_i^{\mathrm{CLIP}}$ and map it to a non-negative weight as $w_i = (1 + s_i^{\mathrm{CLIP}})/2$. This linear transformation preserves the score ranking used for hard selection and ensures non-negative weights for stable soft reweighting.

\emph{Auxiliary Classifier Confidence (Aux-Clf).}
We train an auxiliary classifier $f_{\mathrm{aux}}$ on the real training set and use its predicted probability for the ground-truth label as the sample weight: $w_i = p_{f_{\mathrm{aux}}}(y_i \mid x_i)$. For \emph{Aux-Clf (Val)}, we instead train the auxiliary classifier on the labeled validation set, which provides a stronger signal of alignment with the target distribution.

\emph{CLIP Similarity.}
For each sample $i$, we compute the cosine similarity between its CLIP image embedding and the CLIP text embedding of its ground-truth label. We map this similarity to $[0,1]$ using $w_i = \frac{1 + s_i^{\text{CLIP}}}{2}$. This transformation is linear and strictly monotonic. It preserves the ranking of samples for hard selection and avoids negative weights during soft reweighting.

\paragraph{Domain Adaptation Methods.}
These methods assess how closely each training sample matches the target (validation) distribution. 

\emph{NormSim.}
We compute CLIP image embeddings for all training samples and validation samples. For each training sample $x_i$, we define its weight as the $\ell_2$ norm of its similarity vector to all validation samples:
\[
w_i = \mathrm{NormSim}(x_i)
=
\left\|
\left[
\left\langle f(x_i), f(x_1^{\mathrm{val}}) \right\rangle,
\ldots,
\left\langle f(x_i), f(x_{|\mathcal{D}_{\mathrm{val}}|}^{\mathrm{val}}) \right\rangle
\right]
\right\|_2,
\]
where $|\mathcal{D}_{\mathrm{val}}|$ denotes the number of validation samples. This score measures how strongly $x_i$ aligns with the target validation distribution in CLIP space.

\emph{Importance Weighting.}
We follow \citet{choi2021featurized}, using CLIP embeddings in place of the learned features used in their original implementation. Specifically, we train a binary classifier
$c:\mathcal{X}\rightarrow[0,1]$ to distinguish training samples from validation samples, where training samples are labeled as $y=0$ and validation samples as $y=1$. For each training sample $x_i$, we estimate its importance weight by the density ratio between the validation and training distributions:
\[
w_i
=
r(x_i)
=
\frac{p(x_i \mid y_i=1)}{p(x_i \mid y_i=0)}
=
\frac{c^*(x_i)}{1-c^*(x_i)} ,
\]
where $c^*(x_i)$ denotes the probability assigned by the optimal classifier that $x_i$ comes from the validation distribution.

\appsection{Additional Experimental Details} 
\label{app:experiment_details}

The classifier is a pretrained ResNet-50 optimized using SGD with momentum and a learning rate of \(1\times10^{-3}\). The selection network is a three-layer MLP with a hidden dimension of 100 and a sigmoid output, optimized using AdamW. During training, the two networks are updated in an alternating manner: each optimization step of the selection network is followed by one optimization step of the classifier. Unless otherwise specified, we use a batch size of \(N=1024\) and set \(K=3\) for the \(K\) nearest neighbors used in input feature computation. We generate data following the original setup of \citet{dunlap2023diversify}; the corresponding hyperparameters are provided in Table~\ref{tab:generator_hyperparams}. We train the classifier following the original recipe of \citet{yuan2024not}.

The generated data for WaterBirds are publicly available in \citet{dunlap2023diversify}, and we directly use the released dataset. We generate the remaining three datasets, CelebA, Texture, and PACS, ourselves, as they are not publicly available. For prompt design on these datasets, we follow \citet{yuan2024not} and use dataset-specific textual prompts for image-conditioned editing with Stable Diffusion v1.5\footnote{\url{https://huggingface.co/stable-diffusion-v1-5/stable-diffusion-v1-5}}:

\emph{PACS.}
Prompts follow the template ``A [DOMAIN] of [CLASS LABEL]'', where \emph{DOMAIN} corresponds to the target PACS domain (Art Painting, Cartoon, Photo, or Sketch).

\emph{CelebA.}
Prompts are constructed to preserve the target label (hair color) while flipping the spurious attribute (gender). Specifically, for images labeled as \emph{blonde}, we use the prompt ``blonde male'', and for images labeled as \emph{non-blonde}, we use the prompt ``non-blonde female''.

\emph{Texture.}
Prompts follow the template ``[STYLE] [CLASS LABEL]''. We use the following set of style templates:
\begin{quote}
pointillism, rubin statue, rusty statue, ceramic, vaporwave, stained glass,
wood statue, metal statue, bronze statue, iron statue, marble statue,
stone statue, mosaic, furry, corel draw, simple sketch, stroke drawing,
black ink painting, silhouette painting, black pen sketch, quickdraw sketch,
grainy, surreal art, oil painting, fresco, naturalistic painting,
stylised painting, watercolor painting, impressionist painting,
cubist painting, expressionist painting, artistic painting
\end{quote}

\begin{table}[t]
\centering
\begin{tabular}{lc}
\toprule
\textbf{Parameter} & \textbf{Value} \\
\midrule
Inference steps & 30 \\
Image strength & 0.75 \\
Guidance scale & 2.0 \\
Sampler & UniPC \\
\bottomrule
\end{tabular}
\caption{Generator hyperparameters used for image-conditioned editing with Stable Diffusion.}
\label{tab:generator_hyperparams}
\end{table}


\appsection{Details of the Input Features} 
\label{app:input_feature_details}

This section provides the details of the input features used by the selection network. For a training sample $(x_i, y_i)$, we use two types of representations. Let $h_i^{\mathrm{cls}}$ denote the online embedding extracted from the penultimate layer of the current classifier, and let $h_i^{\mathrm{gen}}$ denote the offline embedding extracted from the image generation model.

\paragraph{Local Features.} Let \(\mathcal{B}_{\mathrm{train}}\) and \(\mathcal{B}_{\mathrm{val}}\) denote the current training batch and validation batch, respectively. Using online encodings \(h^{\mathrm{cls}}\), we compute cosine similarities between \(x_i\) and its \(K\) nearest neighbors within the current batches:
\[
\mathcal{S}_{\mathrm{train\text{-}batch}}(x_i)
=
\left\{
\cos\!\left(h_i^{\mathrm{cls}}, h_j^{\mathrm{cls}}\right)
\;\middle|\;
j \in \mathcal{N}_K\!\left(x_i; \mathcal{B}_{\mathrm{train}}\right)
\right\},
\]
\[
\mathcal{S}_{\mathrm{val\text{-}batch}}(x_i)
=
\left\{
\cos\!\left(h_i^{\mathrm{cls}}, h_j^{\mathrm{cls}}\right)
\;\middle|\;
j \in \mathcal{N}_K\!\left(x_i; \mathcal{B}_{\mathrm{val}}\right)
\right\},
\]
where \(\mathcal{N}_K(x_i;\mathcal{B})\) denotes the indices of the \(K\) nearest neighbors of \(x_i\) within set \(\mathcal{B}\) under cosine distance.

Let \(\mathcal{D}_{\mathrm{train}}\) and \(\mathcal{D}_{\mathrm{val}}\) denote the whole training dataset and validation dataset, respectively. We use offline encodings \(h^{\mathrm{gen}}\) to compute dataset-level neighborhood features. We compute:
\[
\mathcal{S}_{\mathrm{train}}(x_i)
=
\left\{
\cos\!\left(h_i^{\mathrm{gen}}, h_j^{\mathrm{gen}}\right)
\;\middle|\;
j \in \mathcal{N}_K\!\left(x_i; \mathcal{D}_{\mathrm{train}}\right)
\right\},
\]
\[
\mathcal{S}_{\mathrm{val}}(x_i)
=
\left\{
\cos\!\left(h_i^{\mathrm{gen}}, h_j^{\mathrm{gen}}\right)
\;\middle|\;
j \in \mathcal{N}_K\!\left(x_i; \mathcal{D}_{\mathrm{val}}\right)
\right\},
\]
where \(\mathcal{N}_K(x_i;\mathcal{D})\) denotes the indices of the \(K\) nearest neighbors of \(x_i\) within the whole dataset $\mathcal{D}$ under cosine distance.

We additionally compute the label agreement score \cite{zhang2024lemon}. Specifically, we first find the $K$ nearest neighbors of $x_i$ in the training set $\mathcal{D}_{\mathrm{train}}$ using the offline embedding $h^{\mathrm{gen}}$. The label agreement is then defined as the fraction of these neighbors that share the same label as $x_i$: 
\[
s_\mathrm{LA}(x_i) = \frac{1}{K} \sum_{j \in \mathcal{N}_K\!\left(x_i; \mathcal{D}_{\mathrm{train}}\right)} \mathbbm{1}[y_j = y_i].
\]

\paragraph{Global Features.} Let \(\mathcal{D}^c_{\mathrm{train}}\) and \(\mathcal{D}^c_{\mathrm{val}}\) denote the training set and validation set of class $c$, respectively. For each class \(c\), we compute class centroids separately for the training and validation sets with offline embeddings:
\[
\mu^c_{\text{train}} = \frac{1}{|\mathcal{D}^c_{\text{train}}|} \sum_{x_j \in \mathcal{D}^c_{\text{train}}} h_j^{gen}, \qquad
\mu^c_{\text{val}} = \frac{1}{|\mathcal{D}^c_{\text{val}}|} \sum_{x_j \in \mathcal{D}^c_{\text{val}}} h_j^{gen} .
\]
For each sample \(x_i\) with label \(y_i\), we compute its Euclidean distance and cosine similarity to both centroids:
\[
s_{\ell_2}^{\text{train}}(x_i) = \| h_i^{gen} - \mu^{y_i}_{\text{train}} \|_2, \quad
s_{\ell_2}^{\text{val}}(x_i) = \| h_i^{gen} - \mu^{y_i}_{\text{val}} \|_2,
\]
\[
s_{cos}^{\text{train}}(x_i) = \cos(h_i^{gen}, \mu^{y_i}_{\text{train}}), \quad
s_{cos}^{\text{val}}(x_i) = \cos(h_i^{gen}, \mu^{y_i}_{\text{val}}).
\]

In addition, we compute an online validation-batch centroid to capture the current target distribution during training. Let \(\mathcal{B}_{\mathrm{val}}\) denote the validation batch samples. Using online encodings \(h^{\mathrm{cls}}\), we compute
\[
\mu_{\mathrm{val\text{-}batch}}
=
\frac{1}{|\mathcal{B}_{\mathrm{val}}|}
\sum_{x_j \in \mathcal{B}_{\mathrm{val}}}
h_j^{\mathrm{cls}} .
\]
For each training sample \(x_i\), we further compute its cosine similarity to the corresponding online validation-batch centroid:
\[
s_{\cos}^{\mathrm{val\text{-}batch}}(x_i) =
\cos\!\left(
h_i^{\mathrm{cls}}, \mu_{\mathrm{val\text{-}batch}}
\right).
\]

\paragraph{Optimization Dynamics Features.} We train an auxiliary classifier without data selection and extract optimization-dynamics features, including \emph{forgetting events} \cite{toneva2019forgetting}, as well as the \emph{gradient norm} and \emph{error vector norm} \cite{paul2021el2n}, following the exactly original implementations in the respective papers.

\paragraph{Class-Indicator Features.} We use a 4-dimensional learnable embedding for the class label \(y_i\) and a separate 1-dimensional learnable embedding to indicate the data source (real vs.\ synthetic).

\appsection{Algorithm Details} 
\label{app:pseudo_algorithm}

In this section, we describe the implementation details of our method summarized in Algorithm~\ref{alg:mts}. Following \citet{liu2018darts}, we reduce the computational cost of hypergradient computation by adopting a finite-difference approximation\footnote{\url{https://en.wikipedia.org/wiki/Finite_difference}}.  When training with large batch sizes across multiple GPUs, we compute the normalization term \(S=\sum_i w_i\) using \texttt{torch.distributed.nn.functional.all\_reduce}\footnote{\url{https://github.com/pytorch/pytorch/blob/main/torch/distributed/nn/functional.py\#L205}}. This operation has an autograd kernel registered and is therefore fully differentiable, allowing gradients to be correctly propagated across GPUs during backpropagation.

\begin{algorithm}[t]
\caption{One-step Meta-learning for Training-data Selection (MTS)}
\label{alg:mts}
\begin{algorithmic}[1]
\REQUIRE Training set $\mathcal{D}_{\text{train}}$, validation set $\mathcal{D}_{\text{val}}$; learning rates $\eta_\theta,\eta_\phi$; finite-difference step $\epsilon>0$; total iterations $T$
\STATE Initialize classifier parameters $\theta_0$ and selection network parameters $\phi_0$
\FOR{iteration $t = 0,1,\ldots,T-1$}
    \STATE Sample a training batch $\mathcal{B}_{\text{train}}=\{(x_i,y_i)\}_{i=1}^{N}\subset\mathcal{D}_{\text{train}}$ and a validation batch $\mathcal{B}_{\text{val}}=\{(x_i^{\text{v}},y_i^{\text{v}})\}_{i=1}^{N}\subset\mathcal{D}_{\text{val}}$
    \STATE Compute data weights $w_i = s_{\phi_t}(x_i)$ and normalization term $S=\sum_{i=1}^{N} w_i$
    \STATE Perform one-step look-ahead classifier update:
    \[
        \theta_t' \leftarrow \theta_t - \eta_\theta \nabla_{\theta}\mathcal{L}_{\text{train}}(\theta_t,\phi_t),
        \quad \text{where} \quad 
        \mathcal{L}_{\text{train}}(\theta_t,\phi_t)=\frac{1}{S}\sum_{i=1}^{N} w_i\,\ell(x_i,y_i,\theta_t)
    \]
    \STATE Compute validation gradient at the look-ahead point:
    \[
        \nabla_{\theta}\mathcal{L}_{\text{val}}(\theta_t') = \frac{1}{N}\sum_{i=1}^{N}\nabla_{\theta}\ell(x_i^{\text{v}},y_i^{\text{v}},\theta_t')
    \]
    \STATE Perturb classifier parameters:
    \[
        \theta^{+} \leftarrow \theta_t + \epsilon\,\nabla_{\theta}\mathcal{L}_{\text{val}}(\theta_t'),
        \qquad
        \theta^{-} \leftarrow \theta_t - \epsilon\,\nabla_{\theta}\mathcal{L}_{\text{val}}(\theta_t')
    \]
    \STATE Compute training losses $\mathcal{L}_{\text{train}}(\theta^{+},\phi_t)$ and $\mathcal{L}_{\text{train}}(\theta^{-},\phi_t)$ on $\mathcal{B}_{\text{train}}$
    \STATE Estimate hypergradient via finite differences:
    \[
        \nabla_{\phi_t} \leftarrow
        -\eta_\theta\frac{
        \nabla_{\phi}\mathcal{L}_{\text{train}}(\theta^{+},\phi_t)
        -
        \nabla_{\phi}\mathcal{L}_{\text{train}}(\theta^{-},\phi_t)
        }{2\epsilon}
    \]
    \STATE Update selection network:
    \[
        \phi_{t+1} \leftarrow \phi_t - \eta_\phi\,\nabla_{\phi_t}
    \]
    \STATE Update classifier:
    \[
        \theta_{t+1} \leftarrow \theta_t - \eta_\theta \nabla_{\theta}\mathcal{L}_{\text{train}}(\theta_t,\phi_{t+1})
    \]
\ENDFOR
\STATE \textbf{return} classifier $f_{\theta_T}$
\end{algorithmic}
\end{algorithm}

\apxcontentsclose

\end{document}